\newcommand*{\horzbar}{\rule[.5ex]{2.5ex}{0.5pt}}
\newtheorem{theorem}{Theorem}
\newtheorem{corollary}{Corollary}
\newtheorem{lemma}{Lemma}
\newtheorem{proposition}{Proposition}
\newenvironment{customthm}[1]
  {\innercustomthm}
  {\endinnercustomthm}
\newenvironment{customlemma}[1]
  {\innercustomlemma}
  {\endinnercustomthm}
\newenvironment{customproposition}[1]
  {\innercustomproposition}
  {\endinnercustomthm}
\newenvironment{customcor}[1]
  {\innercustomcor}
  {\endinnercustomthm}
\newcommand{\boundingProposition}[3]{%
\begin{customproposition}{#1}[Name]\ifthenelse{\equal{#2}{}}{}{\label{#2}}\ifthenelse{\equal{#3}{}}{}{\setcounter{proposition}{#3}}
Given target node \(v\) in graph \(G\), and adversarial budget \(\rho\). Let \(E\) denote the event that the prediction \(f_v(\phi(G))\) receives at least one message from perturbed nodes. Then the change in label probability \(|p_{v,y}(G) - p_{v,y}(G')|\) is bounded by the probability \(\Delta = p(E)\) for all classes \(y\in \{1, \ldots, C\}\) and graphs \(G'\) with \(G' \in B_{\rho}(G)\): \(|p_{v,y}(G) - p_{v,y}(G')| \leq \Delta\).
\end{customproposition}%
}
\newcommand{\nplemma}[3]{%
\begin{customlemma}{#1}[Neyman-Pearson lower bound]\ifthenelse{\equal{#2}{}}{}{\label{#2}}\ifthenelse{\equal{#3}{}}{}{\setcounter{lemma}{#3}}
        Given \(\bm{X},\tilde{\bm{X}}\in\mathcal{X}^{N\times D}\), distributions \(\mu_{\bm{X}}, \mu_{\tilde{\bm{X}}}\), class label \(y\in\mathcal{Y}\), probability \(p_{\bm{X},y}\) and the set
        \(
            S_\kappa \triangleq \left\{ \bm{W} \in \mathcal{X}^{N\times D} : \mu_{\tilde{\bm{X}}}(\bm{W}) \leq \kappa \cdot \mu_{\bm{X}}(\bm{W})\right\}
        \), we~have
        \[
            \underline{p_{\bm{\tilde{X}},y}} = \Pr_{\bm{W}\sim \mu_{\tilde{\bm{X}}}}[\bm{W}\in S_\kappa] \quad\text{with }\kappa\in\mathbb{R}_+\text{ s.t.}\quad \Pr_{\bm{W}\sim \mu_{\bm{X}}}[\bm{W}\in S_\kappa] = p_{\bm{X},y}
        \]
        {\textit{Proof.}\normalfont\ See \citep{neyman1933ix} and \citep{cohen2019certified}.}%
\end{customlemma}%
}
\newcommand{\regionsProposition}[3]{%
\begin{customproposition}{#1}\ifthenelse{\equal{#2}{}}{}{\label{#2}}\ifthenelse{\equal{#3}{}}{}{\setcounter{proposition}{#3}}
    The regions \(\mathcal{R}_0,\mathcal{R}_1,\mathcal{R}_2\) and \(\mathcal{R}_3\) are disjoint and partition the space \(\mathcal{Z}^{N\times (D+1)}\):

    \begin{minipage}{0.5\textwidth}
        \begin{itemize}[noitemsep,nolistsep,parsep=0pt,leftmargin=30pt]
            \item  \(\mathcal{R}_0 \triangleq \mathcal{Z}^{N\times (D+1)}\setminus \mathcal{A}\)
            \item \(\mathcal{R}_2 \triangleq 
                \left\{
                    \bm{Z}\in\mathcal{A}
                    \mid
                    \forall i\in \mathcal{C}: \boldsymbol{\tau}_i=1
                \right\}\)
        \end{itemize}
    \end{minipage}\hfill%
    \begin{minipage}{0.5\textwidth}
        \begin{itemize}[noitemsep,nolistsep,parsep=0pt,leftmargin=*]
            \item 
            \(\mathcal{R}_1 \triangleq 
                \left\{
                    \bm{Z}\in\mathcal{A}
                    \mid
                    \exists i\in \mathcal{C} : \boldsymbol{\tau}_i=0, \bm{z}_i = \bm{x}_i|0
                \right\}\)
            \item 
                \(\mathcal{R}_3 \triangleq 
                \left\{
                    \bm{Z}\in\mathcal{A}
                    \mid
                    \exists i\in \mathcal{C} : \boldsymbol{\tau}_i=0, \bm{z}_i = \tilde{\bm{x}}_i|0
                \right\}\)
        \end{itemize}
    \end{minipage}%
\end{customproposition}%
}
\newcommand{\regprobProposition}[3]{%
\begin{customproposition}{#1}\ifthenelse{\equal{#2}{}}{}{\label{#2}}\ifthenelse{\equal{#3}{}}{}{\setcounter{proposition}{#3}}
    Given \(\Psi_{\bm{X}}\) and \(\Psi_{\tilde{\bm{X}}}\), the regions of \autoref{prop1} and \(\Delta\triangleq 1-p^{|\mathcal{C}|}\) we have
    \begin{equation*}
        \begin{aligned}
            \Pr_{\bm{Z}\sim\Psi_{\bm{X}}} [\bm{Z} \in\mathcal{R}_1] & = \Delta \quad&
            \Pr_{\bm{Z}\sim\Psi_{\bm{X}}} [\bm{Z} \in\mathcal{R}_2] & = 1-\Delta \quad&
            \Pr_{\bm{Z}\sim\Psi_{\bm{X}}} [\bm{Z} \in\mathcal{R}_3] & = 0 \\[-2pt]
            \Pr_{\bm{Z}\sim\Psi_{\tilde{\bm{X}}}} [\bm{Z} \in\mathcal{R}_1] & = 0 \quad&
            \Pr_{\bm{Z}\sim\Psi_{\tilde{\bm{X}}}} [\bm{Z} \in\mathcal{R}_2] & = 1-\Delta \quad&
            \Pr_{\bm{Z}\sim\Psi_{\tilde{\bm{X}}}} [\bm{Z} \in\mathcal{R}_3] & = \Delta \\[-5pt]
        \end{aligned}
    \end{equation*}%
\end{customproposition}%
}
\newcommand{\firstNpt}[3]{%
\begin{customthm}{#1}[Neyman-Pearson lower bound for hierarchical smoothing]\ifthenelse{\equal{#2}{}}{}{\label{#2}}\ifthenelse{\equal{#3}{}}{}{\setcounter{theorem}{#3}}
    Given fixed \(\bm{X},\tilde{\bm{X}}\in\mathcal{X}^{N\times D}\), let \(\mu_{\bm{X}}\) denote a smoothing distribution that operates independently on matrix elements, and \(\Psi_{\bm{X}}\) the induced hierarchical distribution over \(\mathcal{Z}^{N\times (D+1)}\). Given label \(y\in\mathcal{Y}\) and the probability \(p_{\bm{X},y}\) to classify \(\bm{X}\) as \(y\) under \(\Psi\). Define \( \hat{S}_\kappa \triangleq \left\{ \bm{W} \in \mathcal{X}^{|\mathcal{C}|\times D} : \mu_{\tilde{\bm{X}}_{\mathcal{C}}}(\bm{W}) \leq \kappa \cdot \mu_{\bm{X}_{\mathcal{C}}}(\bm{W})\right\} \). We have:%
\[
    \underline{p_{\bm{\tilde{X}},y}}  = \Pr_{\bm{W}\sim \mu_{\tilde{\bm{X}}_{\mathcal{C}}}}[\bm{W}\in \hat{S}_\kappa] \cdot (1-\Delta) \quad\text{with }\kappa\in\mathbb{R}_+\text{ s.t.}\quad \Pr_{\bm{W}\sim \mu_{\bm{X}_{\mathcal{C}}}}[\bm{W}\in \hat{S}_\kappa] = \frac{p_{\bm{X},y} - \Delta}{1-\Delta}
\]
    where \(\bm{X}_{\mathcal{C}}\) and \(\tilde{\bm{X}}_{\mathcal{C}}\) denote those rows \(\bm{x}_i\) of \(\bm{X}\) and \(\tilde{\bm{x}}_i\) of \(\tilde{\bm{X}}\) with \(i\in\mathcal{C}\), that is \(\bm{x}_i\neq\tilde{\bm{x}}_i\). 
\end{customthm}%
}
\newcommand{\gaussianCor}[3]{%
\begin{customcor}{#1}\ifthenelse{\equal{#2}{}}{}{\label{#2}}\ifthenelse{\equal{#3}{}}{}{\setcounter{proposition}{#3}}
    Given continuous \(\mathcal{X}\triangleq\mathbb{R}\), consider the threat model \(\mathcal{B}_{2,\epsilon}^r(\bm{X})\) for fixed \(\bm{X}\in\mathcal{X}^{N\times D}\). Initialize hierarchical smoothing with the isotropic Gaussian smoothing distribution defined as \(\mu_{\bm{X}}(\bm{W}) \triangleq \prod_{i=1}^{N}\mathcal{N}(\bm{w}_i | \bm{x}_i, \sigma^2\bm{I}) \). Let \(y\in\mathcal{Y}\) denote a label and \(p_{\bm{X},y}\) the probability to classify \(\bm{X}\) as \(y\) under \(\Psi\).
    Define~\(\Delta \triangleq 1-p^r\).
    Then we have:
    \[
        \min_{\tilde{\bm{X}} \in \mathcal{B}_{2,\epsilon}^r(\bm{X})} \underline{p_{\tilde{\bm{X}},y}}
        =
        \Phi\left(\Phi^{-1}\left(\frac{p_{\bm{X},y}-\Delta}{1-\Delta}\right) - \frac{\epsilon}{\sigma}\right)(1-\Delta)
    \]
\end{customcor}%
}
\newcommand{\gaussianCorThree}[3]{%
\begin{customcor}{#1}\ifthenelse{\equal{#2}{}}{}{\label{#2}}\ifthenelse{\equal{#3}{}}{}{\setcounter{corollary}{#3}}
    Given continuous \(\mathcal{X}\triangleq\mathbb{R}\), consider the threat model \(\mathcal{B}_{2,\epsilon}^r(\bm{X})\) for fixed \(\bm{X}\in\mathcal{X}^{N\times D}\).  Initialize the hierarchical smoothing distribution \(\Psi\) using the isotropic Gaussian distribution \(\mu_{\bm{X}}(\bm{W}) \triangleq \prod_{i=1}^{N}\mathcal{N}(\bm{w}_i | \bm{x}_i, \sigma^2\bm{I}) \) that applies noise independently on each matrix element. Let \(y^*\in\mathcal{Y}\) denote the majority class and \(p_{\bm{X},y^*}\) the probability to classify \(\bm{X}\) as \(y^*\) under \(\Psi\). Then the smoothed classifier \(g\) is certifiably robust \(g(\bm{X}) = g(\tilde{\bm{X}})\) for any \(\tilde{\bm{X}}\in\mathcal{B}_{2,\epsilon}^r(\bm{X})\) if
    \[
        \epsilon < \sigma
        \left(
            \Phi^{-1}\left(\frac{p_{\bm{X},y^*}-\Delta}{1-\Delta}\right) - \Phi^{-1}\left(\frac{1}{2(1-\Delta)}\right)
        \right)
    \]
    where \(\Phi^{-1}\) denotes the inverse CDF of the normal distribution and \(\Delta \triangleq 1-p^r\).
\end{customcor}%
}
\newcommand{\discreteNpt}[3]{%
\begin{customthm}{#1}[Hierarchical lower bound for discrete lower-level smoothing distributions]\ifthenelse{\equal{#2}{}}{}{\label{#2}}\ifthenelse{\equal{#3}{}}{}{\setcounter{theorem}{#3}}
    Given discrete \(\mathcal{X}\) and a smoothing distribution \(\mu_{\bm{x}}\) for discrete data, assume that the corresponding discrete certificate partitions the input space \(\mathcal{X}^{rD}\) into disjoint regions \(\mathcal{H}_1, \ldots, \mathcal{H}_I\) of constant likelihood~ratio: \(\mu_{\bm{x}}(\bm{w})/\mu_{\tilde{\bm{x}}}(\bm{w}) = c_i \) for all \(\bm{w} \in \mathcal{H}_i\). Then we can compute a lower bound \(\underline{p_{\bm{\tilde{X}},y}}\leq p_{\tilde{\bm{X}},y} \) by solving the following Linear~Program (LP):
    \[
        \underline{p_{\bm{\tilde{X}},y}} = \min_{\bm{h}}\quad \bm{h}^T\tilde{\bm{\nu}} \cdot (1-\Delta) \quad\text{s.t.}\quad \bm{h}^T\bm{\nu} = \frac{p_{\bm{X},y}-\Delta}{1-\Delta},\quad 0\leq \bm{h}_i \leq 1
    \]
    where \(\tilde{\bm{\nu}}_q \triangleq  \Pr_{\bm{W}\sim\mu_{\tilde{\bm{X}}_{\mathcal{C}}}}[\text{vec}(\bm{W})\in \mathcal{H}_q]\)
    and
    \( \bm{\nu}_q\triangleq \Pr_{\bm{W}\sim \mu_{\bm{X}_{\mathcal{C}}}}[\text{vec}(\bm{W})\in \mathcal{H}_q]\)
    for all \(q\in\{1, \ldots, I\}\).
\end{customthm}%
}
\newcommand{\ablationProp}[3]{%
\begin{customcor}{#1}\ifthenelse{\equal{#2}{}}{}{\label{#2}}\ifthenelse{\equal{#3}{}}{}{\setcounter{proposition}{#3}}
    Initialize the hierarchical smoothing distribution \(\Psi\) with the  
    ablation smoothing distribution \(\mu_{\bm{x}}(\bm{t})=1\) for ablation token \(\bm{t}\notin \mathcal{X}^D\) and \(\mu_{\bm{x}}(\bm{w})=0\) for \(\bm{w}\in\mathcal{X}^D\) otherwise (i.e.\ we always ablate selected rows).
    Define \(\Delta\triangleq 1-p^r\). Then the smoothed classifier \(g\) is certifiably robust \(g(\bm{X}) = g(\tilde{\bm{X}})\) for any \(\tilde{\bm{X}}\in\mathcal{B}_{2,\epsilon}^r(\bm{X})\) if \(p_{\bm{X},y} - \Delta > 0.5\).
\end{customcor}%
}
\DeclareMathOperator*{\argmax}{\arg\!\max}
\title{Hierarchical Randomized Smoothing}
\author{%
Yan Scholten\textsuperscript{\ensuremath{1}}, Jan Schuchardt\textsuperscript{\ensuremath{1}}, \textbf{Aleksandar Bojchevski\textsuperscript{\ensuremath{2}} \& Stephan Günnemann\textsuperscript{\ensuremath{1}}}\\
\texttt{\{y.scholten, j.schuchardt, s.guennemann\}@tum.de}, 
\texttt{a.bojchevski@uni-koeln.de}\\
\textsuperscript{\ensuremath{1}}{Dept. of Computer Science \& Munich Data Science Institute, Technical University of Munich}\\
\textsuperscript{\ensuremath{2}}{University of Cologne, Germany}
}
\newcommand{%
  \tikzsetnextfilename{}%
  \begin{tikzpicture}%
  \node[inner sep=0pt] {\input{}};
  \end{tikzpicture}
}[1]{%
  \tikzsetnextfilename{#1}%
  \begin{tikzpicture}%
  \node[inner sep=0pt] {\input{#1}};
  \end{tikzpicture}
}
\begin{document}

\maketitle

\begin{abstract}
    Real-world data is complex and often consists of objects that can be decomposed into multiple entities (e.g.\ images into pixels, graphs into interconnected nodes).
    Randomized smoothing is a powerful framework for making models provably robust against small changes to their inputs -- by guaranteeing robustness of the majority vote when randomly adding noise before classification. 
    Yet, certifying robustness on such complex data via randomized smoothing is challenging when adversaries do not arbitrarily perturb entire objects (e.g.\ images) but only a subset of their entities (e.g.\ pixels). 
    As a solution, we introduce hierarchical randomized smoothing: We partially smooth objects by adding random noise only on a randomly selected subset of their entities. 
    By adding noise in a more targeted manner than existing methods we obtain stronger robustness guarantees while maintaining high accuracy.
    We initialize hierarchical smoothing using different noising distributions, yielding novel robustness certificates for discrete and continuous domains.
    We experimentally demonstrate the importance of hierarchical smoothing in image and node classification, where it yields superior robustness-accuracy trade-offs.
    Overall, hierarchical smoothing is an important contribution towards models that are both -- certifiably robust to perturbations and~accurate.%
    \footnote{Project page: \url{https://www.cs.cit.tum.de/daml/hierarchical-smoothing}}
\end{abstract}

\section{Introduction}

Machine learning models are vulnerable to small input changes. 
Consequently, their performance is typically better during development than in the real world where noise, incomplete data and adversaries are present. To address this problem, robustness certificates constitute useful tools that provide provable guarantees for the stability of predictions under worst-case perturbations, enabling us to analyze and guarantee robustness of our classifiers in practice.

Randomized smoothing \citep{lecuyer2019certified,cohen2019certified} is a powerful and highly flexible framework for making arbitrary classifiers certifiably robust by averaging out small discontinuities in the underlying model. This is achieved by guaranteeing robustness of the majority vote when randomly adding noise to a model's input before classification. This typically yields a robustness-accuracy trade-off -- more noise increases certifiable robustness but also reduces clean accuracy.

Recent adversarial attacks against image classifiers \citep{su2019one,croce2019sparse,modas2019sparsefool} and graph neural networks \citep{ma2020towards} further motivate the need for more flexible robustness certificates against threat models where adversaries are bounded by both: the number of perturbed entities and perturbation strength. For example in social networks, adversaries typically cannot control entire graphs, but the perturbation of the controlled nodes may be (imperceptibly)~strong.
 
When adversaries perturb only a subset of all entities, existing smoothing-based approaches sacrifice robustness over accuracy (or vice versa):
They either add noise to entire objects (e.g.\ images/graphs) or ablate entire entities (e.g.\ pixels/nodes) even if only a small subset is perturbed and requires~noising.

\begin{figure}
    \centering
    \input{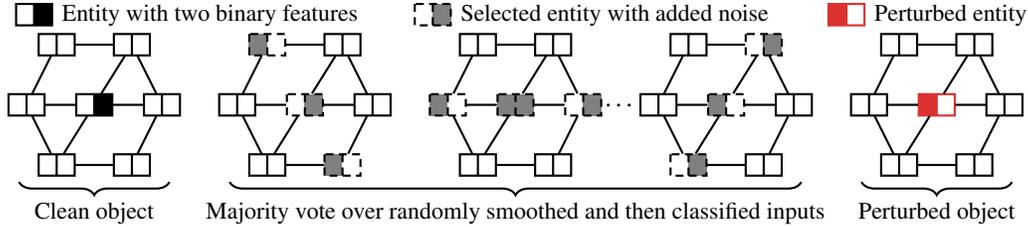}
    \caption{Hierarchical randomized smoothing: We first select a subset of all entities and then add noise to the selected entities only. We achieve stronger robustness guarantees while still maintaining high accuracy -- especially when adversaries can only perturb a subset of all entities. For example in social networks, adversaries typically control only a subset of all nodes in the entire graph.}\label{fig0}
\end{figure}

The absence of appropriate tools for decomposable data prevents us from effectively analyzing the robustness of models such as image classifiers or graph neural networks -- raising the research~question: 
{\textit{How can we provably guarantee robustness under adversaries that perturb only a subset of all~entities?}}
We~propose hierarchical randomized smoothing: By only adding noise on a randomly selected subset of all entities we achieve stronger robustness guarantees while maintaining high accuracy~(\autoref*{fig0}).

Our hierarchical smoothing framework is highly flexible and can integrate the whole suite of smoothing distributions for adding random noise to the input, provided that the corresponding certificates exist. Notably, our certificates apply to a wide range of models, domains and tasks and are at least as efficient as the certificates for the underlying smoothing distribution.

We instantiate hierarchical smoothing with well-established smoothing distributions \citep{cohen2019certified,bojchevski2020efficient,levine2020robustness,scholten2022randomized}, yielding novel robustness certificates for discrete and continuous domains. Experimentally, hierarchical smoothing significantly expands the Pareto-front when optimizing for accuracy and robustness, revealing novel insights into the reliability of image classifiers and graph neural networks (GNNs).

In short, our main contributions are:
\begin{itemize}[noitemsep,nolistsep,topsep=-2pt] %
    \item We propose a novel framework for strong robustness certificates for complex data where objects (e.g.\ images or graphs) can be decomposed into multiple entities (e.g.\ pixels or~nodes).
    \item We instantiate our framework with well-established smoothing distributions, yielding novel robustness certificates for discrete and continuous domains.
    \item  We demonstrate the importance of hierarchical smoothing in image and node classification, where it significantly expands the Pareto-front with respect to robustness and accuracy.
\end{itemize}

\section{Related work}

Despite numerous research efforts, the vulnerability of machine learning models against adversarial examples remains an open research problem \citep{hendrycks2021unsolved}. Various tools emerged to address adversarial robustness: Adversarial attacks and defenses allow to evaluate and improve robustness empirically \citep{szegedy2014intriguing,goodfellow2014explaining,carlini2019evaluating,croce2019sparse}. While attacks provide an upper bound on the adversarial robustness of a model, robustness certificates provide a provable lower bound. The development of strong certificates is challenging and constitutes a research field with many open questions~\citep{li2020sok}.

\textbf{Robustness certification via randomized smoothing.}
Randomized smoothing \citep{lecuyer2019certified,cohen2019certified} represents a framework for making arbitrary classifiers certifiably robust while scaling to deeper architectures. There are two main approaches discussed in the literature: Additive noise certificates add random noise to continuous \citep{cohen2019certified} or discrete \citep{lee2019tight} input data. Ablation certificates ``mask'' out parts of the input to hide the potentially perturbed information \citep{levine2020robustness}. Since the input is masked, ablation certificates can be applied to both continuous and discrete data. We are the first to combine and generalize additive noise and ablation certificates into a novel hierarchical framework: We first select entities to ablate, but instead of ablating we add noise to them. 
By only partially adding noise to the input our framework is more flexible and allows  us to better control the robustness-accuracy trade-off under our threat model.

Further research directions and orthogonal to ours include derandomized (deterministic) smoothing \citep{levine2020derandomized,levine2021improved,horvath2022derandomized,scholten2022randomized}, input-dependent smoothing \citep{sukenik2022intriguing}, and gray-box certificates \citep{mohapatra2020higher,levine2020tight,schuchardt2022invariance,scholten2022randomized,schuchardt2023provable}.

\textbf{Robustness-accuracy trade-off.}
The robustness-accuracy trade-off has been extensively discussed in the literature \citep{tsipras2019robustness,raghunathan2019adversarial,zhang2019theoretically,yang2020closer,xie2020adversarial}, including works for graph data \citep{gosch2023revisiting}. In randomized smoothing the parameters of the smoothing distribution allow  to trade robustness against accuracy \citep{cohen2019certified,mohapatra2020higher,wu2021completing} where more noise means higher robustness but lower accuracy.  Our framework combines two smoothing distributions while introducing parameters for both, which allows us to better control the robustness-accuracy trade-off under our threat model since we can control both -- the number of entities to smooth and the magnitude of the noise.
Further advances in the trade-off in randomized smoothing include
the work of \citet{levine2020derandomized} against patch-attacks, \citet{schuchardt2021collective,schuchardt2023localized} for collective robustness certification, and \citet{scholten2022randomized} for graph neural networks. Other techniques include consistency regularization \citep{jeong2020consistency}, denoised smoothing \citep{salman2020denoised} and ensemble techniques \citep{muller2021certify,horvath2022boosting,horvath2022robust}.  
Notably, our method is orthogonal to such enhancements since we derive certificates for a novel, flexible smoothing distribution to better control such trade-offs.

\textbf{Robustness certificates for GNNs.}
Research on GNN certification is still at an early stage \citep{GNNBook-ch8-gunnemann}, most works focus on heuristic defenses rather than provable robustness guarantees. But heuristics cannot guarantee robustness and may be broken by stronger attacks in the future \citep{mujkanovic2022_are_defenses_for_gnns_robust}.
Most certificates are limited to specific architectures
\citep{zugner2020certifiable,jin2020certified,bojchevski2019certifiable,zuegner2019certifiable}.  %
Randomized smoothing is a more flexible framework for certifying arbitrary models while only accessing their forward pass.
We compare our method to additive noise certificates for sparse data by \cite{bojchevski2020efficient}, and to ablation certificates by \cite{scholten2022randomized} that ablate all node features to derive certificates against strong adversaries that control entire nodes. While we implement certificates against node-attribute perturbations,
our framework can in principle integrate smoothing-based certificates against graph-structure perturbations as well.

\section{Preliminaries and background}\label{sec:pre}

Our research is focused on data where objects can be decomposed into entities. W.l.o.g.\ we represent such objects as matrices where rows represent entities. We assume a classification task on a discrete or continuous \((N\times D)\)-dimensional space \(\mathcal{X}^{N\times D}\) (e.g. \(\mathcal{X}=\{0,1\}\) or \(\mathcal{X}=\mathbb{R}\)).
We propose certificates for classifiers \(f:\mathcal{X}^{N\times D} \rightarrow \mathcal{Y}\) that assign matrix 
\(\bm{X}\in\mathcal{X}^{N\times D}\) a label \(y\in\mathcal{Y}=\{1, \ldots, C\}\).%
\footnote{Multi-output classifiers can be treated as multiple independent single-output classifiers.}
We model \textit{evasion} settings, i.e.\ adversaries perturb the input at inference. We seek to verify the robustness \(f(\bm{X})=f(\tilde{\bm{X}})\) for perturbed \(\tilde{\bm{X}}\) from a threat model of admissible~perturbations:
\[
    \mathcal{B}_{p, \epsilon}^r(\mathbf{X})
    \triangleq
    \left\{
        \tilde{\mathbf{X}}\in\mathcal{X}^{N\times D}
        \mid
        \sum_{i=1}^N \mathds{1}[{\mathbf{x}_i\neq \tilde{\mathbf{x}}_i}] \leq r,\,
        ||\text{vec}(\bm{X}-\tilde{\bm{X}})||_p \leq \epsilon
    \right\}
\]
where \(\mathds{1}\) denotes an indicator function, \(||\cdot||_p\) a \(\ell_p\)-vector norm and \(\text{vec}(\cdot)\) the matrix vectorization. For example, \(||\text{vec}(\cdot)||_2\) is the Frobenius-norm. Intuitively, we model adversaries that control up to \(r\) rows in the matrix, and whose perturbations are bounded by \(\epsilon\) under a fixed \(\ell_p\)-norm.

\subsection{Randomized smoothing framework}

Our certificates build on the randomized smoothing framework \citep{lecuyer2019certified,li2019certified,cohen2019certified}. Instead of certifying a \textit{base classifier} \(f\), one constructs a \textit{smoothed classifier} \(g\) that corresponds to the majority vote prediction of \(f\) under a randomized smoothing of the input. 
We define the smoothed classifier \(g:\mathcal{X}^{N\times D} \rightarrow \mathcal{Y}\) for a smoothing distribution~\(\mu_{\bm{X}}\)~as:
\[ 
   g(\bm{X}) \triangleq \argmax_{y\in\mathcal{Y}}\, p_{\bm{X},y} \quad\quad p_{\bm{X},y}\triangleq\Pr_{\bm{W}\sim\mu_{\bm{X}}}\left[f(\bm{W})=y\right] 
\]
where \(p_{\bm{X},y}\) is the probability of predicting class \(y\) for input \(\bm{X}\) under the smoothing distribution \(\mu_{\bm{X}}\). Let \(y^* \triangleq g(\bm{X})\) further denote the majority vote of \(g\), i.e.\ the most likely prediction for the clean~\(\bm{X}\). 

To derive robustness certificates for the smoothed classifier \(g\) we have to show that \(y^*\)\(=\)\(g(\bm{X})\)\(=\)\(g(\tilde{\bm{X}})\) for clean matrix \(\bm{X}\) and any perturbed matrix \(\tilde{\bm{X}}\in\mathcal{B}_{p, \epsilon}^r(\mathbf{X})\).
This is the case if the probability to classify any perturbed matrix \(\tilde{\bm{X}}\) as \(y^*\) is still larger than for all other classes, that is if \(p_{\tilde{\bm{X}},y^*} > 0.5\). 
However, randomized smoothing is designed to be flexible and does not assume anything about the underlying base classifier, i.e.\ all we know about \(f\) is the probability~\(p_{\bm{X}, y}\) for the clean matrix \(\bm{X}\).  We therefore derive a lower bound \(\underline{p_{\tilde{\bm{X}},y}} \leq p_{\tilde{\bm{X}},y}\) under a worst-case assumption: We consider the least robust classifier among all %
classifiers with the same probability \(p_{\bm{X}, y}\) of classifying \(\bm{X}\) as~\(y\):%
\begin{equation}\label{eq:minimize}
    \underline{p_{\tilde{\bm{X}},y}}\quad\triangleq\quad\min_{h\in\mathbb{H}} \Pr_{\bm{W}\sim \mu_{\tilde{\bm{X}}}} [h(\bm{W})=y] \quad s.t.\quad \Pr_{\bm{W}\sim \mu_{\bm{X}}}[h(\bm{W}) = y] = p_{\bm{X}, y}
\end{equation}
where \(\mathbb{H} \triangleq \{ h:\mathcal{X}^{N\times D}\rightarrow \mathcal{Y} \}\) is the set of possible classifiers with \(f\in\mathbb{H}\). 
To ensure that the smoothed classifier is robust to the entire treat model we have to guarantee \(\min_{\tilde{\bm{X}} \in \mathcal{B}(\bm{X})} \underline{p_{\tilde{\bm{X}},y^*}} > 0.5\).
One can also derive tighter multi-class certificates, for which we refer to \autoref{appC} for conciseness. 

\textbf{Probabilistic certificates.} Since computing \(p_{\bm{X},y}\) exactly is challenging in practice, one estimates it using Monte-Carlo samples from \(\mu_{\bm{X}}\) and bounds \(p_{\bm{X},y}\) with confidence intervals \citep{cohen2019certified}. The final certificates are probabilistic and hold with an (arbitrarily high) confidence level of~\(1-\alpha\). 

\subsection{Neyman-Pearson Lemma: The foundation of randomized smoothing certificates}
So far we introduced the idea of robustness certification using randomized smoothing, but to compute certificates one still has to solve \autoref{eq:minimize}. To this end, \citet{cohen2019certified} show that the minimum of \autoref{eq:minimize} can be derived using the Neyman-Pearson (``NP'') Lemma \citep{neyman1933ix}. Intuitively, the least robust model classifies those \(\bm{W}\) as \(y\) for which the probability of sampling \(\bm{W}\) around the clean \(\bm{X}\) is high, but low around the perturbed \(\tilde{\bm{X}}\):
\vspace{-2pt}
\nplemma{1}{lemma:nplemma}{1}

\subsection{Existing smoothing distributions for discrete and continuous data}\label{subsection:background}
Our hierarchical smoothing framework builds upon certificates for smoothing distributions that apply noise independently per dimension  \citep{lecuyer2019certified,cohen2019certified,lee2019tight}. 
Despite being proposed for vector data \(\bm{x}\in\mathcal{X}^D\), these certificates have been generalized to matrix data by applying noise independently on all matrix entries \citep{cohen2019certified,bojchevski2020efficient,chu2022tpc, schuchardt2022invariance}. Thus, given matrix data we define the density \(\mu_{\bm{X}}(\bm{W})\triangleq\prod_{i=1}^N\prod_{j=1}^D\mu_{\bm{X}_{ij}}(\bm{W}_{ij}) \) given the density of an existing smoothing distribution \(\mu\).\footnote{We write \(\mu(\bm{Z})\) when referring to the density of the distribution \(\mu\).}%

\textbf{Gaussian randomized smoothing.}
For continuous data \(\mathcal{X}=\mathbb{R}\),
\citet{cohen2019certified} derive the first tight certificate for the \(\ell_2\)-threat model. Given an isotropic Gaussian smoothing distribution
\(\mu_{\bm{x}}(\bm{w}) = \mathcal{N}(\bm{w} | \bm{x}, \sigma^2\bm{I}) \), \cite{cohen2019certified} show that the optimal value of \autoref{eq:minimize} is given by \(\underline{p_{\tilde{\bm{x}},y}}\!=\!\Phi\left(\Phi^{-1}(p_{\bm{x},y})-\frac{||\bm{\delta}||_2}{\sigma}\right)\), where \(\Phi\) denotes the CDF of the normal distribution and \(\bm{\delta} \triangleq \bm{x}-\tilde{\bm{x}}\).

\textbf{Randomized smoothing for discrete data.} \cite{lee2019tight} derive tight certificates for the \(\ell_0\)-threat model. Certificates for discrete domains are in general combinatorial problems and computationally challenging. To overcome this, \cite{lee2019tight}  use the NP-Lemma for discrete random variables \citep{tocher1950extension} and show that the minimum in \autoref{eq:minimize} can be computed with a linear program~(LP):

\begin{lemma}[Discrete Neyman-Pearson lower bound]\label{lemma:discreteNP}
    Partition the input space \(\mathcal{X}^D\) into disjoint~regions \(\mathcal{R}_1, \ldots, \mathcal{R}_I\) of constant likelihood ratio \(\mu_{\bm{x}}(\bm{w})/\mu_{\tilde{\bm{x}}}(\bm{w}) = c_i \!\in\! \mathbb{R}\cup\{\infty\}\) for all \(\bm{w}\in\mathcal{R}_i\). Define \(\tilde{\bm{r}}_i \triangleq \Pr_{\bm{w}\sim\mu_{\tilde{\bm{x}}}}(\bm{w}\in \mathcal{R}_i)\) and \(\bm{r}_i\triangleq \Pr_{\bm{w}\sim\mu_{\bm{x}}}(\bm{w}\in \mathcal{R}_i)\).  
    Then \autoref{eq:minimize} is equivalent to the linear program \(\underline{p_{\tilde{\bm{x}},y}}=\min_{\bm{h}\in[0,1]^{I}} \bm{h}^T\tilde{\bm{r}} \text{ s.t. } \bm{h}^T\bm{r}=p_{\bm{x}, y}\), where \(\bm{h}\) represents the worst-case~classifier.
\end{lemma}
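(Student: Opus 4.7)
The plan is to parametrize the feasible classifiers by region-level acceptance fractions and exploit the constant-likelihood-ratio structure to collapse \autoref{eq:minimize} into the stated LP. For any candidate classifier $h:\mathcal{X}^D\to\mathcal{Y}$ I would define $h_i \triangleq \Pr_{\bm{w}\sim\mu_{\bm{x}}}[h(\bm{w})=y \mid \bm{w}\in\mathcal{R}_i] \in [0,1]$. Because $\mu_{\bm{x}}$ and $\mu_{\tilde{\bm{x}}}$ are proportional on each $\mathcal{R}_i$ (the likelihood ratio is the constant $c_i$ there), the same conditional probability $h_i$ arises under $\mu_{\tilde{\bm{x}}}$. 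By the law of total probability over the partition,
\[
\Pr_{\mu_{\bm{x}}}[h(\bm{w})=y] = \sum_{i=1}^I h_i \bm{r}_i,\qquad \Pr_{\mu_{\tilde{\bm{x}}}}[h(\bm{w})=y] = \sum_{i=1}^I h_i \tilde{\bm{r}}_i,
\]
so the constraint of \autoref{eq:minimize} becomes $\bm{h}^T\bm{r} = p_{\bm{x},y}$ and the objective becomes $\bm{h}^T\tilde{\bm{r}}$. This already gives one direction: every feasible classifier maps to a feasible LP point with identical objective value, so the LP optimum lower bounds \autoref{eq:minimize}.

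For the reverse direction I would argue that every LP-feasible $\bm{h}\in[0,1]^I$ is realized by an admissible decision rule. Since the LP has one equality constraint and box constraints, its basic feasible solutions are integer in all but at most one coordinate. On regions where $h_i\in\{0,1\}$, the worst-case classifier simply outputs $y$ on all of $\mathcal{R}_i$ or on none of it, which is clearly admissible. For the at most one region with a fractional $h_i$, I would invoke Tocher's extension of the Neyman--Pearson lemma \citep{tocher1950extension} to randomize the classifier's output at a single atom so that the conditional acceptance mass in that region is exactly $h_i \bm{r}_i$. Combined with the previous direction, this shows the two optimization problems attain the same value.

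The main delicate point is not computational but conventional: following \citep{lee2019tight}, one must read \autoref{eq:minimize} as implicitly allowing classifiers that may randomize their output on a single boundary atom, since for discrete $\mathcal{X}^D$ a strictly deterministic classifier can only realize a finite grid of $\bm{h}$-values dictated by the atoms of $\mu_{\bm{x}}$. Once this convention is fixed, the remainder of the argument is routine: the likelihood-ratio identity $\mu_{\tilde{\bm{x}}}(A)= c_i^{-1}\mu_{\bm{x}}(A)$ for $A\subseteq\mathcal{R}_i$ together with countable additivity on the partition closes the proof, and no further optimization over $h$ is needed beyond solving the finite LP in $\bm{h}$.
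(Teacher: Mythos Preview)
Your argument is correct and follows the standard route to the discrete Neyman--Pearson lemma: reduce to region-level acceptance fractions via the constant likelihood ratio, then use Tocher's randomization for attainability of the LP optimum. The paper itself does not give a proof of this lemma at all; it simply writes ``Proof in \citep{tocher1950extension}'' and cites \citep{lee2019tight} for the LP formulation. So there is no paper-side argument to compare against---your write-up already goes beyond what the paper provides, and the only minor omission is an explicit treatment of the degenerate regions with $c_i\in\{0,\infty\}$, which is routine since one of $\bm{r}_i,\tilde{\bm{r}}_i$ vanishes there.
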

Proof in \citep{tocher1950extension}. The LP can be solved efficiently using a greedy algorithm that consumes regions with larger likelihood ratio first \citep{lee2019tight}. If the number of regions is small and \(\bm{r}_i,\tilde{\bm{r}}_i\) can be computed efficiently, robustness certificates for discrete data become computationally~feasible.%

\section{Hierarchical smoothing distribution}\label{sec:smoothing_distribution}%

\begin{figure}
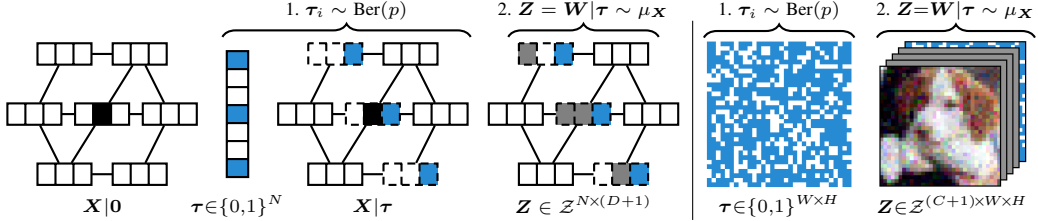

    \centering
    \begin{subfigure}[t]{0.65\textwidth}
        \centering
        \captionsetup{width=.95\linewidth}
        \input{figures/hierarchical_fig}
        \caption{For graphs we extend the feature matrix by an additional column that indicates which nodes in the graph to add noise to.}
    \end{subfigure}%
    \begin{subfigure}[t]{0.35\textwidth}
        \centering
        \input{figures/hierarchical_img_fig}
        \caption{For images we add an additional channel indicating the pixels to smooth.}
    \end{subfigure}
    \caption{We derive flexible and efficient robustness certificates for hierarchical randomized smoothing by certifying classifiers on a higher-dimensional space where the indicator \(\boldsymbol{\tau}\) is added to the~data.  }\label{fig_hierarchical}
\end{figure}

We aim to guarantee robustness against adversaries that perturb a subset of rows of a given matrix \(\bm{X}\). To achieve this we introduce hierarchical randomized smoothing: We partially smooth matrices by adding random noise on a randomly selected subset of their rows only. By adding noise in a more targeted manner than existing methods we will obtain stronger robustness while maintaining high accuracy. We describe sampling from the hierarchical (mixture) distribution \(\Psi_{\bm{X}}\) as~follows:

First, we sample an indicator vector \(\boldsymbol{\tau}\in\{0,1\}^{N}\) from an \textbf{upper-level smoothing distribution}~\(\phi(\boldsymbol{\tau})\) that indicates which rows to smooth. Specifically, we draw each element of \(\bm{\tau}\) independently from the same Bernoulli distribution \(\boldsymbol{\tau}_i \sim \text{Ber}(p)\), where \(p\) denotes the probability for smoothing rows.
Second, we sample a matrix \(\bm{W}\) by 
using a \textbf{lower-level smoothing distribution} \(\mu\) (depending on domain and threat model) to add noise on the elements of the selected rows only. We define the overall density of this hierarchical smoothing distribution as \(\Psi_{\mathbf{X}}(\mathbf{W},\boldsymbol{\tau})  \triangleq \mu_{\bm{X}}(\bm{W}|\boldsymbol{\tau})\phi(\boldsymbol{\tau})
\)~with:
\begin{equation*}\label{eq:2}
    \mu_{\bm{X}}(\bm{W}|\boldsymbol{\tau}) \triangleq \prod_{i=1}^N \mu_{\mathbf{x}_i}(\mathbf{w}_i|\boldsymbol{\tau}_i)
    \quad\quad\text{for}\quad\quad
    \mu_{\bm{x}_i}(\mathbf{w}_i|\boldsymbol{\tau}_i) \triangleq 
    \begin{cases}
        \mu_{\bm{x}_i}(\mathbf{w}_i) & \text{if } \boldsymbol{\tau}_{i}=1 \\
        \delta(\bm{w}_i-\bm{x}_i) & \text{if } \boldsymbol{\tau}_i = 0 \\
    \end{cases}
\end{equation*}%
where \(N\) is the number of rows and \(\delta\) the Dirac delta. In the case of a discrete lower-level smoothing distribution, \(\Psi_{\bm{X}}\) is a probability mass function and \(\delta\) denotes the Kronecker delta \(\delta_{\bm{w}_i,\bm{x}_i}\).

\section{Provable robustness certificates for hierarchical randomized smoothing}\label{sec:certificates}

We develop certificates by deriving a condition that guarantees \(g(\bm{X}) = g(\tilde{\bm{X}})\) for clean \(\bm{X}\) and any perturbed \(\tilde{\bm{X}}\in \mathcal{B}_{p,\epsilon}^r(\mathbf{X})\).
Certifying robustness under hierarchical smoothing is challenging, especially if we want to make use of certificates for the lower-level smoothing distribution \(\mu\) without deriving the entire certificate from scratch.
Moreover, \autoref{lemma:nplemma} can be intractable in discrete domains where certification involves combinatorial problems. We are interested in certificates that are (\textbf{1})~\textbf{flexible} enough to make use of existing smoothing-based certificates, and (\textbf{2})~\textbf{efficient} to~compute.

Our main idea to achieve both -- flexible and efficient certification under hierarchical smoothing -- is to embed the data into a higher-dimensional space \(\bm{W}\!\hookrightarrow \!\mathcal{Z}^{N\!\times\! (D+1)}\) by appending the indicator \(\boldsymbol{\tau}\) as an additional column: \(\bm{Z}\triangleq \bm{W}|\boldsymbol{\tau}\) (``|'' denotes concatenation, see \autoref{fig_hierarchical}).
We construct a new base classifier that operates on this higher-dimensional space, e.g.\ by ignoring \(\boldsymbol{\tau}\) and applying the original base classifier to \(\bm{W}\!\). In our experiments we also train the classifiers directly on the extended~data. 

Certifying the smoothed classifier on this higher-dimensional space simplifies the certification:
By appending \(\boldsymbol{\tau}\) to the data the supports of both distributions \(\Psi_{\bm{X}}\) and \(\Psi_{\tilde{\bm{X}}}\) differ, and they intersect only for those \(\bm{Z}\) for which all perturbed rows are selected by \(\boldsymbol{\tau}\) (see \autoref{fig1}). 
This allows the upper-level certificate to separate clean from perturbed rows and to delegate robustness guarantees for perturbed rows to an existing certificate for the lower-level \(\mu\). We further elaborate on why this concatenation is necessary in \autoref{app:motivation_indicator}.
In the following we show robustness certification under this hierarchical smoothing mainly involves (\textbf{1}) transforming the observed label probability \(p_{\bm{X},y}\), (\textbf{2})~computing an existing certificate for the transformed \(p_{\bm{X},y}\), and (\textbf{3}) transforming the resulting lower bound \smash{\(\underline{p_{\tilde{\bm{X}},y}}\)}. We provide pseudo-code for the entire certification in \autoref{appA}.

\subsection{Point-wise robustness certificates for hierarchical randomized smoothing}
Before providing certificates against the entire threat model we first derive point-wise certificates, i.e.\ we consider an arbitrary but fixed \(\tilde{\bm{X}}\!\in\!\mathcal{B}_{p,\epsilon}^r(\mathbf{X})\) and 
show robustness \(g(\bm{X})\!=\!g(\tilde{\bm{X}})\) by deriving a lower bound  \(\underline{p_{\tilde{\bm{X}},y}}\!\leq\!p_{\tilde{\bm{X}},y}\). 
The smoothed classifier \(g\) is certifiably robust w.r.t.~\(\tilde{\bm{X}}\)
 if \(\underline{p_{\tilde{\bm{X}},y}}\!>\!0.5\).
We derive a lower bound by using the discrete NP-Lemma on the upper-level \(\phi\), which allows us to delegate robustness guarantees to the lower-level \(\mu\) on the perturbed rows~only. %

Let \(\mathcal{C}\!\triangleq\!\{i : \bm{x}_i\!\neq\!\tilde{\bm{x}}_i\}\) denote the rows in which \(\bm{X}\) and \(\tilde{\bm{X}}\) differ. To apply the discrete NP-Lemma we have to partition the space \(\mathcal{Z}^{N\!\times\!(D+1)}\) into disjoint regions (\autoref{lemma:discreteNP}). The upper-level distribution allows us define the following four regions (\autoref{fig1}):
In region \(\mathcal{R}_2\) all perturbed rows are selected (\(\forall i\!\in\!\mathcal{C}\!:\!\boldsymbol{\tau}_i\!=\!1\)). In regions \(\mathcal{R}_1\) and \(\mathcal{R}_3\) at least one perturbed row is not selected when sampling from \(\Psi_{\bm{X}}\) and \(\Psi_{\tilde{\bm{X}}}\), respectively. Region \(\mathcal{R}_0\) contains \(\bm{Z}\) that cannot be sampled from either~distribution.

\textbf{Example for matrices.}
Consider the following example for matrix data to see the existence of four different regions. 
Define \(\bm{X} \!\!\triangleq\! (\bm{x}_1 | \bm{x}_2 | \bm{x}_3)^T\) and \(\tilde{\bm{X}}\!\!\triangleq\! (\bm{x}_1 | \tilde{\bm{x}}_2 | \bm{x}_3)^T\) (second row in \(\tilde{\bm{X}}\) is perturbed). Now consider a smoothed row \(\bm{w}\) with  \(\bm{w}\neq\bm{x}_2 \) and \(\bm{w}\neq \tilde{\bm{x}}_2\) that we can sample from both lower-level distributions \(\mu_{\bm{x}_2}(\bm{w})\!>\!0\) and \(\mu_{\tilde{\bm{x}}_2}(\bm{w})\!>\!0\). Then we can define four examples, one for each region: 
\[
            \underset{\Psi_{\bm{X}}(\bm{Z}_0)=0, \Psi_{\tilde{X}}(\bm{Z}_0)=0}{
            \overset{\bm{Z}_0\in\mathcal{R}_0}{
            \left(
            \begin{array}{c@{}c@{}c|c}
                \horzbar & \bm{x}_1 & \horzbar & 0 \\
                \horzbar & \bm{w} & \horzbar & 0 \\
                \horzbar & \bm{x}_3 & \horzbar & 0 \\
            \end{array}
            \right)}}
            \quad
            \underset{\Psi_{\bm{X}}(\bm{Z}_1)>0, \Psi_{\tilde{X}}(\bm{Z}_1)=0}{
            \overset{\bm{Z}_1=\bm{X}|0\in\mathcal{R}_1}{
            \left(
            \begin{array}{c@{}c@{}c|c}
                \horzbar & \bm{x}_1 & \horzbar & 0 \\
                \horzbar & \bm{x}_2 & \horzbar & 0 \\
                \horzbar & \bm{x}_3 & \horzbar & 0 \\
            \end{array}
            \right)}}
            \quad
            \underset{\Psi_{\bm{X}}(\bm{Z}_2)>0, \Psi_{\tilde{X}}(\bm{Z}_2)>0}{
            \overset{\bm{Z}_2\in\mathcal{R}_2}{
            \left(
            \begin{array}{c@{}c@{}c|c}
                \horzbar & \bm{x}_1 & \horzbar & 0 \\
                \horzbar & \bm{w} & \horzbar & \bm{1} \\
                \horzbar & \bm{x}_3 & \horzbar & 0 \\
            \end{array}
            \right)}}
            \quad
            \underset{\Psi_{\bm{X}}({\bm{Z}_3})=0, \Psi_{\tilde{X}}({\bm{Z}_3})>0}{
            \overset{\bm{Z}_3=\tilde{\bm{X}}|0\in\mathcal{R}_3}{
            \left(
            \begin{array}{c@{}c@{}c|c}
                \horzbar & \bm{x}_1 & \horzbar & 0 \\
                \horzbar & \tilde{\bm{x}}_2 & \horzbar & 0 \\
                \horzbar & \bm{x}_3 & \horzbar & 0 \\
            \end{array}
            \right)}}
        \]
Note that we can sample the row \(\bm{w}\) by adding noise to the rows \(\bm{x}_2\) or \(\tilde{\bm{x}}_2\) only if the upper-level smoothing distribution \(\phi\) first selects the perturbed row (\(\boldsymbol{\tau}_2=1\)). In general, we can only sample the same matrix \(\bm{Z}\) from both distributions \(\Psi_{\bm{X}}\) and \(\Psi_{\tilde{\bm{X}}}\) if the upper-level smoothing distribution first selects all perturbed rows \(i\in\mathcal{C}\). We group all those \(\bm{Z}\) in the region \(\mathcal{R}_2\).
\begin{figure}[t!]
    \centering
    \input{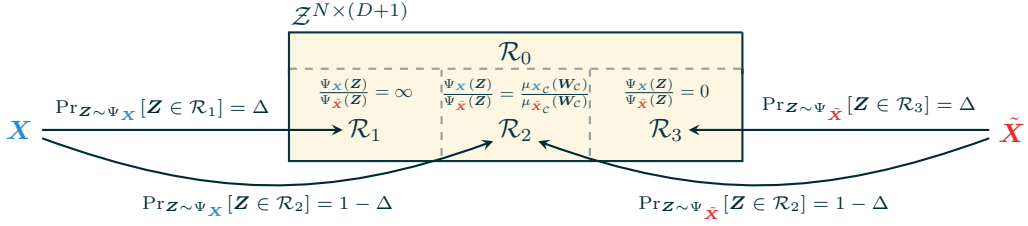}
    \caption{Overview of the disjoint regions and probabilities to sample \(\bm{Z}\) of each region. The absence of arrows into specific regions indicates a probability of zero. Only \(\bm{Z}\) in region \(\mathcal{R}_2\) can be sampled from both distributions \(\Psi_{\bm{X}}\) and \(\Psi_{\tilde{\bm{X}}}\).  The likelihood ratios visualize the proof of~\autoref{thm1}. 
    }\label{fig1}
\end{figure}

\textbf{Partitioning into regions.}
By appending the indicator \(\boldsymbol{\tau}\) to the data we obtain four disjoint~regions.
To formally characterize these regions, we define the reachable set as \(\mathcal{A} \triangleq \mathcal{A}_1 \cup \mathcal{A}_2\subseteq\mathcal{Z}^{N\times (D+1)}\) with
\( \mathcal{A}_1\triangleq\{
                \bm{Z}
                \mid 
                \forall i: (\boldsymbol{\tau}_i=0) \Rightarrow (\bm{z}_i=\bm{x}_i|0)
\},\) and
\(\mathcal{A}_2\triangleq\{
    \bm{Z}
    \mid 
    \forall i: (\boldsymbol{\tau}_i=0) \Rightarrow (\bm{z}_i=\tilde{\bm{x}}_i|0)
\}\), where \(\bm{x}_i|0\) denotes concatenation. Intuitively, \(\mathcal{A}\) only contains matrices that can be sampled from \(\Psi\).
Now we can define the regions and derive the required probabilities to sample \(\bm{Z}\) of each region:%
\regionsProposition{1}{prop1}{1}
\regprobProposition{2}{regprobProp}{2}
Proofs in \autoref{appA}.  Intuitively, \(1-\Delta=p^{|\mathcal{C}|}\) is the probability that all perturbed rows \(i\in\mathcal{C}\) are selected (\(\boldsymbol{\tau}_i=1\)), and \(\Delta\) that at least one row \(i\in\mathcal{C}\) is not selected~by~\(\phi\). %
Lastly, the probability to sample \(\bm{Z}\in\mathcal{R}_0\) is just 0. We visualize \autoref{prop1} and \autoref{regprobProp} in \autoref{fig1}.

\textbf{Hierarchical smoothing certificates.}
Having derived a partitioning of \(\mathcal{Z}^{N\times (D+1)}\) as above, we can apply the NP-Lemma for discrete random variables (\autoref{lemma:discreteNP}) on the upper-level distribution~\(\phi\). Notably, we show that the problem of certification under hierarchical smoothing can be reduced to proving robustness for the lower-level distribution \(\mu\) on the adversarially perturbed rows \(\mathcal{C}\)~only:
\firstNpt{1}{thm1}{1}
{\textit{Proof sketch} (Full proof~in~\autoref{appA}).
    In the worst-case all matrices in \(\mathcal{R}_1\) are correctly classified. Note that this is the worst case since we cannot obtain any matrix of region \(\mathcal{R}_1\) by sampling from the distribution \(\Psi_{\tilde{\bm{X}}}\).
    Therefore the worst-case classifier first uses the budget \(p_{\bm{X},y}\) on region \(\mathcal{R}_1\) and we can subtract the probability \(\Pr_{\bm{Z}\sim\Psi_{\bm{X}}} [\bm{Z} \in\mathcal{R}_1] = \Delta\) from the label probability \(p_{\bm{X},y}\).

    Since we never sample matrices of region \(\mathcal{R}_3\) from the distribution \(\Psi_{\bm{X}}\), the remaining correctly classified matrices must be in region \(\mathcal{R}_2\), which one reaches with probability \(1-\Delta\) from both distributions \(\Psi_{\bm{X}}\) and \(\Psi_{\tilde{\bm{X}}}\).
    In region \(\mathcal{R}_2\), however, we can simplify the problem to computing the certificate for \(\mu\) on the perturbed rows to distribute the remaining probability. Since one never samples matrices of region \(\mathcal{R}_0\) we do not have to consider it. The remaining statement follows from the Neyman-Pearson-Lemma \citep{neyman1933ix,tocher1950extension}. 
    \hfill\(\square\)
}

Note that we also derive the counterpart for discrete lower-level \(\mu\) in \autoref{app:discrete}.

\textbf{Implications.}
Notably, \autoref{thm1} implies that we can delegate robustness guarantees to the lower-level smoothing distribution \(\mu\), compute the optimal value of \autoref{eq:minimize} under \(\mu\) given the transformed probability \((p_{\bm{X},y} - \Delta)/(1-\Delta)\) and multiply the result with \((1-\Delta)\). This way we obtain the optimal value of \autoref{eq:minimize} under the hierarchical smoothing distribution \(\Psi\) on the extended matrix space and thus robustness guarantees for hierarchical smoothing. This means our certificates are highly flexible and can integrate the whole suite of existing smoothing distributions with independent noise per dimension \citep{lecuyer2019certified,cohen2019certified,lee2019tight,yang2020randomized}. %
This is in contrast to all existing approaches where one has to come up with novel smoothing distributions and derive certificates from scratch once new threat models are introduced.

\textbf{Special cases.}
We discuss two special cases of the probability \(p\) for smoothing rows: First, if the probability to select rows for smoothing is \(p=1\) (intuitively, we always smooth the entire matrix), then we have \(\Delta=1-p^{|\mathcal{C}|}=0\) for any number of perturbed rows \(|\mathcal{C}|\) and with \autoref{thm1} we get \(\underline{p_{\bm{\tilde{X}},y}}(\Psi) = \underline{p_{\bm{\tilde{X}},y}}(\mu)\). That is in this special case we obtain the original certificate for the lower-level smoothing distribution \(\mu\). Note that in this case the certificate ignores \(r\). 
Second, if the probability to select rows is \(p=0\) (intuitively, we never sample smoothed matrices), we have \(\Delta=1-p^{|\mathcal{C}|}=1\) for any \(|\mathcal{C}|\geq 1\),
and with \autoref{thm1} we get \(\underline{p_{\bm{\tilde{X}},y}}=0\), that is we do not obtain any~certificates.

\subsection{Regional robustness certificates for hierarchical randomized smoothing}
So far we can only guarantee that the prediction is robust for a specific perturbed input \(\tilde{\bm{X}}\in\mathcal{B}_{p,\epsilon}^r(\bm{X})\). To ensure that the smoothed classifier is robust to the entire treat model \(\mathcal{B}_{p,\epsilon}^r(\bm{X})\) we have to guarantee \(\min_{\tilde{\bm{X}} \in \mathcal{B}_{p,\epsilon}^r(\mathbf{X})} \underline{p_{\tilde{\bm{X}},y^*}} > 0.5\). 
In the following we show that it is sufficient to compute \autoref{thm1} for \(\Delta=1-p^r\) with the largest radius \(r\).
In fact, the final certificate is independent of which rows are perturbed, as long the two matrices differ in exactly \(r\) rows the certificate is the~same.

\begin{proposition}\label{prop:ball}
    Given clean \(\bm{X}\in\mathcal{X}^{N\times D}\). Consider the set \(\mathcal{Q}_{p,\epsilon}^r(\mathbf{X})\) of inputs at a fixed distance \(r\) (i.e. \(|\mathcal{C}|=r\) for all \(\tilde{\bm{X}}\in \mathcal{Q}_{p,\epsilon}^r(\mathbf{X})\)). Then
    we have
    \(
        \min_{\tilde{\bm{X}} \in \mathcal{B}_{p,\epsilon}^r(\mathbf{X})} \underline{p_{\tilde{\bm{X}},y}}
        =
        \min_{\tilde{\bm{X}} \in \mathcal{Q}_{p,\epsilon}^r(\mathbf{X})} \underline{p_{\tilde{\bm{X}},y}}
    \).
\end{proposition}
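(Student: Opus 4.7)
The plan is to prove the two directions of the equality separately. Since $\mathcal{Q}_{p,\epsilon}^r(\bm{X}) \subseteq \mathcal{B}_{p,\epsilon}^r(\bm{X})$ by definition, the inequality $\min_{\mathcal{B}} \leq \min_{\mathcal{Q}}$ is immediate. The main content is the reverse inequality, which I would prove by showing that for every $\tilde{\bm{X}} \in \mathcal{B}_{p,\epsilon}^r(\bm{X})$ with fewer than $r$ perturbed rows one can exhibit a sequence of points in $\mathcal{Q}_{p,\epsilon}^r(\bm{X})$ whose hierarchical bounds converge to a value no greater than $\underline{p_{\tilde{\bm{X}},y}}$. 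The underlying intuition, which the text preceding the proposition already signals, is that more perturbed rows can only weaken the certificate.

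First I would establish a monotonicity lemma for the hierarchical bound from \autoref{thm1}. Viewing the bound as $F(\Delta) \triangleq h\!\left((p_{\bm{X},y} - \Delta)/(1-\Delta)\right)\,(1-\Delta)$, with $h$ the lower-level Neyman-Pearson value function for a fixed pair of submatrices, I claim $F$ is non-increasing in $\Delta$ on $[0,p_{\bm{X},y}]$. This holds because $q(\Delta) \triangleq (p_{\bm{X},y}-\Delta)/(1-\Delta)$ is strictly decreasing in $\Delta$, $h$ is non-decreasing (a standard property of Neyman-Pearson lower bounds, e.g.\ as used in \autoref{lemma:nplemma}), and $1-\Delta$ is strictly decreasing; the pointwise product of two non-negative non-increasing functions is non-increasing.

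Second, given $\tilde{\bm{X}} \in \mathcal{B}_{p,\epsilon}^r(\bm{X})$ with $|\mathcal{C}| = k < r$, I would construct a sequence $\tilde{\bm{X}}_n' \in \mathcal{Q}_{p,\epsilon}^r(\bm{X})$ by perturbing $r-k$ additional rows (chosen arbitrarily outside $\mathcal{C}$) by magnitude $1/n$ in the continuous case. For sufficiently large $n$ the overall $\ell_p$ norm remains bounded by $\epsilon$ and the number of differing rows is exactly $r$. Because the lower-level smoothing distribution applies noise independently per element, the likelihood ratio $\mu_{\tilde{\bm{X}}_{\mathcal{C}'}}\!/\mu_{\bm{X}_{\mathcal{C}'}}$ factorizes over rows, and the factors corresponding to the newly added near-identical rows tend to one. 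Hence the NP partition $\hat{S}_\kappa$ for the enlarged submatrix collapses in the limit to (the cylinder over) the NP partition on the original $k$ rows, and the lower-level NP value for $\tilde{\bm{X}}_n'$ converges to that for $\tilde{\bm{X}}$. Combining with $\Delta_r = 1-p^r > 1-p^k = \Delta_k$ and applying the monotonicity lemma yields $\lim_n \underline{p_{\tilde{\bm{X}}_n',y}} \leq \underline{p_{\tilde{\bm{X}},y}}$, delivering the reverse inequality.

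The main obstacle is the continuity step, namely justifying that the lower-level NP probability varies continuously as we introduce vanishing perturbations on additional rows; for absolutely continuous $\mu$ this follows from dominated convergence applied to the likelihood-ratio partition, but some care is needed when the lower-level distribution is discrete (treated by the LP version of \autoref{lemma:discreteNP}), where a limit of perturbations does not exist. In that case I would instead argue directly by constructing a worst-case classifier: any feasible $\bm{h}$ for the LP associated with $\tilde{\bm{X}}$ can be lifted to a feasible $\bm{h}'$ for the LP associated with a companion $\tilde{\bm{X}}' \in \mathcal{Q}$, by exploiting that likelihood-ratio regions on additional rows split according to a fixed marginal and preserve or reduce the objective value after multiplication by $(1-\Delta_r)/(1-\Delta_k)$. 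Either route leads to the same conclusion, that the adversary's optimal choice always saturates $|\mathcal{C}| = r$.
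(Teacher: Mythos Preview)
Your argument is correct, and its core idea---monotonicity of the hierarchical bound in $\Delta$---is exactly what the paper uses. The paper's proof, however, is a three-line observation: $1-\Delta=p^{|\mathcal{C}|}$ is decreasing in $|\mathcal{C}|$, hence by \autoref{thm1} the bound $\underline{p_{\tilde{\bm{X}},y}}$ is decreasing in $|\mathcal{C}|$, so the minimum over the ball is attained at $|\mathcal{C}|=r$. It neither constructs a comparison point in $\mathcal{Q}$ nor addresses the dependence of the lower-level Neyman--Pearson value on \emph{which} rows lie in $\mathcal{C}$. You go further and explicitly exhibit, via a limit of near-zero additional perturbations (or an LP lifting in the discrete case), a point in $\mathcal{Q}$ whose bound is no larger than that of any $\tilde{\bm{X}}\in\mathcal{B}$ with fewer perturbed rows. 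This extra step buys generality: your argument does not assume that the worst-case lower-level certificate over the $\epsilon$-ball is independent of $|\mathcal{C}|$. For the concrete instantiations the paper cares about (Gaussian in Corollary~\ref{cor1}, sparse smoothing), that independence does hold---the lower-level NP value is a function of $\epsilon$ (resp.\ $r_a,r_d$) alone---so the paper's terse monotonicity-in-$\Delta$ argument suffices there without any construction. In short: same key lemma, but you fill in a gap the paper leaves implicit, at the cost of a noticeably longer proof.
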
%
{\textit{Proof.} The probability \(1-\Delta = p^{|\mathcal{C}|}\) is monotonously decreasing in the number of perturbed rows~\(|\mathcal{C}|\). Thus the lower bound \(\underline{p_{\tilde{\bm{X}},y}}\) is also monotonously decreasing in~\(|\mathcal{C}|\) (see \autoref{thm1}). It follows  for fixed \(\epsilon\) that \(\underline{p_{\tilde{\bm{X}},y}}\) is minimal at \(|\mathcal{C}|=r\), i.e.\ the largest possible radius under our threat model~\(\mathcal{B}_{p,\epsilon}^r(\bm{X})\).}

\section{Initializing hierarchical randomized smoothing}\label{sec:initialization}

In the previous section we derive robustness certificates for hierarchical randomized smoothing for any lower-level smoothing distribution \(\mu\) by deriving a general lower bound on the probability \(p_{\tilde{\bm{X}},y}\) in \autoref{thm1}. In this section we instantiate our hierarchical smoothing framework with specific lower-level smoothing distributions for discrete and continuous~data.

\subsection{Hierarchical randomized smoothing using Gaussian isotropic smoothing}

Concerning continuous data \(\mathcal{X}=\mathbb{R}\), consider the threat model \(\mathcal{B}_{p,\epsilon}^r(\bm{X})\) for \(p=2\).
We initialize the lower-level smoothing distribution of hierarchical smoothing with the Gaussian distribution \citep{cohen2019certified} (as introduced in \autoref{sec:pre}), since it is specifically designed and tight for the well-studied \(\ell_2\)-norm threat model. For our purposes we apply the Gaussian noise independently on the matrix entries. In the following we present the binary-class certificates for hierarchical randomized smoothing induced by isotropic Gaussian smoothing:%
\gaussianCorThree{1}{cor1}{1}
Proof in \autoref{app:gaussian}.
We also derive the corresponding multi-class certificates in \autoref{app:gaussian}.

\subsection{Hierarchical randomized smoothing using sparse smoothing}

To demonstrate our certificates in discrete domains we consider binary data \(\mathcal{X}=\{0,1\}\) and model adversaries that delete \(r_d\) ones (flip \(1\rightarrow 0\)) and add \(r_a\) ones (flip \(0\rightarrow 1\)), that is we consider the ball \(\mathcal{B}_{r_a, r_d}^{r}(\bm{X})\) (see \autoref{app:hyper} for a formal introduction).
We initialize the lower-level smoothing distribution of hierarchical smoothing with the sparse smoothing distribution proposed by \cite{bojchevski2020efficient}: \(p(\mu(\bm{x})_i \neq \bm{x}_i) = p_+^{1-\bm{x}_i}p_{-}^{\bm{x}_i}\), introducing two different noise probabilities to flip \(0 \rightarrow 1\) with probability \(p_+\) and \(1 \rightarrow 0\) with probability \(p_-\).   The main idea is that the different flipping probabilities allow to preserve sparsity of the underlying data, making this approach particularly useful for graph-structured data. Note that we can consider the discrete certificate of \citet{lee2019tight} as a special case \citep{bojchevski2020efficient}. We derive the corresponding certificates in \autoref{app:discrete}.

\subsection{Hierarchical randomized smoothing using ablation smoothing}
Lastly, randomized ablation \citep{levine2020robustness} is a smoothing distribution where the input is not randomly smoothed but masked, e.g.\ by replacing the input with a special ablation token~\(\bm{t}\notin\mathcal{X}^D\) that does not exist in the original space.
There are different variations of ablation and we carefully discuss such differences in \autoref{app:ablation}. By choosing a smoothing distribution \(\mu\) that ablates all selected rows we can prove the following connection between hierarchical and ablation smoothing:
\ablationProp{2}{prop:ablate}{2}
Proof in \autoref{app:ablation}. Interestingly, \autoref{cor1} and \autoref{prop:ablate} show that hierarchical smoothing is a generalization of additive noise and ablation certificates. The additional column indicates which rows will be ablated, but instead of ablating we only add noise to them. In contrast to ablation smoothing (which just checks if \(p_{\bm{X},y} - \Delta > 0.5\)) we further ``utilize'' the budget \(p_{\bm{X},y} - \Delta\) by plugging it into the lower-level certificate.
Notably, we are the first to generalize beyond ablation and additive noise certificates, and our certificates are in fact orthogonal to all existing robustness certificates that are based on randomized smoothing.
\section{Experimental evaluation}\label{sec:exp}

\begin{figure}[t]
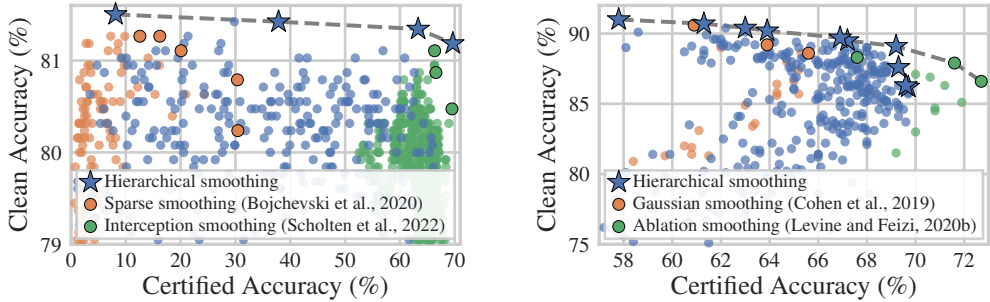

  \centering
  \begin{minipage}{0.5\textwidth}
    \centering
  \tikzsetnextfilename{figures/fig_GAT_cora_ml_1_0_40.pgf}%
  \begin{tikzpicture}%
  \node[inner sep=0pt] {\input{figures/fig_GAT_cora_ml_1_0_40.pgf}};
  \end{tikzpicture}

  \end{minipage}\hfill%
  \begin{minipage}{0.5\textwidth}
    \centering
  \tikzsetnextfilename{figures/figure1.pgf}%
  \begin{tikzpicture}%
  \node[inner sep=0pt] {\input{figures/figure1.pgf}};
  \end{tikzpicture}

  \end{minipage}\hfill%
  \caption{Hierarchical smoothing significantly expands the Pareto-front w.r.t.\ robustness and accuracy in node and image classification. 
  Left: Discrete hierarchical smoothing for node classification, smoothed GAT on Cora-ML (\(r\!=\!1, r_d\!=\!40, r_a\!=\!0\)). Right: Continuous hierarchical smoothing for image classification, smoothed ResNet50 on CIFAR10 (\(r \!=\! 3, \epsilon \!=\! 0.35\)). Non-smoothed GAT achieves 80\%\(\pm2\%\) clean accuracy on Cora-ML, ResNet50 94\% on CIFAR10. Large circles and stars are dominating points for each certificate. Dashed lines connect dominating points across~methods.}\label{fig:pareto0}
\end{figure}

In this section we highlight the importance of hierarchical smoothing in image and node classification, instantiating our framework with two well-established smoothing distributions for continuous \citep{cohen2019certified} and discrete data \citep{bojchevski2020efficient}. In the following we present experiments supporting our main findings.  We refer to \autoref{app:results} for more experimental results and to \autoref{app:hyper} for the full experimental setup and instructions to reproduce our results.

\textbf{Threat models.} For images we model adversaries that perturb at most \(r\) pixels of an image, where the perturbation over all channels is bounded by \(\epsilon\) under the \(\ell_2\)-norm. For graphs we model adversaries that perturb binary features of at most \(r\) nodes by inserting at most \(r_a\) and deleting at most \(r_d\) ones. 

\textbf{Datasets and models.}
For image classification we train ResNet50 \citep{he2016resnet} on CIFAR10  \citep{krizhevsky2009learning} consisting of images with three channels of size 32x32 categorized into 10 classes. We certify the models on a random but fixed subset of 1,000 test images.
For node classification we train graph attention networks (GATs) \citep{velivckovic2018graph} with two layers on Cora-ML \citep{mccallum2000automating,bojchevski2018deep} consisting of 2,810 nodes with 2,879 \textit{binary} features, 7,981 edges and 7 classes. We train and certify GNNs in an inductive learning setting \citep{scholten2022randomized}. We defer results for more models and datasets to~\autoref{app:results}.

\textbf{Experimental setup.}
During training, we sample one smoothed matrix each epoch to train our models on smoothed data. Note that for hierarchical smoothing we also train our models on the higher-dimensional matrices. At test time, we use significance level \(\alpha=0.01\), \(n_0=1{,}000\) samples for estimating the majority class and \(n_1=10{,}000\) samples for certification.
We report the classification accuracy of the smoothed classifier on the test set (\textit{clean accuracy}), and the \textit{certified accuracy}, that is the number of test samples that are correctly classified \textit{and} certifiably robust for a given radius.
For node classification we report results averaged over five random graph splits and model initializations.  

\textbf{Baseline certificates.} %
We compare hierarchical smoothing to additive noise and ablation certificates since our method generalizes both into a novel framework. To this end, we implement the two additive noise certificates of \citet{cohen2019certified} and \citet{bojchevski2020efficient} for continuous and discrete data, respectively. Concerning ablation certificates, we implement the certificate of \citet{levine2020robustness} for image classification, which ablates entire pixels. For node classification we implement the generalization to GNNs of \citet{scholten2022randomized} that ablates all node features of entire nodes. We conduct exhaustive experiments to explore the space of smoothing parameters for each method (see details in \autoref{app:hyper}). To compare the three different approaches we fix the threat model and investigate the robustness-accuracy trade-off by comparing certified accuracy against clean accuracy.

\textbf{Hierarchical smoothing significantly expands the Pareto-front.}
Notably, hierarchical smoothing is clearly expanding the Pareto-front when optimization for both -- certified accuracy and clean accuracy (see \autoref{fig:pareto0}). Especially when the number of adversarial nodes or pixels is small but the feature perturbation is large, hierarchical smoothing is significantly dominating both baselines. 

Interestingly, both baselines either sacrifice robustness over accuracy (or vice versa): While additive noise certificates obtain higher clean accuracy at worse certified accuracy, ablation certificates obtain strong certified accuracy at lower clean accuracy. In contrast, hierarchical smoothing allows to explore the entire space and significantly expands the Pareto-front of the robustness-accuracy trade-off under our threat model.
This demonstrates that hierarchical smoothing is a more flexible framework and represents a useful and novel tool for analyzing the robustness of machine learning models in general.

\textbf{Entity-selection probability.}
With hierarchical smoothing we introduce the entity-selection probability \(p\) that allows to better control the robustness-accuracy trade-off under our threat model. 
Specifically, for larger \(p\) we add more noise and increase robustness but also decrease accuracy.
As usual in randomized smoothing, the optimal \(p\) needs to be fine-tuned against a task, dataset and radius~\(r\). In our settings we found dominating points e.g.\ for \(p=0.81\) (Cora-ML) and \(p=0.85\)~(CIFAR10).

\vspace{-0.1cm}
\section{Discussion}
\vspace{-0.2cm}
Our hierarchical smoothing approach overcomes limitations of randomized smoothing certificates: Instead of having to derive \autoref{lemma:nplemma} from scratch for each smoothing distribution, we can easily integrate the whole suite of existing and future randomized smoothing certificates. Our framework is highly flexible and allows to certify robustness against a wide range of adversarial~perturbations.

Hierarchical smoothing also comes with limitations: First, we inherit the limitations of ablation certificates in which the certifiable radius \(r\) is bounded by the smoothing parameters independently of the classifier \citep{scholten2022randomized}. The underlying reason is that we do not evaluate the smoothed classifier on the entire space \(\mathcal{A}\) since we cannot reach certain matrices as exploited in \autoref{sec:certificates}. Second, the classifier defined on the original matrix space is invariant with respect to the new dimension introduced by our smoothing distribution and not incorporating such invariances into the Neyman-Pearson-Lemma yields strictly looser guarantees \citep{schuchardt2022invariance}.

Beyond that, our certificates are highly efficient: %
The only additional cost for computing hierarchical smoothing certificates is to evaluate the algebraic term \(\Delta=1-p^r\), which takes constant time (see also \autoref{app:motivation_indicator}).
Since we train our classifiers on the extended matrices, we also allow classifiers to distinguish whether entities have been smoothed by the lower-level distribution (see~\autoref{app:results}).

\textbf{Future work.}
 Future work can build upon our work towards even stronger certificates in theory and practice, for example by deriving certificates that are tight under the proposed threat model and efficient to compute at the same time.  %
Future work can further (1) implement certificates for other \(\ell_p\)-norms \citep{levine2021improved,voracek23improving} and domains, (2) improve and assess adversarial robustness,  and (3) introduce novel architectures and training techniques. 

\textbf{Broader impact.} 
Our hierarchical smoothing certificates are highly flexible and provide provable guarantees for arbitrary (\(\ell_p\)-norm) threat models, provided that certificates for the corresponding lower-level smoothing distribution exist. Therefore our contributions impact the certifiable robustness of a large variety of models in discrete and continuous domains and therefore the field of reliable machine learning in general: Our robustness certificates provide novel ways of assessing, understanding and improving the adversarial robustness of machine learning models.

\vspace{-0.1cm}
\section{Conclusion}
\vspace{-0.2cm}

With hierarchical smoothing we propose the first hierarchical (mixture) smoothing distribution for complex data where adversaries can perturb only a subset of all entities (e.g.\ pixels in an image, or nodes in a graph). Our main idea is to add noise in a more targeted manner: We first select a subset of all entities and then we add noise to them. By certifying robustness under this hierarchical smoothing distribution we achieve stronger robustness guarantees while still maintaining high accuracy. %
Overall, our certificates for hierarchical smoothing represent novel, flexible tools for certifying the adversarial robustness of machine learning models towards more reliable machine~learning.

\vspace{-0.1cm}
\begin{ack}
\vspace{-0.2cm}
    This work has been funded by the Munich Center for Machine Learning, by the DAAD program Konrad Zuse Schools of Excellence in Artificial Intelligence (sponsored by the Federal Ministry of Education and Research), and by the German Research Foundation, grant GU 1409/4-1. 
    The authors of this work take full responsibility for its content.
\end{ack}

\bibliographystyle{plainnat}
\bibliography{references}

\begin{thebibliography}{67}
\providecommand{\natexlab}[1]{#1}
\providecommand{\url}[1]{\texttt{#1}}
\expandafter\ifx\csname urlstyle\endcsname\relax
  \providecommand{\doi}[1]{doi: #1}\else
  \providecommand{\doi}{doi: \begingroup \urlstyle{rm}\Url}\fi

\bibitem[Bojchevski and G{\"{u}}nnemann(2019)]{bojchevski2019certifiable}
Aleksandar Bojchevski and Stephan G{\"{u}}nnemann.
\newblock Certifiable robustness to graph perturbations.
\newblock In \emph{Neural Information Processing Systems, {NeurIPS}}, 2019.

\bibitem[Bojchevski and Günnemann(2018)]{bojchevski2018deep}
Aleksandar Bojchevski and Stephan Günnemann.
\newblock Deep gaussian embedding of graphs: Unsupervised inductive learning
  via ranking.
\newblock In \emph{International Conference on Learning Representations}, 2018.

\bibitem[Bojchevski et~al.(2020)Bojchevski, Gasteiger, and
  G{\"u}nnemann]{bojchevski2020efficient}
Aleksandar Bojchevski, Johannes Gasteiger, and Stephan G{\"u}nnemann.
\newblock Efficient robustness certificates for discrete data: Sparsity-aware
  randomized smoothing for graphs, images and more.
\newblock In \emph{Proceedings of the 37th International Conference on Machine
  Learning}, pages 1003--1013, 2020.

\bibitem[Brody et~al.(2022)Brody, Alon, and Yahav]{brody2021how}
Shaked Brody, Uri Alon, and Eran Yahav.
\newblock How attentive are graph attention networks?
\newblock In \emph{International Conference on Learning Representations}, 2022.

\bibitem[Carlini et~al.(2019)Carlini, Athalye, Papernot, Brendel, Rauber,
  Tsipras, Goodfellow, Madry, and Kurakin]{carlini2019evaluating}
Nicholas Carlini, Anish Athalye, Nicolas Papernot, Wieland Brendel, Jonas
  Rauber, Dimitris Tsipras, Ian~J. Goodfellow, Aleksander Madry, and Alexey
  Kurakin.
\newblock On evaluating adversarial robustness.
\newblock \emph{CoRR}, abs/1902.06705, 2019.

\bibitem[Chu et~al.(2022)Chu, Li, and Li]{chu2022tpc}
Wenda Chu, Linyi Li, and Bo~Li.
\newblock {TPC:} transformation-specific smoothing for point cloud models.
\newblock In \emph{{ICML}}, volume 162 of \emph{Proceedings of Machine Learning
  Research}, pages 4035--4056. {PMLR}, 2022.

\bibitem[Cohen et~al.(2019)Cohen, Rosenfeld, and Kolter]{cohen2019certified}
Jeremy Cohen, Elan Rosenfeld, and Zico Kolter.
\newblock Certified adversarial robustness via randomized smoothing.
\newblock In \emph{International Conference on Machine Learning}, pages
  1310--1320. PMLR, 2019.

\bibitem[Croce and Hein(2019)]{croce2019sparse}
Francesco Croce and Matthias Hein.
\newblock Sparse and imperceivable adversarial attacks.
\newblock In \emph{{ICCV}}, pages 4723--4731. {IEEE}, 2019.

\bibitem[Fey and Lenssen(2019)]{fey2019fast}
Matthias Fey and Jan~E. Lenssen.
\newblock Fast graph representation learning with {PyTorch Geometric}.
\newblock In \emph{ICLR Workshop on Representation Learning on Graphs and
  Manifolds}, 2019.

\bibitem[Gao et~al.(2020)Gao, Hu, and Gong]{gao2020certified}
Zhidong Gao, Rui Hu, and Yanmin Gong.
\newblock Certified robustness of graph classification against topology attack
  with randomized smoothing.
\newblock In \emph{{GLOBECOM}}, pages 1--6. {IEEE}, 2020.

\bibitem[Goodfellow et~al.(2015)Goodfellow, Shlens, and
  Szegedy]{goodfellow2014explaining}
Ian~J. Goodfellow, Jonathon Shlens, and Christian Szegedy.
\newblock Explaining and harnessing adversarial examples.
\newblock In \emph{{ICLR} (Poster)}, 2015.

\bibitem[Gosch et~al.(2023{\natexlab{a}})Gosch, Geisler, Sturm, Charpentier,
  Zügner, and Günnemann]{gosch2023adversarial}
Lukas Gosch, Simon Geisler, Daniel Sturm, Bertrand Charpentier, Daniel Zügner,
  and Stephan Günnemann.
\newblock Adversarial training for graph neural networks: Pitfalls, solutions,
  and new directions.
\newblock In \emph{Advances in Neural Information Processing Systems,
  {NeurIPS}}, 2023{\natexlab{a}}.

\bibitem[Gosch et~al.(2023{\natexlab{b}})Gosch, Sturm, Geisler, and
  G{\"u}nnemann]{gosch2023revisiting}
Lukas Gosch, Daniel Sturm, Simon Geisler, and Stephan G{\"u}nnemann.
\newblock Revisiting robustness in graph machine learning.
\newblock In \emph{International Conference on Learning Representations
  (ICLR)}, 2023{\natexlab{b}}.

\bibitem[G{\"u}nnemann(2022)]{GNNBook-ch8-gunnemann}
Stephan G{\"u}nnemann.
\newblock Graph neural networks: Adversarial robustness.
\newblock In Lingfei Wu, Peng Cui, Jian Pei, and Liang Zhao, editors,
  \emph{Graph Neural Networks: Foundations, Frontiers, and Applications},
  chapter~8, pages 149--176. Springer, 2022.

\bibitem[He et~al.(2016)He, Zhang, Ren, and Sun]{he2016resnet}
Kaiming He, Xiangyu Zhang, Shaoqing Ren, and Jian Sun.
\newblock Deep residual learning for image recognition.
\newblock In \emph{{CVPR}}, pages 770--778. {IEEE} Computer Society, 2016.

\bibitem[Hendrycks et~al.(2021)Hendrycks, Carlini, Schulman, and
  Steinhardt]{hendrycks2021unsolved}
Dan Hendrycks, Nicholas Carlini, John Schulman, and Jacob Steinhardt.
\newblock Unsolved problems in {ML} safety.
\newblock \emph{CoRR}, abs/2109.13916, 2021.

\bibitem[Horv{\'{a}}th et~al.(2022{\natexlab{a}})Horv{\'{a}}th, M{\"{u}}ller,
  Fischer, and Vechev]{horvath2022boosting}
Mikl{\'{o}}s~Z. Horv{\'{a}}th, Mark~Niklas M{\"{u}}ller, Marc Fischer, and
  Martin~T. Vechev.
\newblock Boosting randomized smoothing with variance reduced classifiers.
\newblock In \emph{{ICLR}}. OpenReview.net, 2022{\natexlab{a}}.

\bibitem[Horv{\'{a}}th et~al.(2022{\natexlab{b}})Horv{\'{a}}th, M{\"{u}}ller,
  Fischer, and Vechev]{horvath2022derandomized}
Mikl{\'{o}}s~Z. Horv{\'{a}}th, Mark~Niklas M{\"{u}}ller, Marc Fischer, and
  Martin~T. Vechev.
\newblock {(De-)Randomized Smoothing for Decision Stump Ensembles}.
\newblock In \emph{Advances in Neural Information Processing Systems,
  {NeurIPS}}, 2022{\natexlab{b}}.

\bibitem[Horv{\'{a}}th et~al.(2022{\natexlab{c}})Horv{\'{a}}th, M{\"{u}}ller,
  Fischer, and Vechev]{horvath2022robust}
Mikl{\'{o}}s~Z. Horv{\'{a}}th, Mark~Niklas M{\"{u}}ller, Marc Fischer, and
  Martin~T. Vechev.
\newblock Robust and accurate - compositional architectures for randomized
  smoothing.
\newblock \emph{CoRR}, abs/2204.00487, 2022{\natexlab{c}}.

\bibitem[Huang et~al.(2023)Huang, Marchant, Lucas, Bauer, Ohrimenko, and
  Rubinstein]{huang2023rsdel}
Zhuoqun Huang, Neil Marchant, Keane Lucas, Lujo Bauer, Olya Ohrimenko, and
  Benjamin I.~P. Rubinstein.
\newblock {RS-Del}: Edit distance robustness certificates for sequence
  classifiers via randomized deletion.
\newblock In \emph{Advances in Neural Information Processing Systems,
  {NeurIPS}}, 2023.

\bibitem[Jeong and Shin(2020)]{jeong2020consistency}
Jongheon Jeong and Jinwoo Shin.
\newblock Consistency regularization for certified robustness of smoothed
  classifiers.
\newblock In \emph{Advances in Neural Information Processing Systems,
  {NeurIPS}}, 2020.

\bibitem[Jia et~al.(2021)Jia, Cao, and Gong]{jia2021intrinsic}
Jinyuan Jia, Xiaoyu Cao, and Neil~Zhenqiang Gong.
\newblock Intrinsic certified robustness of bagging against data poisoning
  attacks.
\newblock In \emph{{AAAI}}, pages 7961--7969. {AAAI} Press, 2021.

\bibitem[Jin et~al.(2020)Jin, Shi, Peruri, and Zhang]{jin2020certified}
Hongwei Jin, Zhan Shi, Venkata Jaya Shankar~Ashish Peruri, and Xinhua Zhang.
\newblock Certified robustness of graph convolution networks for graph
  classification under topological attacks.
\newblock In \emph{Advances in Neural Information Processing Systems,
  {NeurIPS}}, 2020.

\bibitem[Kipf and Welling(2017)]{kipf2017semi}
Thomas~N. Kipf and Max Welling.
\newblock Semi-supervised classification with graph convolutional networks.
\newblock In \emph{International Conference on Learning Representations
  (ICLR)}, 2017.

\bibitem[Klicpera et~al.(2019)Klicpera, Bojchevski, and
  G{\"{u}}nnemann]{klicpera2020predict}
Johannes Klicpera, Aleksandar Bojchevski, and Stephan G{\"{u}}nnemann.
\newblock Predict then propagate: Graph neural networks meet personalized
  pagerank.
\newblock In \emph{{ICLR} (Poster)}. OpenReview.net, 2019.

\bibitem[Kollovieh et~al.(2023)Kollovieh, Gosch, Scholten, Lienen, and
  G{\"u}nnemann]{kollovieh2023assessing}
Marcel Kollovieh, Lukas Gosch, Yan Scholten, Marten Lienen, and Stephan
  G{\"u}nnemann.
\newblock Assessing robustness via score-based adversarial image generation.
\newblock \emph{arXiv preprint arXiv:2310.04285}, 2023.

\bibitem[Krizhevsky et~al.(2009)Krizhevsky, Hinton,
  et~al.]{krizhevsky2009learning}
Alex Krizhevsky, Geoffrey Hinton, et~al.
\newblock Learning multiple layers of features from tiny images.
\newblock 2009.

\bibitem[Lecuyer et~al.(2019)Lecuyer, Atlidakis, Geambasu, Hsu, and
  Jana]{lecuyer2019certified}
Mathias Lecuyer, Vaggelis Atlidakis, Roxana Geambasu, Daniel Hsu, and Suman
  Jana.
\newblock Certified robustness to adversarial examples with differential
  privacy.
\newblock In \emph{2019 IEEE Symposium on Security and Privacy (SP)}, pages
  656--672. IEEE, 2019.

\bibitem[Lee et~al.(2019)Lee, Yuan, Chang, and Jaakkola]{lee2019tight}
Guang-He Lee, Yang Yuan, Shiyu Chang, and Tommi Jaakkola.
\newblock Tight certificates of adversarial robustness for randomly smoothed
  classifiers.
\newblock In \emph{Advances in Neural Information Processing Systems,
  {NeurIPS}}, 2019.

\bibitem[Levine and Feizi(2020{\natexlab{a}})]{levine2020derandomized}
Alexander Levine and Soheil Feizi.
\newblock {(De)Randomized Smoothing for Certifiable Defense against Patch
  Attacks}.
\newblock In \emph{Advances in Neural Information Processing Systems,
  {NeurIPS}}, 2020{\natexlab{a}}.

\bibitem[Levine and Feizi(2020{\natexlab{b}})]{levine2020robustness}
Alexander Levine and Soheil Feizi.
\newblock Robustness certificates for sparse adversarial attacks by randomized
  ablation.
\newblock In \emph{Proceedings of the AAAI Conference on Artificial
  Intelligence}, volume~34, pages 4585--4593, 2020{\natexlab{b}}.

\bibitem[Levine and Feizi(2021)]{levine2021improved}
Alexander Levine and Soheil Feizi.
\newblock Improved, deterministic smoothing for \(\ell_1\) certified
  robustness.
\newblock In \emph{International Conference on Machine Learning}, 2021.

\bibitem[Levine et~al.(2020)Levine, Kumar, Goldstein, and
  Feizi]{levine2020tight}
Alexander Levine, Aounon Kumar, Thomas Goldstein, and Soheil Feizi.
\newblock Tight second-order certificates for randomized smoothing.
\newblock \emph{arXiv preprint arXiv:2010.10549}, 2020.

\bibitem[Li et~al.(2019)Li, Chen, Wang, and Carin]{li2019certified}
Bai Li, Changyou Chen, Wenlin Wang, and Lawrence Carin.
\newblock Certified adversarial robustness with additive noise.
\newblock In \emph{Advances in Neural Information Processing Systems,
  {NeurIPS}}, pages 9459--9469, 2019.

\bibitem[Li et~al.(2020)Li, Qi, Xie, and Li]{li2020sok}
Linyi Li, Xiangyu Qi, Tao Xie, and Bo~Li.
\newblock Sok: Certified robustness for deep neural networks.
\newblock \emph{CoRR}, abs/2009.04131, 2020.

\bibitem[Liu et~al.(2021)Liu, Jia, and Gong]{liu2021pointguard}
Hongbin Liu, Jinyuan Jia, and Neil~Zhenqiang Gong.
\newblock Pointguard: Provably robust 3d point cloud classification.
\newblock In \emph{{CVPR}}, pages 6186--6195. Computer Vision Foundation /
  {IEEE}, 2021.

\bibitem[Loshchilov and Hutter(2017)]{loshchilov2017sgdr}
Ilya Loshchilov and Frank Hutter.
\newblock {SGDR:} stochastic gradient descent with warm restarts.
\newblock In \emph{{ICLR} (Poster)}. OpenReview.net, 2017.

\bibitem[Ma et~al.(2020)Ma, Ding, and Mei]{ma2020towards}
Jiaqi Ma, Shuangrui Ding, and Qiaozhu Mei.
\newblock Towards more practical adversarial attacks on graph neural networks.
\newblock In \emph{Advances in Neural Information Processing Systems,
  {NeurIPS}}, 2020.

\bibitem[McCallum et~al.(2000)McCallum, Nigam, Rennie, and
  Seymore]{mccallum2000automating}
Andrew McCallum, Kamal Nigam, Jason Rennie, and Kristie Seymore.
\newblock Automating the construction of internet portals with machine
  learning.
\newblock \emph{Inf. Retr.}, 3\penalty0 (2):\penalty0 127--163, 2000.

\bibitem[Modas et~al.(2019)Modas, Moosavi{-}Dezfooli, and
  Frossard]{modas2019sparsefool}
Apostolos Modas, Seyed{-}Mohsen Moosavi{-}Dezfooli, and Pascal Frossard.
\newblock Sparsefool: {A} few pixels make a big difference.
\newblock In \emph{{CVPR}}, pages 9087--9096. Computer Vision Foundation /
  {IEEE}, 2019.

\bibitem[Mohapatra et~al.(2020)Mohapatra, Ko, Weng, Chen, Liu, and
  Daniel]{mohapatra2020higher}
Jeet Mohapatra, Ching-Yun Ko, Tsui-Wei Weng, Pin-Yu Chen, Sijia Liu, and Luca
  Daniel.
\newblock Higher-order certification for randomized smoothing.
\newblock In \emph{Advances in Neural Information Processing Systems,
  {NeurIPS}}, volume~33, pages 4501--4511, 2020.

\bibitem[Mujkanovic et~al.(2022)Mujkanovic, Geisler, G\"unnemann, and
  Bojchevski]{mujkanovic2022_are_defenses_for_gnns_robust}
Felix Mujkanovic, Simon Geisler, Stephan G\"unnemann, and Aleksandar
  Bojchevski.
\newblock Are defenses for graph neural networks robust?
\newblock In \emph{Neural Information Processing Systems, {NeurIPS}}, 2022.

\bibitem[M{\"{u}}ller et~al.(2021)M{\"{u}}ller, Balunovic, and
  Vechev]{muller2021certify}
Mark~Niklas M{\"{u}}ller, Mislav Balunovic, and Martin~T. Vechev.
\newblock Certify or predict: Boosting certified robustness with compositional
  architectures.
\newblock In \emph{{ICLR}}. OpenReview.net, 2021.

\bibitem[Neyman and Pearson(1933)]{neyman1933ix}
Jerzy Neyman and Egon~Sharpe Pearson.
\newblock Ix. on the problem of the most efficient tests of statistical
  hypotheses.
\newblock \emph{Philosophical Transactions of the Royal Society of London.
  Series A, Containing Papers of a Mathematical or Physical Character},
  231\penalty0 (694-706):\penalty0 289--337, 1933.

\bibitem[Raghunathan et~al.(2019)Raghunathan, Xie, Yang, Duchi, and
  Liang]{raghunathan2019adversarial}
Aditi Raghunathan, Sang~Michael Xie, Fanny Yang, John~C. Duchi, and Percy
  Liang.
\newblock Adversarial training can hurt generalization.
\newblock \emph{CoRR}, abs/1906.06032, 2019.

\bibitem[Salman et~al.(2020)Salman, Sun, Yang, Kapoor, and
  Kolter]{salman2020denoised}
Hadi Salman, Mingjie Sun, Greg Yang, Ashish Kapoor, and J.~Zico Kolter.
\newblock Denoised smoothing: {A} provable defense for pretrained classifiers.
\newblock In \emph{Advances in Neural Information Processing Systems,
  {NeurIPS}}, 2020.

\bibitem[Scholten et~al.(2022)Scholten, Schuchardt, Geisler, Bojchevski, and
  G{\"u}nnemann]{scholten2022randomized}
Yan Scholten, Jan Schuchardt, Simon Geisler, Aleksandar Bojchevski, and Stephan
  G{\"u}nnemann.
\newblock Randomized message-interception smoothing: Gray-box certificates for
  graph neural networks.
\newblock In \emph{Advances in Neural Information Processing Systems,
  {NeurIPS}}, 2022.

\bibitem[Schuchardt and G{\"u}nnemann(2022)]{schuchardt2022invariance}
Jan Schuchardt and Stephan G{\"u}nnemann.
\newblock Invariance-aware randomized smoothing certificates.
\newblock In \emph{Advances in Neural Information Processing Systems,
  {NeurIPS}}, 2022.

\bibitem[Schuchardt et~al.(2021)Schuchardt, Gasteiger, Bojchevski, and
  G{\"u}nnemann]{schuchardt2021collective}
Jan Schuchardt, Johannes Gasteiger, Aleksandar Bojchevski, and Stephan
  G{\"u}nnemann.
\newblock Collective robustness certificates: Exploiting interdependence in
  graph neural networks.
\newblock In \emph{International Conference on Learning Representations}, 2021.

\bibitem[Schuchardt et~al.(2023{\natexlab{a}})Schuchardt, Scholten, and
  G{\"u}nnemann]{schuchardt2023provable}
Jan Schuchardt, Yan Scholten, and Stephan G{\"u}nnemann.
\newblock Provable adversarial robustness for group equivariant tasks: Graphs,
  point clouds, molecules, and more.
\newblock In \emph{Advances in Neural Information Processing Systems,
  {NeurIPS}}, 2023{\natexlab{a}}.

\bibitem[Schuchardt et~al.(2023{\natexlab{b}})Schuchardt, Wollschl{\"{a}}ger,
  Bojchevski, and G{\"{u}}nnemann]{schuchardt2023localized}
Jan Schuchardt, Tom Wollschl{\"{a}}ger, Aleksandar Bojchevski, and Stephan
  G{\"{u}}nnemann.
\newblock Localized randomized smoothing for collective robustness
  certification.
\newblock In \emph{{ICLR}}. OpenReview.net, 2023{\natexlab{b}}.

\bibitem[Shchur et~al.(2018)Shchur, Mumme, Bojchevski, and
  G{\"u}nnemann]{shchur2018pitfalls}
Oleksandr Shchur, Maximilian Mumme, Aleksandar Bojchevski, and Stephan
  G{\"u}nnemann.
\newblock Pitfalls of graph neural network evaluation.
\newblock \emph{arXiv preprint arXiv:1811.05868}, 2018.

\bibitem[Su et~al.(2019)Su, Vargas, and Sakurai]{su2019one}
Jiawei Su, Danilo~Vasconcellos Vargas, and Kouichi Sakurai.
\newblock One pixel attack for fooling deep neural networks.
\newblock \emph{{IEEE} Trans. Evol. Comput.}, 23\penalty0 (5):\penalty0
  828--841, 2019.

\bibitem[S{\'{u}}ken{\'{\i}}k et~al.(2022)S{\'{u}}ken{\'{\i}}k, Kuvshinov, and
  G{\"{u}}nnemann]{sukenik2022intriguing}
Peter S{\'{u}}ken{\'{\i}}k, Aleksei Kuvshinov, and Stephan G{\"{u}}nnemann.
\newblock Intriguing properties of input-dependent randomized smoothing.
\newblock In \emph{{ICML}}, volume 162 of \emph{Proceedings of Machine Learning
  Research}, pages 20697--20743. {PMLR}, 2022.

\bibitem[Szegedy et~al.(2014)Szegedy, Zaremba, Sutskever, Bruna, Erhan,
  Goodfellow, and Fergus]{szegedy2014intriguing}
Christian Szegedy, Wojciech Zaremba, Ilya Sutskever, Joan Bruna, Dumitru Erhan,
  Ian~J. Goodfellow, and Rob Fergus.
\newblock Intriguing properties of neural networks.
\newblock In \emph{{ICLR} (Poster)}, 2014.

\bibitem[Tocher(1950)]{tocher1950extension}
Keith~D Tocher.
\newblock Extension of the neyman-pearson theory of tests to discontinuous
  variates.
\newblock \emph{Biometrika}, 37\penalty0 (1/2):\penalty0 130--144, 1950.

\bibitem[Tsipras et~al.(2019)Tsipras, Santurkar, Engstrom, Turner, and
  Madry]{tsipras2019robustness}
Dimitris Tsipras, Shibani Santurkar, Logan Engstrom, Alexander Turner, and
  Aleksander Madry.
\newblock Robustness may be at odds with accuracy.
\newblock In \emph{{ICLR} (Poster)}. OpenReview.net, 2019.

\bibitem[Velickovic et~al.(2018)Velickovic, Cucurull, Casanova, Romero,
  Li{\`{o}}, and Bengio]{velivckovic2018graph}
Petar Velickovic, Guillem Cucurull, Arantxa Casanova, Adriana Romero, Pietro
  Li{\`{o}}, and Yoshua Bengio.
\newblock Graph attention networks.
\newblock In \emph{International Conference on Learning Representations}, 2018.

\bibitem[Vor{\'{a}}cek and Hein(2023)]{voracek23improving}
V{\'{a}}clav Vor{\'{a}}cek and Matthias Hein.
\newblock Improving l1-certified robustness via randomized smoothing by
  leveraging box constraints.
\newblock In \emph{{ICML}}, volume 202 of \emph{Proceedings of Machine Learning
  Research}, pages 35198--35222. {PMLR}, 2023.

\bibitem[Wang et~al.(2021)Wang, Jia, Cao, and Gong]{wang2021certified}
Binghui Wang, Jinyuan Jia, Xiaoyu Cao, and Neil~Zhenqiang Gong.
\newblock Certified robustness of graph neural networks against adversarial
  structural perturbation.
\newblock In \emph{{KDD}}, pages 1645--1653. {ACM}, 2021.

\bibitem[Wu et~al.(2021)Wu, Bojchevski, Kuvshinov, and
  G{\"{u}}nnemann]{wu2021completing}
Yihan Wu, Aleksandar Bojchevski, Aleksei Kuvshinov, and Stephan
  G{\"{u}}nnemann.
\newblock Completing the picture: Randomized smoothing suffers from the curse
  of dimensionality for a large family of distributions.
\newblock In \emph{{AISTATS}}, volume 130 of \emph{Proceedings of Machine
  Learning Research}, pages 3763--3771. {PMLR}, 2021.

\bibitem[Xie et~al.(2020)Xie, Tan, Gong, Wang, Yuille, and
  Le]{xie2020adversarial}
Cihang Xie, Mingxing Tan, Boqing Gong, Jiang Wang, Alan~L. Yuille, and Quoc~V.
  Le.
\newblock Adversarial examples improve image recognition.
\newblock In \emph{{CVPR}}, pages 816--825. Computer Vision Foundation /
  {IEEE}, 2020.

\bibitem[Yang et~al.(2020{\natexlab{a}})Yang, Duan, Hu, Salman, Razenshteyn,
  and Li]{yang2020randomized}
Greg Yang, Tony Duan, J.~Edward Hu, Hadi Salman, Ilya~P. Razenshteyn, and Jerry
  Li.
\newblock Randomized smoothing of all shapes and sizes.
\newblock In \emph{{ICML}}, volume 119 of \emph{Proceedings of Machine Learning
  Research}, pages 10693--10705. {PMLR}, 2020{\natexlab{a}}.

\bibitem[Yang et~al.(2020{\natexlab{b}})Yang, Rashtchian, Zhang, Salakhutdinov,
  and Chaudhuri]{yang2020closer}
Yao{-}Yuan Yang, Cyrus Rashtchian, Hongyang Zhang, Ruslan Salakhutdinov, and
  Kamalika Chaudhuri.
\newblock A closer look at accuracy vs. robustness.
\newblock In \emph{Advances in Neural Information Processing Systems,
  {NeurIPS}}, 2020{\natexlab{b}}.

\bibitem[Zhang et~al.(2019)Zhang, Yu, Jiao, Xing, Ghaoui, and
  Jordan]{zhang2019theoretically}
Hongyang Zhang, Yaodong Yu, Jiantao Jiao, Eric~P. Xing, Laurent~El Ghaoui, and
  Michael~I. Jordan.
\newblock Theoretically principled trade-off between robustness and accuracy.
\newblock In \emph{{ICML}}, volume~97 of \emph{Proceedings of Machine Learning
  Research}, pages 7472--7482. {PMLR}, 2019.

\bibitem[Z{\"u}gner and G{\"u}nnemann(2019)]{zuegner2019certifiable}
Daniel Z{\"u}gner and Stephan G{\"u}nnemann.
\newblock Certifiable robustness and robust training for graph convolutional
  networks.
\newblock In \emph{Proceedings of the 25th ACM SIGKDD International Conference
  on Knowledge Discovery \& Data Mining}, 2019.

\bibitem[Z{\"u}gner and G{\"u}nnemann(2020)]{zugner2020certifiable}
Daniel Z{\"u}gner and Stephan G{\"u}nnemann.
\newblock Certifiable robustness of graph convolutional networks under
  structure perturbations.
\newblock In \emph{Proceedings of the 26th ACM SIGKDD International Conference
  on Knowledge Discovery \& Data Mining}, pages 1656--1665, 2020.

\end{thebibliography}

\appendix
\section{Additional experiments and results}\label{app:results}
In the following we provide additional results for our experiments with hierarchical randomized smoothing for the tasks of node and image classification.

\begin{figure}[!ht]


  \centering
  \vspace{-0.4cm}
  \begin{minipage}{0.5\textwidth}
    \centering
  \tikzsetnextfilename{figures/fig_GAT_cora_ml_1_0_20.pgf}%
  \begin{tikzpicture}%
  \node[inner sep=0pt] {\input{figures/fig_GAT_cora_ml_1_0_20.pgf}};
  \end{tikzpicture}
  \end{minipage}\hfill%
  \begin{minipage}{0.5\textwidth}
    \centering
  \tikzsetnextfilename{figures/fig_GAT_cora_ml_1_0_40.pgf}%
  \begin{tikzpicture}%
  \node[inner sep=0pt] {\input{figures/fig_GAT_cora_ml_1_0_40.pgf}};
  \end{tikzpicture}
  \end{minipage}\hfill%
  \vspace{-0.2cm}
  \caption{Discrete hierarchical smoothing significantly extends the Pareto-front w.r.t.\ robustness-accuracy (smoothed GAT on Cora-ML). Left: \(r \!=\! 1, r_a=0, r_d = 20\). Right: \(r = 1, r_a=0, r_d = 40\).}\label{add:1}
\end{figure}

\begin{figure}[!ht]
  \centering
  \vspace{-0.4cm}
  \begin{minipage}{0.5\textwidth}
    \centering
  \tikzsetnextfilename{figures/fig_GAT_cora_ml_1_5_0.pgf}%
  \begin{tikzpicture}%
  \node[inner sep=0pt] {\input{figures/fig_GAT_cora_ml_1_5_0.pgf}};
  \end{tikzpicture}
  \end{minipage}\hfill%
  \begin{minipage}{0.5\textwidth}
    \centering
  \tikzsetnextfilename{figures/fig_GAT_cora_ml_1_10_0.pgf}%
  \begin{tikzpicture}%
  \node[inner sep=0pt] {\input{figures/fig_GAT_cora_ml_1_10_0.pgf}};
  \end{tikzpicture}
  \end{minipage}\hfill%
  \vspace{-0.2cm}
  \caption{Discrete hierarchical smoothing significantly extends the Pareto-front w.r.t.\ robustness-accuracy (smoothed GAT on Cora-ML). Left: \(r = 1, r_a=5, r_d = 0\). Right: \(r = 1, r_a=10, r_d = 0\).}\label{add:2}
\end{figure}

\begin{figure}[!ht]
  \centering
  \vspace{-0.4cm}
  \begin{minipage}{0.5\textwidth}
    \centering
  \tikzsetnextfilename{figures/fig_GAT_cora_ml_1_5_15.pgf}%
  \begin{tikzpicture}%
  \node[inner sep=0pt] {\input{figures/fig_GAT_cora_ml_1_5_15.pgf}};
  \end{tikzpicture}
  \end{minipage}\hfill%
  \begin{minipage}{0.5\textwidth}
    \centering
  \tikzsetnextfilename{figures/fig_GAT_cora_ml_2_0_40.pgf}%
  \begin{tikzpicture}%
  \node[inner sep=0pt] {\input{figures/fig_GAT_cora_ml_2_0_40.pgf}};
  \end{tikzpicture}
  \end{minipage}\hfill%
  \vspace{-0.2cm}
  \caption{Discrete hierarchical smoothing significantly extends the Pareto-front w.r.t.\ robustness-accuracy (smoothed GAT on Cora-ML). Left: \(r \!=\! 1, r_a=5, r_d = 15\). Right: \(r = 2, r_a=0, r_d = 40\).}\label{add:3}
\end{figure}

\begin{figure}[!ht]
  \centering
  \vspace{-0.4cm}
  \begin{minipage}{0.5\textwidth}
    \centering
  \tikzsetnextfilename{figures/fig_ResNet50_CIFAR10_2_0.4.pgf}%
  \begin{tikzpicture}%
  \node[inner sep=0pt] {\input{figures/fig_ResNet50_CIFAR10_2_0.4.pgf}};
  \end{tikzpicture}
  \end{minipage}\hfill%
  \begin{minipage}{0.5\textwidth}
    \centering
  \tikzsetnextfilename{figures/fig_ResNet50_CIFAR10_3_0.3.pgf}%
  \begin{tikzpicture}%
  \node[inner sep=0pt] {\input{figures/fig_ResNet50_CIFAR10_3_0.3.pgf}};
  \end{tikzpicture}
  \end{minipage}\hfill%
  \vspace{-0.2cm}
  \caption{Continuous hierarchical smoothing significantly extends the Pareto-front w.r.t.\ robustness-accuracy (smoothed ResNet50 on CIFAR10). Left: \(r = 2, \epsilon = 0.4\). Right: \(r = 3, \epsilon=0.3\).}\label{add:4}
\end{figure}

\begin{figure}[!ht]
  \centering
  \vspace{-0.4cm}
  \begin{minipage}{0.5\textwidth}
    \centering
  \tikzsetnextfilename{figures/fig_ResNet50_CIFAR10_3_0.35.pgf}%
  \begin{tikzpicture}%
  \node[inner sep=0pt] {\input{figures/fig_ResNet50_CIFAR10_3_0.35.pgf}};
  \end{tikzpicture}
  \end{minipage}\hfill%
  \begin{minipage}{0.5\textwidth}
    \centering
  \tikzsetnextfilename{figures/fig_ResNet50_CIFAR10_4_0.4.pgf}%
  \begin{tikzpicture}%
  \node[inner sep=0pt] {\input{figures/fig_ResNet50_CIFAR10_4_0.4.pgf}};
  \end{tikzpicture}
  \end{minipage}\hfill%
  \vspace{-0.3cm}
  \caption{Continuous hierarchical smoothing significantly extends the Pareto-front w.r.t.\ robustness-accuracy (smoothed ResNet50 on CIFAR10). Left: \(r = 3, \epsilon=0.35\). Right: \(r = 4, \epsilon=0.4\).}\label{add:5}
\end{figure}

\begin{figure}[!ht]
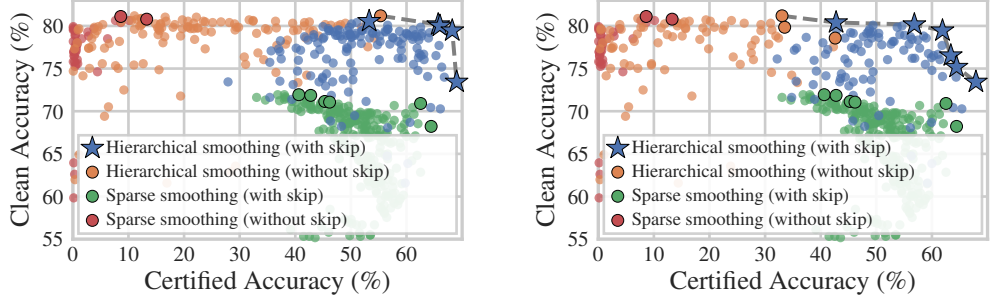

  \centering
  \vspace{-0.4cm}
  \begin{minipage}{0.5\textwidth}
    \centering
  \tikzsetnextfilename{figures/fig_GAT_cora_ml_skip_2_0_70.pgf}%
  \begin{tikzpicture}%
  \node[inner sep=0pt] {\input{figures/fig_GAT_cora_ml_skip_2_0_70.pgf}};
  \end{tikzpicture}
  \end{minipage}\hfill%
  \begin{minipage}{0.5\textwidth}
    \centering
  \tikzsetnextfilename{figures/fig_GAT_cora_ml_skip_3_0_70.pgf}%
  \begin{tikzpicture}%
  \node[inner sep=0pt] {\input{figures/fig_GAT_cora_ml_skip_3_0_70.pgf}};
  \end{tikzpicture}
  \end{minipage}\hfill%
  \vspace{-0.3cm}
  \caption{Skip-connections for hierarchical and sparse smoothing significantly extend the Pareto-front (smoothed GAT on Cora-ML). Left: \(r = 2, r_a=0, r_d = 70\). Right: \(r = 3, r_a=0, r_d = 70\).}\label{add:6}
\end{figure}

\begin{figure}[!ht]
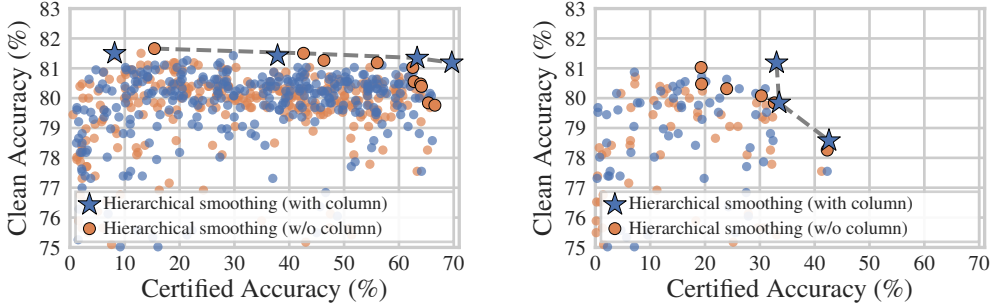

  \centering
  \vspace{-0.4cm}
  \begin{minipage}{0.5\textwidth}
    \centering
  \tikzsetnextfilename{figures/fig_GAT_cora_ml_column_1_0_40.pgf}%
  \begin{tikzpicture}%
  \node[inner sep=0pt] {\input{figures/fig_GAT_cora_ml_column_1_0_40.pgf}};
  \end{tikzpicture}
  \end{minipage}\hfill%
  \begin{minipage}{0.5\textwidth}
    \centering
  \tikzsetnextfilename{figures/fig_GAT_cora_ml_column_3_0_70.pgf}%
  \begin{tikzpicture}%
  \node[inner sep=0pt] {\input{figures/fig_GAT_cora_ml_column_3_0_70.pgf}};
  \end{tikzpicture}
  \end{minipage}\hfill%
  \vspace{-0.3cm}
  \caption{Hierarchical smoothing with and without appended column  (smoothed GAT on Cora-ML). Left: \(r = 1, r_a=0, r_d = 40\). Right: \(r = 3, r_a=0, r_d = 70\).}\label{add:7}
\end{figure}

\begin{figure}[!ht]
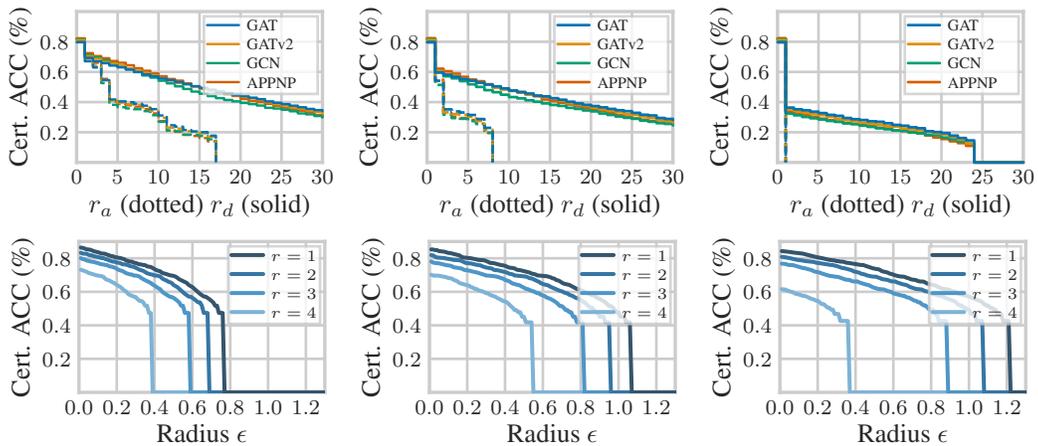
%
    \centering%
    \begin{minipage}{0.333\textwidth}
      \centering
  \tikzsetnextfilename{figures/fig_archs_1.pgf}%
  \begin{tikzpicture}%
  \node[inner sep=0pt] {\input{figures/fig_archs_1.pgf}};
  \end{tikzpicture}

    \end{minipage}\hfill%
    \begin{minipage}{0.333\textwidth}
      \centering
  \tikzsetnextfilename{figures/fig_archs_2.pgf}%
  \begin{tikzpicture}%
  \node[inner sep=0pt] {\input{figures/fig_archs_2.pgf}};
  \end{tikzpicture}

    \end{minipage}\hfill%
    \begin{minipage}{0.333\textwidth}
      \centering
  \tikzsetnextfilename{figures/fig_archs_3.pgf}%
  \begin{tikzpicture}%
  \node[inner sep=0pt] {\input{figures/fig_archs_3.pgf}};
  \end{tikzpicture}

    \end{minipage}%
    \vspace{-0.4cm}
    \begin{minipage}{0.333\textwidth}
      \centering
  \tikzsetnextfilename{figures/fig_ResNet50_CIFAR10_150_0.25.pgf}%
  \begin{tikzpicture}%
  \node[inner sep=0pt] {\input{figures/fig_ResNet50_CIFAR10_150_0.25.pgf}};
  \end{tikzpicture}

    \end{minipage}\hfill%
    \begin{minipage}{0.333\textwidth}
      \centering
  \tikzsetnextfilename{figures/fig_ResNet50_CIFAR10_150_0.35.pgf}%
  \begin{tikzpicture}%
  \node[inner sep=0pt] {\input{figures/fig_ResNet50_CIFAR10_150_0.35.pgf}};
  \end{tikzpicture}

    \end{minipage}\hfill%
    \begin{minipage}{0.333\textwidth}
      \centering
  \tikzsetnextfilename{figures/fig_ResNet50_CIFAR10_160_0.4.pgf}%
  \begin{tikzpicture}%
  \node[inner sep=0pt] {\input{figures/fig_ResNet50_CIFAR10_160_0.4.pgf}};
  \end{tikzpicture}

    \end{minipage}
    \vspace{-0.3cm}
    \caption{Upper row: Smoothed GNNs on Cora-ML (\(p=0.8\),	\(p_a=0.006\), \(p_d=0.88\)) for \(r=1,2,3\) from left to right. Lower row: Smoothed ResNet50 on CIFAR10 for different radii \(r\) (Left: \(k=150, \sigma=0.25\). Middle: \(k=150,\sigma=0.35\). Right: \(k=160, \sigma=0.4\).)}\label{add:archs}
  \end{figure}

  \begin{figure}[!ht]
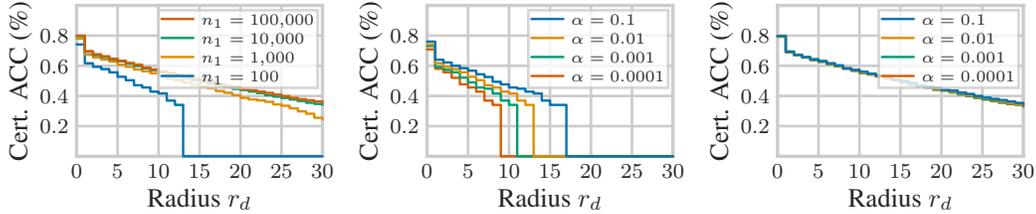
%
    \centering%
    \begin{minipage}{0.333\textwidth}
      \centering
  \tikzsetnextfilename{figures/fig_archs_1_0.01.pgf}%
  \begin{tikzpicture}%
  \node[inner sep=0pt] {\input{figures/fig_archs_1_0.01.pgf}};
  \end{tikzpicture}

    \end{minipage}\hfill%
    \begin{minipage}{0.333\textwidth}
      \centering
  \tikzsetnextfilename{figures/fig_archs_1_all_at_100.pgf}%
  \begin{tikzpicture}%
  \node[inner sep=0pt] {\input{figures/fig_archs_1_all_at_100.pgf}};
  \end{tikzpicture}

    \end{minipage}\hfill%
    \begin{minipage}{0.333\textwidth}
      \centering
  \tikzsetnextfilename{figures/fig_archs_1_all_at_10k.pgf}%
  \begin{tikzpicture}%
  \node[inner sep=0pt] {\input{figures/fig_archs_1_all_at_10k.pgf}};
  \end{tikzpicture}

    \end{minipage}
    \vspace{-0.2cm}
    \caption{Smoothed GAT models on CoraML (\(r=1,r_a=0\)). Different number of Monte-Carlo samples and the effect on the certifiable robustness. Left: \(\alpha=0.01\). Middle: \(n_1=100\). Right: \(n_1=10{,}000\). Different clean accuracies at \(r_d=0\) are due to abstained predictions.}\label{add:8}
  \end{figure}

\newpage

For node classification we experimentally observe that smoothed GAT models are more robust on average (\autoref{add:archs} upper row). We therefore decide to show more results for smoothed GAT models in the following. Recall that \(r\) is the number of adversarial nodes, \(r_d\) is the number of adversarial deletions (\(1\rightarrow 0\)), and \(r_a\) the number of adversarial insertions (\(0\rightarrow 1\)).
In \autoref{add:1} we show results for varying \(r_d\) with fixed \(r=1\) and \(r_a=0\). In \autoref{add:2} we further vary \(r_a\) with fixed \(r=1\) and \(r_d=0\). In \autoref{add:3} we show results for \(r=1\), \(r_d>0\) and \(r_a>0\) and separately results for \(r=2\).
For image classification, recall that \(r\) is the number of pixels controlled by the adversary and \(\epsilon\) the perturbation strength in \(\ell_2\)-distance across channels. In \autoref{add:4} and \autoref{add:5} we provide additional results for different radii \(r\) and \(\epsilon\).

Clearly, hierarchical randomized smoothing is significantly expanding the Pareto-front w.r.t. the robustness-accuracy trade-off in all of these settings.
Since our certificates generalize ablation and additive noise certificates into a new common framework, our method performs either better or on-par with the existing methods under our threat model. Note that in special threat models where e.g.\ adversaries have access to all nodes (\(r=N\)), our certificates correspond to standard randomized smoothing and consequently perform on-par with existing methods. 

In \autoref{add:archs} (upper row) we compare different GNN architectures for the task of node classification and generally observe that smoothed GAT models are more robust on average.
In \autoref{add:archs} (lower row) we provide further results for different smoothing parameters when certifying the robustness of smoothed ResNet50 models on CIFAR10. Note the different effects on the certifiable robustness at different radii \(r\) when increasing \(\sigma\) and \(k\) (increasing \(k\) corresponds to decreasing \(p\)).

In \autoref{add:6} we study the setting where adversaries do not have access to target nodes in the graph but only to nodes in the first- and second-hop neighborhoods (this setting has been previously studied by \cite{scholten2022randomized}). Such threat models allow us to introduce skip-connections: We can forward the (unsmoothed) features of target nodes separately through the GNN (without graph structure) and add the final representations on the representations obtained with smoothing. In \autoref{add:6} we experimentally observe that such skip-connections can significantly expand the Pareto-front w.r.t.\ robustness and accuracy of sparse and hierarchical smoothing.

In \autoref{add:7} we perform experiments with and without the additional additional column for smoothed GAT models on CoraML. Notably, training the models on the extended matrices and using the additional column at inference can result in better classifiers: As we experimentally observe for graph-structured data, using the additional column significantly extends the Pareto-front w.r.t.\ robustness and accuracy under specific threat models.

In \autoref{add:8} we study the effect of different numbers of Monte-Carlo samples on the certificate strength. We observe that for our setting using \(\alpha=0.01\) and \(n_1=10{,}000\) is sufficient to compare our method to the baselines. Note that in practice one would have to fix \(\alpha\) and \(n_0, n_1\) prior to computing certificates to ensure their statistical validity. %

\textbf{Error bars.}
For the node classification task we run every experiment 5 times with different dataset splits and model initializations and report averaged results. The average standard deviation  is (3\%, 3\%, 4\%, 4\%) in clean accuracy and (3\%, 3\%, 2\%, 3\%) in certified accuracy (across all node classification experiments, hierarchical smoothing only, sparse smoothing only, and ablation smoothing only,~respectively).

\newpage
\section{Full experimental setup (\autoref{sec:exp})}\label{app:hyper}
We provide detailed information on the experimental setup as described in \autoref{sec:exp} to ensure reproducibility of our results.
Note that we also upload our implementation.\footnote{Project page: \url{https://www.cs.cit.tum.de/daml/hierarchical-smoothing}}

\subsection{Experimental setup for the task of node classification}\label{setup:nodeclass}

\textbf{Datasets and models.}
We implement smoothed classifiers for four architectures with two message-passing layers: Graph convolutional networks (GCN) \citep{kipf2017semi}, graph attention networks (GAT and GATv2) \citep{velivckovic2018graph,brody2021how}, and APPNP \citep{klicpera2020predict}. We train models on the citation dataset Cora-ML \citep{bojchevski2018deep,mccallum2000automating} consisting of 2,810 nodes with 2,879 \textit{binary} features, 7,981 edges and 7 classes. %
We follow the standard procedure in graph machine learning \citep{shchur2018pitfalls} and preprocess graphs into undirected graphs containing only their largest connected component. 

\textbf{Model details.}
We implement all GNN models for two message-passing layers.
We use 64 hidden channels for GCN \citep{kipf2017semi} and APPNP \citep{klicpera2020predict}, and 8 attention heads for 8 hidden channels
for GAT and GATv2 \citep{velivckovic2018graph,brody2021how}.
For APPNP we further set the hyperparameter k\_hops to \(10\) and the teleport probability to \(0.15\).

\textbf{Node classification task.}
For GNNs we follow the setup in \citep{scholten2022randomized}.
We evaluate our certificates for the task of node classification in semi-supervised \textit{inductive} learning settings: We label 20 randomly drawn nodes per class for training and validation, and 10\% of the nodes for testing. We use the labelled training nodes and all remaining unlabeled nodes as training graph, and insert the validation and test nodes into the graph at validation and test time, respectively. We train our models on training nodes, tune them on validation nodes and compute certificates for all test nodes. 
Note that transductive settings come with shortcomings when evaluating robustness certificates \citep{scholten2022randomized}, or in adversarial robustness on graph-structured data in general \citep{gosch2023adversarial}.

\textbf{Training details.}
We train models full-batch using Adam
(learning rate = 0.001,
 \(\beta_1  \!=\! 0.9\),
 \(\beta_2  \!=\! 0.999\),
 \(\epsilon \!=\! e{-08}\),
 weight decay = \(5e{-04}\))
for a maximum of \(1{,}000\) epochs with cross-entropy loss. Note that we implement early stopping after 50 epochs. We also use a dropout of \(0.5\) on the hidden node representations after the first graph convolution for GCN and GAT, and additionally a dropout of \(0.5\) on the attention coefficients for GAT and GATv2.

\textbf{Training-time smoothing parameters.}
During training we sample one smoothed feature matrix (i.e.\ a matrix with (partially) added noise) in each epoch and then we normally proceed training with the noised feature matrix.
Unless stated differently we also train our models on the higher-dimensional matrices, i.e.\ we append the additional column to the matrices.
Note that we use the same smoothing parameters during training as for certification.

\textbf{Experimental evaluation.}
We compare hierarchical smoothing to the additive noise baseline by \citet{bojchevski2020efficient} and the ablation baseline \citet{scholten2022randomized} as follows:
We run \(1{,}000\) experiments for each method by (1) randomly drawing specific smoothing parameters (\(p\) and \(p_+,p_-\), depending on the method), (2) training \(5\) smoothed classifiers with different seeds for the datasplit and model initialization, (3) computing robustness certificates under the corresponding smoothing distribution, and (4) averaging resulting clean and certified~accuracies. 

\textbf{Exploring the Pareto-front.}
In our exhaustive experiments we randomly explore the entire space of possible parameter combinations. The reason for this is to demonstrate Pareto-optimality, i.e. to find all ranges of smoothing parameters for which we can offer both better robustness and accuracy. Note that in practical settings one would have to conduct significantly less experiments to find suitable parameters, and the task of efficiently finding the optimal parameters is a research direction orthogonal to deriving robustness certificates.

\textbf{Smoothing parameters for node classification.}
We randomly sample from the following space of possible smoothing parameters (we draw from a uniform distribution over these intervals):
\begin{itemize}[noitemsep]
    \item Discrete hierarchical smoothing: \(p\in [0.51, 0.993]\), \(p_+ \in [ 0, 0.05 ]\), \(p_- \in [ 0.5, 1] \), 
    \item Sparse smoothing (additive noise baseline):  \(p_+ \in [ 0, 0.05 ]\), \(p_- \in [ 0.5, 1] \)
    \item Interception smoothing (ablation smoothing baseline): \(p\in [0.51, 0.993]\)
\end{itemize}
Note that the search space of hierarchical smoothing is the cross product of the search spaces of the two baselines (sparse smoothing \citep{bojchevski2020efficient} and interception smoothing \citep{scholten2022randomized}). Notably, we still find that hierarchical smoothing is superior despite running 1{,}000 experiments equally for all three methods (see \autoref{sec:exp}).
Note that we run 1{,}000 experiments equally for all methods to establish an equal chance for random outliers in accuracy. Also note that for clarity of the plots we only include points with certified accuracies of larger than 0\%.

\subsection{Experimental setup for the task of image classification}

\textbf{Dataset and model architecture.}
For the task of image classification we follow the experimental setup in \citep{levine2020robustness} and train smoothed ResNet50 \citep{he2016resnet} on CIFAR10 \citep{krizhevsky2009learning}. CIFAR10 consists of 50{,}000 training images and 10{,}000 test images with 10 classes (6{,}000 images per class). The colour images are of size 32x32 and have 3 channels. Note that for runtime reasons we only certify a subset of the entire test set: For all experiments we report clean and certified accuracy for the same subset consisting of 1{,}000 randomly sampled test images.

\textbf{Training details.}
We train the ResNet50 model with stochastic gradient descent (learning rate 0.01, momentum 0.9, weight decay 5e-4) for 400 epochs using a batch-size of 128. We also deploy a cosine learning rate scheduler \citep{loshchilov2017sgdr}. 
At inference we use a batch size of~300.
We append the additional indicator channel (see \autoref{fig1}) during training and certification. We train the smoothed classifiers with the same noise level as during certification.

\textbf{Image preprocessing and smoothing.}
We augment the training set with random horizontal flips and random crops (with a padding of 4). We normalize all images using the channel-wise dataset mean [0.4914, 0.4822, 0.4465] and standard deviation [0.2023, 0.1994, 0.2010].
For the ablation smoothing baseline we follow the implementation of \citet{levine2020robustness} and first normalize the images before appending three more channels to the images. Specifically, for each channel \(x\) we append the channel \(1-x\) (resulting in 6 channels in total). Finally, we set all channels of ablated pixels to 0. We train and certify the extended images. See \citet{levine2020robustness} for a detailed setup of their method. For hierarchical smoothing and the Gaussian smoothing baseline \citep{cohen2019certified} we first add the Gaussian noise and then normalize the images. Note that for hierarchical smoothing we extend the images with the additional indicator channel (\autoref{fig1}), resulting in 4 channels in total.

\textbf{Smoothing parameters for image classification.}
Note that  the ablation baseline \citep{levine2020robustness} draws \(k\) pixels to ablate from a uniform distribution \(\mathcal{U}(k,N)\) over all possible subsets with \(k\) pixels of an image with \(N\) total pixels. This is in contrast to our approach, which selects pixels to add noise to with a probability of \(p\).
For the comparison  we choose the pixel selection probability \(p = 1-\frac{k}{N}\) for every \(k\), where \(N\) is the number of pixels in an image (i.e.\ we select \(k\) pixels in expectation).
This way, increasing \(k\) corresponds to decreasing \(p\) and vice versa.
We run experiments for every 10th \(k\) from 0 to 500. Note that for \(k\) larger that 512 we cannot obtain certificates with either method since then \(\Delta>0.5\). For Gaussian smoothing we run experiments for every 100th \(\sigma\) in the interval \([0,1]\), and for hierarchical smoothing for every 20th \(\sigma\) in \([0,1]\).  Overall, we run 50 experiments for the ablation baseline \citep{levine2020robustness}, 100 experiments for the Gaussian noise baseline \citep{cohen2019certified}, and 1{,}000  experiments for hierarchical smoothing.

\newpage
\subsection{Further experimental details}

\textbf{Certification parameters.}
All presented plots for graphs show results for the tighter multi-class certificates, for images we compute the binary-class certificates.
Our certificates are probabilistic \citep{cohen2019certified}, and we use Monte-Carlo samples to estimate the smoothed classifiers with Clopper-Pearson confidence intervals for $\alpha=0.01$ (thus our certificates hold with high probability of $99\%$). We also apply Bonferroni correction to ensure that the bounds hold simultaneously with confidence level \(1-\alpha\).
We follow the procedure of \citet{cohen2019certified} and, if not stated differently, use \(n_0=1{,}000\) Monte-Carlo samples for estimating the majority class at test time and \(n_1=10{,}000\) samples for the certification at test time.

\textbf{Reproducibility.}
To ensure reproducibility we further use random seeds for all random processes including for model training, for drawing smoothing parameters and for drawing Monte-Carlo samples. We publish the source code including reproducibility instructions and all random seeds.

\textbf{Computational resources.}
All experiments were performed on a Xeon E5-2630 v4 CPU with a NVIDIA GTX 1080TI GPU. In the following we report experiment runtimes (note that for node classification we compute certificates for 5 different smoothed models in each experiment): 
Regarding node classification experiments, the average runtime of hierarchical smoothing for the \(1{,}000\) experiments is \(1.6\) hours.
For the additive noise baseline \(1.5\) hours, and for the ablation baseline \(0.92\) hours.
For image classification we only train a single classifier. The overall training and certification process takes (on average) \(9.9\) hours for hierarchical smoothing,  \(8.6\) hours for Gaussian smoothing, and \(13.3\) hours for ablation smoothing. %

\textbf{Third-party assets.}
For the sparse smoothing baseline we use the implementation of \citet{bojchevski2020efficient}.%
\footnote{\url{https://github.com/abojchevski/sparse\_smoothing}}
For the ablation smoothing baseline for GNNs we use the implementation of \citet{scholten2022randomized}.%
\footnote{\url{https://github.com/yascho/interception_smoothing}}
We implement all GNNs using PyTorch Geometric \citep{fey2019fast}.%
\footnote{\url{https://pytorch-geometric.readthedocs.io}}
The graph datasets are publicly available and can be downloaded also e.g.\ using PyTorch Geometric. The CIFAR10 dataset is also publicly available, for example via the torchvision library.\footnote{\url{https://pytorch.org/vision/stable/index.html}}

\newpage
\section{Robustness certificates for hierarchical smoothing (Proofs of \autoref{sec:certificates})}\label{appA}

In the following we show the statements of \autoref{sec:certificates} about hierarchical smoothing certificates and correctness of the following certification procedure:

\SetKwComment{Comment}{/* }{*/}
\RestyleAlgo{ruled}
\SetKwInput{KwData}{Input}
\SetKwInput{KwResult}{Return}
\begin{algorithm}[H]
    \SetAlgoLined
    \KwData{$f$, $\bm{X}$, $p$, Lower-level smoothing distribution $\mu$, Radius ($r$, $\epsilon$), $n_0$, $n_1$, $1-\alpha$}
    \texttt{counts0} $\gets$ \texttt{SampleUnderNoise}($f$, $\bm{X}$, $p$, $\mu$, $n_0$) \;
    \texttt{counts1} $\gets$ \texttt{SampleUnderNoise}($f$, $\bm{X}$, $p$, $\mu$, $n_1$) \;
    $y_A$ $\gets$ top index in \texttt{counts0} \;
    $\underline{p_A}$ $\gets$ \texttt{ConfidenceBound(counts1[$y_A$], $n_1$, $1-\alpha$)} \;
    $\Delta \gets 1-p^r$ \;

    $\underline{\tilde{p}_A}$ $\gets$ \texttt{LowerLevelWorstCaseLowerBound($\mu$, $\epsilon$, $\frac{\underline{p_A}-\Delta}{1-\Delta}$)} \; 

    \If{$\underline{\tilde{p}_A} \cdot (1-\Delta) > \frac{1}{2}$}{
        \KwResult{$y_A$ certified}
    }
    \KwResult{ABSTAIN}
\caption{Binary-class Robustness Certificates for Hierarchical Randomized Smoothing}
\end{algorithm}
Here, \texttt{LowerLevelWorstCaseLowerBound(\(\mu\),\(\epsilon\), \(\underline{p_{A}}\))} computes the existing certificate, specifically the smallest \(\underline{p_{\tilde{\bm{X}},y_A}}\) over the entire threat model, e.g. \(\Phi\left(\Phi^{-1}(\underline{p_{A}})-\frac{\epsilon}{\sigma}\right)\) for Gaussian smoothing.
The remaining certification procedure is analogous to the certification in \citep{cohen2019certified}.

\subsection{Proofs of \autoref{sec:certificates}}
First, define the space \(\mathcal{Z}^{N\times (D+1)}\triangleq\mathcal{X}^{N\times D}\times \{0,1\}^N\) and recall the definition of the reachable set \(\mathcal{A} \triangleq \mathcal{A}_1 \cup \mathcal{A}_2\) with
\[ \mathcal{A}_1\triangleq\{
                \bm{Z}\in\mathcal{Z}^{N\times (D+1)}
                \mid 
                \forall i: (\boldsymbol{\tau}_i=0) \Rightarrow (\bm{z}_i=\bm{x}_i|0)
\},\] and
\[\mathcal{A}_2\triangleq\{
    \bm{Z}\in\mathcal{Z}^{N\times (D+1)}
    \mid 
    \forall i: (\boldsymbol{\tau}_i=0) \Rightarrow (\bm{z}_i=\tilde{\bm{x}}_i|0)
\}.\]
Intuitively, \(\mathcal{A}\) contains only those \(\bm{Z}\) that can be sampled from either \(\Psi_{\bm{X}}\) or \(\Psi_{\tilde{\bm{X}}}\) (given the lower-level smoothing distribution has infinite support). Note that these two sets are not necessarily disjoint, so we cannot apply the Neyman-Pearson Lemma for discrete random variables \citep{tocher1950extension} on the upper-level smoothing distribution \(\phi\) yet. The following Proposition therefore provides a partitioning of the space \(\mathcal{Z}^{N\times (D+1)}\):

\regionsProposition{1}{prop:partitioning}{4}
\begin{proof}
    
    We want to show that the four sets above (1) are pairwise disjoint, and (2) partition \(\mathcal{Z}^{N\times (D+1)}\), that is \(\mathcal{R}_0\uplus\mathcal{R}_1\uplus\mathcal{R}_2\uplus\mathcal{R}_3 = \mathcal{Z}^{N\times (D+1)}\).

    Clearly we have \(\mathcal{Z}^{N\times (D+1)} = \mathcal{R}_0\uplus\mathcal{A}\) due to the definition of the region \(\mathcal{R}_0\). Thus we only have to show \(\mathcal{R}_1\uplus\mathcal{R}_2\uplus\mathcal{R}_3 = \mathcal{A}\):

    (1) We show \(\mathcal{R}_1\),\(\mathcal{R}_2\),\(\mathcal{R}_3\) are pairwise disjoint:

    \hfill\begin{minipage}{\textwidth-1cm}
        We have \(\mathcal{R}_2 \cap \mathcal{R}_1 = \emptyset\) and
        \(\mathcal{R}_2 \cap \mathcal{R}_3 = \emptyset\) since in region \(\mathcal{R}_2\) we have \(\boldsymbol{\tau}_i=1\) for all \(i\in\mathcal{C}\), which does not hold for \(\mathcal{R}_2\) or \(\mathcal{R}_3\).

        Moreover, \(\mathcal{R}_1\cap\mathcal{R}_3=\emptyset\) because if \(\bm{Z}\in\mathcal{R}_1\) then \(\bm{Z}\notin\mathcal{R}_3\) (and vice versa) due the to definition of \(\mathcal{A}\) and \(\mathcal{R}_1\subseteq \mathcal{A}_1\), \(\mathcal{R}_3\subseteq \mathcal{A}_3\).
    \end{minipage}%

    (2) We show \(\mathcal{R}_1\cup\mathcal{R}_2\cup\mathcal{R}_3=\mathcal{A}\):

    \hfill\begin{minipage}{\textwidth-1cm}
        ``\(\subseteq\)'': Clearly \(\mathcal{R}_1\subseteq\mathcal{A}\),\(\mathcal{R}_2\subseteq\mathcal{A}\), and \(\mathcal{R}_3\subseteq\mathcal{A}\) by definition of the regions.

        ``\(\supseteq\)'': Consider any \(\bm{Z}\in\mathcal{A} = \mathcal{A}_1 \cup \mathcal{A}_2\). Either (1) all perturbed rows are selected and \(\bm{Z}\in\mathcal{R}_2\), 
        or (2) at least one perturbed row is not selected (\(\exists i : \boldsymbol{\tau}_i=0\)), in which case we either have \(\bm{Z}\in\mathcal{R}_1\) or \(\bm{Z}\in\mathcal{R}_3\), depending on whether  \(\bm{Z}\in\mathcal{A}_1\), \(\bm{Z}\in \mathcal{A}_3\), respectively.
    \end{minipage}

\end{proof}

We further have to derive the probability to sample \(\bm{Z}\) of each region from both \(\Psi_{\bm{X}}\) and \(\Psi_{\tilde{\bm{X}}}\) for applying the Neyman-Pearson Lemma \citep{tocher1950extension} for discrete random variables on the upper-level smoothing distribution \(\phi\) that selects rows before adding noise.
\regprobProposition{2}{prop:partitioningprobabilities}{2}
\begin{proof}
    The probability to select all \(|\mathcal{C}|\) perturbed rows is \(p^r\) since we sample each \(\boldsymbol{\tau}_i\) independently from a Bernoulli: \(\boldsymbol{\tau}_i \stackrel{iid}{\sim} \text{Ber}(p)\).
    Consequently the probability to not select at least one perturbed row is \(1-p^r=\Delta\). The remaining follows from the fact that matrices from region \(\mathcal{R}_1\) cannot be sampled from \(\Psi_{\tilde{\bm{X}}}\) and matrices from 
    \(\mathcal{R}_3\) cannot be sampled from \(\Psi_{\bm{X}}\).
\end{proof}
Note that we need region \(\mathcal{R}_0\) just for a complete partitioning of \(\mathcal{Z}^{N\times (D+1)}\). This region contains essentially all matrices we cannot sample neither from \(\Psi_{\bm{X}}\) nor \(\Psi_{\tilde{\bm{X}}}\). For example, matrices with a row that differs from the same row in \(\bm{X}\) and \(\tilde{\bm{X}}\) without a previous selection of that row cannot be sampled from the two distributions -- simply because we first select rows and then we only add noise to the rows that we selected.
Thus we have  \(\Pr_{\bm{Z}\sim\Psi_{\bm{X}}} [\bm{Z} \in\mathcal{R}_0]=0\) and \(\Pr_{\bm{Z}\sim\Psi_{\tilde{\bm{X}}}} [\bm{Z} \in\mathcal{R}_0]=0\).

Before we derive the lower bound on \(p_{\bm{\tilde{X}},y}\)
we first restate the Neyman-Pearson Lemma as presented in \autoref{sec:pre}:
\nplemma{1}{}{2}
There are different views on \autoref{lemma:nplemma}.
In the context of statistical hypothesis testing, \autoref{lemma:nplemma} states that the likelihood ratio test is the uniformly most powerful test when testing the simple hypothesis \(\mu_{\bm{X}}(\bm{W})\) against \(\mu_{\tilde{\bm{X}}}(\bm{W})\). For such views we refer to \cite{cohen2019certified}.

Now we prove our main theorem:

\firstNpt{1}{thm:mainthm}{1}
\begin{proof}
    We are solving the following optimization problem: 
    \[\underline{p_{\tilde{\bm{X}},y}}\quad\triangleq\quad\min_{h\in\mathbb{H}} \Pr_{\bm{Z}\sim \Psi_{\tilde{\bm{X}}}} [h(\bm{Z})=y] \quad s.t.\quad \Pr_{\bm{Z}\sim \Psi_{\bm{X}}}[h(\bm{Z}) = y] = p_{\bm{X}, y}\]
    We can use the partitioning of \autoref{prop:partitioning} and apply the law of total probability:
    \begin{equation*}
        \begin{split}
        \underline{p_{\bm{\tilde{X}},y}} = \min_{h\in\mathbb{H}}\quad & \sum_{i=0}^{3}\Pr_{\bm{Z}\sim \Psi_{\tilde{\bm{X}}}}[h(\bm{Z})=y \mid \bm{Z}\in\mathcal{R}_i]\cdot\Pr_{\bm{Z}\sim \Psi_{\tilde{\bm{X}}}}[\bm{Z}\in\mathcal{R}_i] \\ 
        \text{s.t.}\quad &
        \sum_{i=0}^{3}\Pr_{\bm{Z}\sim \Psi_{\bm{X}}}[h(\bm{Z})=y \mid \bm{Z}\in\mathcal{R}_i]\cdot\Pr_{\bm{Z}\sim \Psi_{\bm{X}}}[\bm{Z}\in\mathcal{R}_i] = p_{\bm{X},y} \\
        \end{split}
    \end{equation*}
    We apply \autoref{prop:partitioningprobabilities} and cancel out all zero probabilities: 
    \begin{equation*}
        \begin{split}
        \underline{p_{\bm{\tilde{X}},y}} = \min_{h\in\mathbb{H}}\quad & \Pr_{\bm{Z}\sim \Psi_{\tilde{\bm{X}}}}[h(\bm{Z})=y \mid \bm{Z}\in\mathcal{R}_2]\cdot(1-\Delta) 
        +
        \Pr_{\bm{Z}\sim \Psi_{\tilde{\bm{X}}}}[h(\bm{Z})=y \mid \bm{Z}\in\mathcal{R}_3]\cdot\Delta 
        \\ 
        \text{s.t.\quad} &
        \Pr_{\bm{Z}\sim \Psi_{\bm{X}}}[h(\bm{Z})=y \mid \bm{Z}\in\mathcal{R}_2]\cdot(1-\Delta) + \Pr_{\bm{Z}\sim \Psi_{\bm{X}}}[h(\bm{Z})=y \mid \bm{Z}\in\mathcal{R}_1]\cdot\Delta = p_{\bm{X},y} \\
        \end{split}
    \end{equation*}
    This is a minimization problem and to minimize we choose
    \(\Pr_{\bm{Z}\sim \Psi_{\tilde{\bm{X}}}}[h(\bm{Z})=y \mid \bm{Z}\in\mathcal{R}_3]=0\) and \(\Pr_{\bm{Z}\sim \Psi_{\bm{X}}}[h(\bm{Z})=y \mid \bm{Z}\in\mathcal{R}_1]=1\). In other words, in the worst-case all matrices in \(\mathcal{R}_1\) that we never observe around \(\tilde{\bm{X}}\) are correctly classified and the worst-case classifier first uses the budget \(p_{\bm{X},y}\) on region \(\mathcal{R}_1\). 
    
    We obtain:
    \begin{equation*}
        \begin{split}
        \underline{p_{\bm{\tilde{X}},y}} = \min_{h\in\mathbb{H}}\quad & \Pr_{\bm{Z}\sim \Psi_{\tilde{\bm{X}}}}[h(\bm{Z})=y \mid \bm{Z}\in\mathcal{R}_2]\cdot(1-\Delta) 
        \\ 
        \text{s.t.\quad} &
        \Pr_{\bm{Z}\sim \Psi_{\bm{X}}}[h(\bm{Z})=y \mid \bm{Z}\in\mathcal{R}_2]\cdot(1-\Delta) = p_{\bm{X},y} - \Delta \\
        \end{split}
    \end{equation*} 
    Here we can see that for \(p=0\) (i.e. \(\Delta=1\)) we have just \(\underline{p_{\bm{\tilde{X}},y}}=0\). We can assume \(p>0\) in the following.

    We can further rearrange the terms:
    \begin{equation*}
        \begin{split}
        \underline{p_{\bm{\tilde{X}},y}} = (1-\Delta)\cdot\min_{h\in\mathbb{H}}\quad & \Pr_{\bm{Z}\sim \Psi_{\tilde{\bm{X}}}}[h(\bm{Z})=y \mid \bm{Z}\in\mathcal{R}_2]
        \\ 
        \text{s.t.\quad} &
        \Pr_{\bm{Z}\sim \Psi_{\bm{X}}}[h(\bm{Z})=y \mid \bm{Z}\in\mathcal{R}_2] = \frac{p_{\bm{X},y} - \Delta}{1-\Delta} \\
        \end{split}
    \end{equation*} 
    Now we can apply \autoref{lemma:nplemma}:
    \begin{equation}
        \begin{split}
        \underline{p_{\bm{\tilde{X}},y}} = & \Pr_{\bm{Z}\sim \Psi_{\tilde{\bm{X}}}}[\bm{Z}\in S_\kappa \mid \bm{Z}\in\mathcal{R}_2]\cdot (1-\Delta) \\ 
        & \text{with }\kappa\in\mathbb{R}_+\text{ s.t.} \\
        & \Pr_{\bm{Z}\sim \Psi_{\bm{X}}}[\bm{Z}\in S_\kappa\mid \bm{Z}\in\mathcal{R}_2] = \frac{p_{\bm{X},y} - \Delta}{(1-\Delta)}\\
        \end{split}
    \end{equation}
    for \( S_\kappa \triangleq \left\{ \bm{Z} \in \mathcal{Z}^{N\times (D+1)} : \Psi_{\tilde{\bm{X}}}(\bm{Z}) \leq \kappa \cdot \Psi_{\bm{X}}(\bm{Z})\right\} \). Note the difference between \(S_\kappa\) and \(\hat{S}_\kappa\).

    We need to realize that due to independence we have:
    \begin{equation}
        \begin{aligned}
            & \left\{ \bm{Z} \in \mathcal{R}_2: \Psi_{\tilde{\bm{X}}}(\bm{Z}) \leq \kappa \cdot \Psi_{\bm{X}}(\bm{Z})\right\} \\
            = & 
            \left\{ \bm{Z} \in \mathcal{R}_2: \Psi_{\tilde{\bm{X}}}(\bm{W}, \boldsymbol{\tau}) \leq \kappa \cdot \Psi_{\bm{X}}(\bm{W},\boldsymbol{\tau})\right\} \\
            = & 
            \left\{ \bm{Z} \in \mathcal{R}_2: \mu_{\tilde{\bm{X}}}(\bm{W}|\boldsymbol{\tau})\phi(\boldsymbol{\tau}) \leq \kappa \cdot \mu_{\bm{X}}(\bm{W}|\boldsymbol{\tau})\phi(\boldsymbol{\tau})\right\} \\
            = &  
            \left\{ \bm{Z} \in \mathcal{R}_2: \prod_{i=1}^{N}\mu_{\tilde{\bm{x}}_i}(\bm{w}_i|\boldsymbol{\tau}_i) \leq \kappa \cdot \prod_{i=1}^{N}\mu_{\bm{x}_i}(\bm{w}_i|\boldsymbol{\tau}_i) \right\} \\
            \stackrel{(*)}{=} &
            \left\{ \bm{Z} \in \mathcal{R}_2: \prod_{i\in\mathcal{C}}\mu_{\tilde{\bm{x}}_i}(\bm{w}_i) \leq \kappa \cdot \prod_{i\in\mathcal{C}}\mu_{\bm{x}_i}(\bm{w}_i) \right\} \\
            = &
            \left\{ \bm{Z} \in\mathcal{R}_2 : \mu_{\tilde{\bm{X}}_{\mathcal{C}}}(\bm{W}_{\mathcal{C}}) \leq \kappa \cdot \mu_{{\bm{X}}_{\mathcal{C}}}(\bm{W}_{\mathcal{C}}) \right\} \\
        \end{aligned}
    \end{equation}
    where (*) is due to \(\bm{x}_i = \tilde{\bm{x}}_i\) for all \(i\notin\mathcal{C}\) and \(\boldsymbol{\tau}_i=1\) for \(i\in\mathcal{C}\).

    In other words this means that if \(\bm{Z}\in S_\kappa \cap \mathcal{R}_2\) then all \(\bm{Z}'\in\mathcal{R}_2\) that differ only in rows \(i\notin\mathcal{C}\) from \(\bm{Z}\) are also in \(S_\kappa \cap \mathcal{R}_2\) and integrate out (the ``likelihood ratio'' is only determined by rows \(i\in\mathcal{C}\)). 

    It follows that for fixed \(\kappa\in\mathbb{R}_+\) we have
    \[
        \Pr_{\bm{Z}\sim \Psi_{\tilde{\bm{X}}}}[\bm{Z}\in S_\kappa \mid \bm{Z}\in\mathcal{R}_2] 
        = 
        \Pr_{\bm{W}\sim \mu_{\tilde{\bm{X}}_{\mathcal{C}}}}[\bm{W}\in \hat{S}_\kappa]
    \]
    and
    \[
    \Pr_{\bm{Z}\sim \Psi_{\bm{X}}}[\bm{Z}\in S_\kappa\mid \bm{Z}\in\mathcal{R}_2] = \Pr_{\bm{W}\sim \mu_{\bm{X}_{\mathcal{C}}}}[\bm{W}\in \hat{S}_\kappa]
    \]
    for
    \( \hat{S}_\kappa \triangleq \left\{ \bm{W} \in \mathcal{X}^{|\mathcal{C}|\times D} : \mu_{\tilde{\bm{X}}_{\mathcal{C}}}(\bm{W}) \leq \kappa \cdot \mu_{\bm{X}_{\mathcal{C}}}(\bm{W})\right\} \).
    All together we obtain:
    \[
        \underline{p_{\bm{\tilde{X}},y}}  = \Pr_{\bm{W}\sim \mu_{\tilde{\bm{X}}_{\mathcal{C}}}}[\bm{W}\in \hat{S}_\kappa] \cdot (1-\Delta) \quad\text{with }\kappa\in\mathbb{R}_+\text{ s.t.}\quad \Pr_{\bm{W}\sim \mu_{\bm{X}_{\mathcal{C}}}}[\bm{W}\in \hat{S}_\kappa] = \frac{p_{\bm{X},y} - \Delta}{1-\Delta}
    \]
\end{proof}

\newpage
\section{Multi-class certificates for hierarchical randomized smoothing}\label{appC}

In the main section of our paper we only derive so-called binary-class certificates:  
If \(\underline{p_{\tilde{\bm{X}},y^*}} > 0.5\), then the smoothed classifier \(g\) is certifiably robust. We can derive a tighter certificate by guaranteeing \(\underline{p_{\tilde{\bm{X}},y^*}} > \max_{y\neq y^*}\overline{p_{\tilde{\bm{X}},y}}\). To this end, we derive an upper bound \( p_{\tilde{\bm{X}},y}\leq\overline{p_{\tilde{\bm{X}},y}}\) in the following:

\begin{equation}\label{eq:maximize}
    \overline{p_{\tilde{\bm{X}},y}}\quad\triangleq\quad\max_{h\in\mathbb{H}} \Pr_{\bm{W}\sim \mu_{\tilde{\bm{X}}}} [h(\bm{W})=y] \quad s.t.\quad \Pr_{\bm{W}\sim \mu_{\bm{X}}}[h(\bm{W}) = y] = p_{\bm{X}, y}
\end{equation}

\cite{cohen2019certified} show that the maximum of \autoref{eq:maximize} can be obtained by using following lemma:
\begin{lemma}[Neyman-Pearson upper bound]\label{lemma:nplemma2}
    Given \(\bm{X},\tilde{\bm{X}}\in\mathcal{X}^{N\times D}\), distributions \(\mu_{\bm{X}}, \mu_{\tilde{\bm{X}}}\), class label \(y\in\mathcal{Y}\), probability \(p_{\bm{X},y}\) and the set
    \(
        S_\kappa \triangleq \left\{ \bm{W} \in \mathcal{X}^{N\times D} : \mu_{\tilde{\bm{X}}}(\bm{W}) \geq \kappa \cdot \mu_{\bm{X}}(\bm{W})\right\}
    \) we have
    \[
        \overline{p_{\bm{\tilde{X}},y}} = \Pr_{\bm{W}\sim \mu_{\tilde{\bm{X}}}}[\bm{W}\in S_\kappa] \quad\text{with }\kappa\in\mathbb{R}_+\text{ s.t.}\quad \Pr_{\bm{W}\sim \mu_{\bm{X}}}[\bm{W}\in S_\kappa] = p_{\bm{X},y}
    \]
\end{lemma}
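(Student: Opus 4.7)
The plan is to mirror the argument for Lemma~\ref{lemma:nplemma} (the lower bound), with inequalities reversed throughout, since the upper bound is the dual Neyman--Pearson statement. I would first reformulate \autoref{eq:maximize} over the relaxed class of measurable functions $h:\mathcal{X}^{N\times D}\to[0,1]$ (interpreting $h(\bm{W})$ as $\Pr[h(\bm{W})=y]$). This relaxation does not change the optimum, and rewriting the objective and constraint as integrals yields
\[
    \overline{p_{\tilde{\bm{X}},y}} = \max_{h:\mathcal{X}^{N\times D}\to[0,1]} \int h(\bm{W})\,\mu_{\tilde{\bm{X}}}(\bm{W})\,d\bm{W} \quad \text{s.t.}\quad \int h(\bm{W})\,\mu_{\bm{X}}(\bm{W})\,d\bm{W} = p_{\bm{X},y}.
\]

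Next, I would take the candidate optimizer $h^*(\bm{W}) \triangleq \mathds{1}[\bm{W}\in S_\kappa]$ with $S_\kappa = \{\bm{W} : \mu_{\tilde{\bm{X}}}(\bm{W}) \geq \kappa \cdot \mu_{\bm{X}}(\bm{W})\}$, where $\kappa \in \mathbb{R}_+$ is chosen so that $\Pr_{\bm{W}\sim\mu_{\bm{X}}}[\bm{W}\in S_\kappa] = p_{\bm{X},y}$. By construction, $h^*$ satisfies the constraint. To show it is optimal, I would use the standard Neyman--Pearson trick: for any competing $h$ satisfying the constraint, the pointwise inequality
\[
    (h^*(\bm{W}) - h(\bm{W}))\cdot(\mu_{\tilde{\bm{X}}}(\bm{W}) - \kappa\cdot\mu_{\bm{X}}(\bm{W})) \geq 0
\]
holds, because on $S_\kappa$ we have $h^*=1 \geq h$ and $\mu_{\tilde{\bm{X}}} - \kappa\mu_{\bm{X}} \geq 0$, while on its complement $h^*=0 \leq h$ and $\mu_{\tilde{\bm{X}}} - \kappa\mu_{\bm{X}} < 0$. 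Integrating this inequality and using the shared constraint cancels out the $\kappa\mu_{\bm{X}}$ terms, leaving $\int h^* \mu_{\tilde{\bm{X}}}\,d\bm{W} \geq \int h\, \mu_{\tilde{\bm{X}}}\,d\bm{W}$, which is exactly the optimality claim.

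The main subtlety, as in the proof of \autoref{lemma:nplemma}, is that in the discrete case an exact $\kappa$ achieving $\Pr_{\bm{W}\sim\mu_{\bm{X}}}[\bm{W}\in S_\kappa] = p_{\bm{X},y}$ may not exist: the CDF of the likelihood ratio can have jumps. I would handle this via the extension of Neyman--Pearson to discrete random variables \citep{tocher1950extension}, where $h^*$ takes an intermediate value on the boundary set $\{\bm{W}:\mu_{\tilde{\bm{X}}}(\bm{W}) = \kappa\mu_{\bm{X}}(\bm{W})\}$ to match the constraint exactly; the pointwise inequality above still holds there since the second factor vanishes. Because the entire argument parallels \citep{neyman1933ix} and the derivation in \citep{cohen2019certified} up to reversing one inequality, I would close the proof by citing these references rather than reproducing the measure-theoretic details in full.
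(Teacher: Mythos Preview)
Your proposal is correct and in fact more detailed than the paper's own treatment: the paper simply writes ``See \citep{neyman1933ix,cohen2019certified}'' and gives no argument beyond the citation. Your sketch of the standard Neyman--Pearson maximization argument (relaxation to $[0,1]$-valued $h$, the pointwise inequality $(h^*-h)(\mu_{\tilde{\bm{X}}}-\kappa\mu_{\bm{X}})\geq 0$, and the Tocher randomization for the discrete case) is exactly what underlies those references, so you are aligned with the paper's approach while spelling out what it leaves implicit.
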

\begin{proof} See \citep{neyman1933ix,cohen2019certified}.
\end{proof}

We can now use \autoref*{lemma:nplemma2} to derive an upper bound under hierarchical smoothing:

\begin{theorem}[Neyman-Pearson upper bound for hierarchical smoothing]\label{nplemma:upper}
    Given fixed \(\bm{X},\tilde{\bm{X}}\in\mathcal{X}^{N\times D}\), let \(\mu_{\bm{X}}\) denote a smoothing distribution that operates independently on matrix elements, and \(\Psi_{\bm{X}}\) the induced hierarchical distribution over \(\mathcal{Z}^{N\times (D+1)}\). Given label \(y\in\mathcal{Y}\) and the probability \(p_{\bm{X},y}\) to classify \(\bm{X}\) as \(y\) under \(\Psi\). Define
    \( \hat{S}_\kappa \triangleq \left\{ \bm{W} \in \mathcal{X}^{r\times D} : \mu_{\tilde{\bm{X}}_{\mathcal{C}}}(\bm{W}) \geq \kappa \cdot \mu_{\bm{X}_{\mathcal{C}}}(\bm{W})\right\} \). We have:
    \[
        \overline{p_{\bm{\tilde{X}},y}} = \Pr_{\bm{W}\sim \mu_{\tilde{\bm{X}}_{\mathcal{C}}}}[\bm{W}\in \hat{S}_\kappa] \cdot (1-\Delta) + \Delta\quad\text{with }\kappa\in\mathbb{R}_+\text{ s.t.}\quad \Pr_{\bm{W}\sim \mu_{\bm{X}_{\mathcal{C}}}}[\bm{W}\in \hat{S}_\kappa] = \frac{p_{\bm{X},y}}{1-\Delta}
    \]
        where \(\bm{X}_{\mathcal{C}}\) and \(\tilde{\bm{X}}_{\mathcal{C}}\) denote those rows \(\bm{x}_i\) of \(\bm{X}\) and \(\tilde{\bm{x}}_i\) of \(\tilde{\bm{X}}\) with \(i\in\mathcal{C}\), that is \(\bm{x}_i\neq\tilde{\bm{x}}_i\). 
\end{theorem}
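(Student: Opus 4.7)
The plan is to mirror the proof of \autoref{thm:mainthm} in \autoref{appA}, swapping minimization for maximization and using the upper-bound form of the Neyman-Pearson lemma (\autoref{lemma:nplemma2}) in place of \autoref{lemma:nplemma}. Concretely, I would start from
\[
    \overline{p_{\bm{\tilde{X}},y}}=\max_{h\in\mathbb{H}} \Pr_{\bm{Z}\sim \Psi_{\tilde{\bm{X}}}}[h(\bm{Z})=y]\quad\text{s.t.}\quad \Pr_{\bm{Z}\sim \Psi_{\bm{X}}}[h(\bm{Z})=y] = p_{\bm{X},y},
\]
apply the law of total probability using the partition from \autoref{prop:partitioning}, and substitute the per-region probabilities from \autoref{prop:partitioningprobabilities} to cancel the zero terms. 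This reduces the objective to \(\Pr_{\Psi_{\tilde{\bm{X}}}}[h=y\mid\mathcal{R}_2](1-\Delta)+\Pr_{\Psi_{\tilde{\bm{X}}}}[h=y\mid\mathcal{R}_3]\Delta\) and the constraint to \(\Pr_{\Psi_{\bm{X}}}[h=y\mid\mathcal{R}_1]\Delta+\Pr_{\Psi_{\bm{X}}}[h=y\mid\mathcal{R}_2](1-\Delta)=p_{\bm{X},y}\).

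Next I would argue optimality of the extreme assignments by duality with the lower-bound case. Since \(\mathcal{R}_3\) contributes only to the objective, the maximizer sets \(\Pr_{\Psi_{\tilde{\bm{X}}}}[h=y\mid\mathcal{R}_3]=1\), contributing a free \(\Delta\). Conversely, \(\mathcal{R}_1\) contributes only to the constraint (it is invisible to \(\Psi_{\tilde{\bm{X}}}\)), so the maximizer saves the entire budget for \(\mathcal{R}_2\) by setting \(\Pr_{\Psi_{\bm{X}}}[h=y\mid\mathcal{R}_1]=0\). The problem collapses to
\[
    \overline{p_{\bm{\tilde{X}},y}}=(1-\Delta)\max_{h}\Pr_{\Psi_{\tilde{\bm{X}}}}[h=y\mid\mathcal{R}_2]+\Delta\quad\text{s.t.}\quad\Pr_{\Psi_{\bm{X}}}[h=y\mid\mathcal{R}_2]=\frac{p_{\bm{X},y}}{1-\Delta}.
\]
Applying \autoref{lemma:nplemma2} to this conditional problem yields an optimal region of the form \(S_\kappa=\{\bm{Z}\in\mathcal{R}_2:\Psi_{\tilde{\bm{X}}}(\bm{Z})\geq\kappa\,\Psi_{\bm{X}}(\bm{Z})\}\) with \(\kappa\) calibrated to the transformed budget \(p_{\bm{X},y}/(1-\Delta)\).

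Finally, I would reuse verbatim the independence / factorization step from the proof of \autoref{thm:mainthm}: inside \(\mathcal{R}_2\) we have \(\boldsymbol{\tau}_i=1\) for \(i\in\mathcal{C}\), the \(\phi(\boldsymbol{\tau})\) factors cancel in the likelihood ratio, and the rows \(i\notin\mathcal{C}\) contribute identical factors to numerator and denominator and integrate out. This lets me rewrite the conditional event \(\{\bm{Z}\in S_\kappa\mid\bm{Z}\in\mathcal{R}_2\}\) as the event \(\{\bm{W}\in\hat{S}_\kappa\}\) under \(\mu_{\bm{X}_{\mathcal{C}}}\) or \(\mu_{\tilde{\bm{X}}_{\mathcal{C}}}\), producing the stated formula. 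The one subtlety I expect to address explicitly is feasibility: the calibration requires \(p_{\bm{X},y}/(1-\Delta)\leq 1\); when this fails, the constraint forces nonzero mass on \(\mathcal{R}_1\) and the upper bound is trivially \(1\), so the statement remains valid under the implicit convention (or after a brief case split).
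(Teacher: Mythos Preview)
Your proposal is correct and follows essentially the same route as the paper: partition via \autoref{prop:partitioning}, cancel zero-mass regions via \autoref{prop:partitioningprobabilities}, set the extremes \(\Pr[\,\cdot\mid\mathcal{R}_3]=1\) and \(\Pr[\,\cdot\mid\mathcal{R}_1]=0\), then apply the upper-bound Neyman--Pearson lemma and the same factorization argument as in \autoref{thm:mainthm}. Your added remark on feasibility of the calibration \(p_{\bm{X},y}/(1-\Delta)\leq 1\) is a nice touch the paper leaves implicit.
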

\begin{proof}
    We are solving the following maximization problem:
    \begin{equation*}
        \overline{p_{\tilde{\bm{X}},y}}\quad\triangleq\quad\max_{h\in\mathbb{H}} \Pr_{\bm{Z}\sim \Psi_{\tilde{\bm{X}}}} [h(\bm{Z})=y] \quad s.t.\quad \Pr_{\bm{Z}\sim \Psi_{\bm{X}}}[h(\bm{Z}) = y] = p_{\bm{X}, y}
    \end{equation*}
    We can use the partitioning of \autoref{prop:partitioning} and apply the law of total probability:
    \begin{equation*}
        \begin{split}
        \overline{p_{\bm{\tilde{X}},y}} = \max_{h\in\mathbb{H}}\quad & \sum_{i=0}^{3}\Pr_{\bm{Z}\sim \Psi_{\tilde{\bm{X}}}}[h(\bm{Z})=y \mid \bm{Z}\in\mathcal{R}_i]\cdot\Pr_{\bm{Z}\sim \Psi_{\tilde{\bm{X}}}}[\bm{Z}\in\mathcal{R}_i] \\ 
        \text{s.t.}\quad &
        \sum_{i=0}^{3}\Pr_{\bm{Z}\sim \Psi_{\bm{X}}}[h(\bm{Z})=y \mid \bm{Z}\in\mathcal{R}_i]\cdot\Pr_{\bm{Z}\sim \Psi_{\bm{X}}}[\bm{Z}\in\mathcal{R}_i] = p_{\bm{X},y} \\
        \end{split}
    \end{equation*}
    We apply \autoref{prop:partitioningprobabilities} and cancel out all zero probabilities: 
    \begin{equation*}
        \begin{split}
        \overline{p_{\bm{\tilde{X}},y}} = \max_{h\in\mathbb{H}}\quad & \Pr_{\bm{Z}\sim \Psi_{\tilde{\bm{X}}}}[h(\bm{Z})=y \mid \bm{Z}\in\mathcal{R}_2]\cdot(1-\Delta) 
        +
        \Pr_{\bm{Z}\sim \Psi_{\tilde{\bm{X}}}}[h(\bm{Z})=y \mid \bm{Z}\in\mathcal{R}_3]\cdot\Delta 
        \\ 
        \text{s.t.\quad} &
        \Pr_{\bm{Z}\sim \Psi_{\bm{X}}}[h(\bm{Z})=y \mid \bm{Z}\in\mathcal{R}_2]\cdot(1-\Delta) + \Pr_{\bm{Z}\sim \Psi_{\bm{X}}}[h(\bm{Z})=y \mid \bm{Z}\in\mathcal{R}_1]\cdot\Delta = p_{\bm{X},y} \\
        \end{split}
    \end{equation*}

    This is a maximization problem and to maximize we choose
    \(\Pr_{\bm{Z}\sim \Psi_{\tilde{\bm{X}}}}[h(\bm{Z})=y \mid \bm{Z}\in\mathcal{R}_3]=1\) and \(\Pr_{\bm{Z}\sim \Psi_{\bm{X}}}[h(\bm{Z})=y \mid \bm{Z}\in\mathcal{R}_1]=0\). In other words, all matrices in \(\mathcal{R}_3\) that we never observe around \({\bm{X}}\) are correctly classified and the worst-case classifier does not use any budget \(p_{\bm{X},y}\) on this region \(\mathcal{R}_3\). 
    
    We obtain:
    \begin{equation*}
        \begin{split}
        \overline{p_{\bm{\tilde{X}},y}} = \max_{h\in\mathbb{H}}\quad & \Pr_{\bm{Z}\sim \Psi_{\tilde{\bm{X}}}}[h(\bm{Z})=y \mid \bm{Z}\in\mathcal{R}_2]\cdot(1-\Delta) + \Delta 
        \\ 
        \text{s.t.\quad} &
        \Pr_{\bm{Z}\sim \Psi_{\bm{X}}}[h(\bm{Z})=y \mid \bm{Z}\in\mathcal{R}_2]\cdot(1-\Delta) = p_{\bm{X},y} \\
        \end{split}
    \end{equation*} 
    The remaining statement follows analogous to the proof in \autoref{appA}.
\end{proof}

In this section we derived multi-class robustness certificates for randomized smoothing and showed the correctness of the following certification procedure:

\begin{algorithm}[H]
    \SetAlgoLined
    \KwData{$f$, $\bm{X}$, $p$, Lower-level smoothing distribution $\mu$, Radius ($r$, $\epsilon$), $n_0$, $n_1$, $1-\alpha$}
    \texttt{counts0} $\gets$ \texttt{SampleUnderNoise}($f$, $\bm{X}$, $p$, $\mu$, $n_0$) \;
    \texttt{counts1} $\gets$ \texttt{SampleUnderNoise}($f$, $\bm{X}$, $p$, $\mu$, $n_1$) \;
    $y_A$, $y_B$ $\gets$ top two indices in \texttt{counts0} \;
    $\underline{p_A}$, $\overline{p_B}$ $\gets$ \texttt{ConfidenceBounds(counts1[$y_A$],counts1[$y_B$], $n_1$, $1-\alpha$)} \;
    $\Delta \gets 1-p^r$ \;

    $\overline{\tilde{p}_B}$ $\gets$ \texttt{LowerLevelWorstCaseUpperBound($\mu$, $\epsilon$, $\frac{\underline{p_B}}{1-\Delta}$)} \; 
    $\underline{\tilde{p}_A}$ $\gets$ \texttt{LowerLevelWorstCaseLowerBound($\mu$, $\epsilon$, $\frac{\underline{p_A}-\Delta}{1-\Delta}$)} \; 

    \If{$\underline{\tilde{p}_A} \cdot (1-\Delta) > \overline{\tilde{p}_B} \cdot(1-\Delta) + \Delta$}{
        \KwResult{$y_A$ certified}
    }
    \KwResult{ABSTAIN}
\caption{Multi-class Robustness Certificates for Hierarchical Randomized Smoothing}
\end{algorithm}

Here, \texttt{LowerLevelWorstCaseUpperBound(\(\mu\),\(\epsilon\), \(\overline{p_{B}}\))} computes the existing certificate, specifically the largest \(\overline{p_{\tilde{\bm{X}},y_B}}\) over the entire threat model, e.g. \(\Phi\left(\Phi^{-1}(\overline{p_{B}})+\frac{\epsilon}{\sigma}\right)\) for Gaussian smoothing.
The remaining certification procedure is analogous to the certification in \citep{cohen2019certified}.

\newpage
\section{Hierarchical randomized smoothing using Gaussian isotropic smoothing (Proofs of \autoref{sec:initialization})}\label{app:gaussian}

Having derived our general certification framework for hierarchical randomized smoothing, we can instantiate it with specific smoothing distributions to certify specific threat models. In the following we show the robustness guarantees for initializing hierarchical randomized smoothing with Gaussian isotropic smoothing, as already discussed in \autoref{sec:initialization}. We will also derive the corresponding multi-class robustness certificates.
\gaussianCor{3}{prop:gaussian}{3}
\begin{proof}
    As outlined in \autoref{subsection:background},
    \cite{cohen2019certified} show that the optimal value of \autoref{eq:minimize} under Gaussian smoothing is \(\Phi\left(\Phi^{-1}(p_{\bm{x},y})-\frac{||\bm{\delta}||_2}{\sigma}\right)\), where \(\Phi\) denotes the CDF of the normal distribution and \(\bm{\delta}=\bm{x}-\tilde{\bm{x}}\). 
    Plugging this optimal value into \autoref{thm:mainthm} yields for any \(\tilde{\bm{X}}\in\mathcal{B}_{2,\epsilon}^r(\bm{X})\):
    \[
        \underline{p_{\tilde{\bm{X}},y}} = \Phi\left(\Phi^{-1}\left(\frac{p_{\bm{X},y}-\Delta}{1-\Delta}\right) - \frac{||\bm{\delta}||_2}{\sigma}\right)(1-\Delta)
    \]
    where \(\Delta=1-p^{|\mathcal{C}|}\) and \(\bm{\delta} = vec(\bm{X} - \tilde{\bm{X}})\). 
    The remaining statements about the minimum over the entire threat model follows from the fact that \(\underline{p_{\tilde{\bm{X}},y}}\) is monotonously decreasing in \(||\bm{\delta}||_2\), and monotonously decreasing in \(r\) (see also \autoref{prop:ball} in \autoref{sec:certificates}). Note that for fixed \(\tilde{\bm{X}}\) we write \(\Delta=1-p^{|\mathcal{C}|}\) but for the entire threat model we have to use the largest \(\Delta = 1-p^r\).
\end{proof}

\gaussianCorThree{1}{cor:gaussian}{3}
\begin{proof}
    For the binary-class certificate, the smoothed classifier \(g\) is certifiably robust for fixed input \(\bm{X}\) if \(\min_{\tilde{\bm{X}} \in \mathcal{B}_{2,\epsilon}^r(\bm{X})} \underline{p_{\tilde{\bm{X}},y^*}} > 0.5\). By using \autoref{prop:gaussian} we have
    \[
    \min_{\tilde{\bm{X}} \in \mathcal{B}_{2,\epsilon}^r(\bm{X})} \underline{p_{\tilde{\bm{X}},y^*}} > \frac{1}{2}
    \]\[
    \Leftrightarrow
    \Phi\left(\Phi^{-1}\left(\frac{p_{\bm{X},y^*}-\Delta}{1-\Delta}\right) - \frac{\epsilon}{\sigma}\right)(1-\Delta)  > \frac{1}{2}
    \]\[
    \Leftrightarrow
    \Phi^{-1}\left(\frac{p_{\bm{X},y^*}-\Delta}{1-\Delta}\right) - \frac{\epsilon}{\sigma} > \Phi^{-1}\left( \frac{1}{2(1-\Delta)}\right)
    \]\[
    \Leftrightarrow
        \epsilon < \sigma
        \left(
            \Phi^{-1}\left(\frac{p_{\bm{X},y^*}-\Delta}{1-\Delta}\right) - \Phi^{-1}\left(\frac{1}{2(1-\Delta)}\right)
        \right)
    \]
\end{proof}

\newpage
In the following we additionally derive the corresponding multi-class robustness certificates:
\begin{corollary}\label{prop:gaussian2}
    Given continuous \(\mathcal{X}\triangleq\mathbb{R}\), consider the threat model \(\mathcal{B}_{2,\epsilon}^r(\bm{X})\) for fixed \(\bm{X}\in\mathcal{X}^{N\times D}\). Initialize hierarchical smoothing with the isotropic Gaussian smoothing distribution defined as \(\mu_{\bm{X}}(\bm{W}) \triangleq \prod_{i=1}^{N}\mathcal{N}(\bm{w}_i | \bm{x}_i, \sigma^2\bm{I}) \). Let \(y\in\mathcal{Y}\) denote a label and \(p_{\bm{X},y}\) the probability to classify \(\bm{X}\) as \(y\) under \(\Psi\).
    Define~\(\Delta \triangleq 1-p^r\).
    Then we have:
    \[
    \max_{\tilde{\bm{X}} \in \mathcal{B}_{2,\epsilon}^r(\bm{X})} \overline{p_{\tilde{\bm{X}},y}} =
    \Phi\left(\Phi^{-1}\left(\frac{p_{\bm{X},y}}{1-\Delta}\right) + \frac{\epsilon}{\sigma}\right)(1-\Delta) + \Delta
    \]
\end{corollary}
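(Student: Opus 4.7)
The plan is to mirror the proof of the lower-bound corollary (\autoref{prop:gaussian}) but starting from the upper-bound version of the hierarchical Neyman-Pearson theorem (\autoref{nplemma:upper}) rather than \autoref{thm:mainthm}. Concretely, I would first invoke \autoref{nplemma:upper} to express $\overline{p_{\tilde{\bm{X}},y}}$ in terms of a worst-case upper bound for the lower-level Gaussian smoothing distribution evaluated on the $|\mathcal{C}|$ perturbed rows, with the transformed probability $p_{\bm{X},y}/(1-\Delta)$ in place of $p_{\bm{X},y}$.

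Next, I would import the closed-form Gaussian upper bound from \citet{cohen2019certified}: for the isotropic Gaussian distribution with variance $\sigma^2$, the optimal value of \autoref{eq:maximize} is $\overline{p_{\tilde{\bm{x}},y}} = \Phi\!\left(\Phi^{-1}(p_{\bm{x},y}) + \|\bm{\delta}\|_2/\sigma\right)$, where $\bm{\delta} = \bm{x} - \tilde{\bm{x}}$. Since the Gaussian smoothing distribution factorizes independently across all matrix entries, restricting to rows in $\mathcal{C}$ still yields an isotropic Gaussian on the $|\mathcal{C}| \times D$-dimensional subspace and the Euclidean distance is simply $\|\text{vec}(\bm{X}_{\mathcal{C}} - \tilde{\bm{X}}_{\mathcal{C}})\|_2 = \|\text{vec}(\bm{X}-\tilde{\bm{X}})\|_2$. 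Plugging this into \autoref{nplemma:upper} gives, for a fixed $\tilde{\bm{X}}\in\mathcal{B}_{2,\epsilon}^r(\bm{X})$,
\[
\overline{p_{\tilde{\bm{X}},y}} = \Phi\!\left(\Phi^{-1}\!\left(\tfrac{p_{\bm{X},y}}{1-\Delta}\right) + \tfrac{\|\text{vec}(\bm{X}-\tilde{\bm{X}})\|_2}{\sigma}\right)(1-\Delta) + \Delta,
\]
with $\Delta = 1-p^{|\mathcal{C}|}$.

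Finally, I would maximize over the threat model. Two monotonicity arguments suffice: the expression is monotonously increasing in $\|\text{vec}(\bm{X}-\tilde{\bm{X}})\|_2$ (since $\Phi$ is increasing), which is maximized at $\epsilon$; and it is also monotonously increasing in $|\mathcal{C}|$, because both the additive $\Delta$ term grows and the prefactor $(1-\Delta)$ shrinks while the inner argument grows, analogous to the reasoning behind \autoref{prop:ball} applied to an upper bound rather than a lower bound. Hence the maximum is attained at $|\mathcal{C}| = r$, i.e.\ $\Delta = 1-p^r$, yielding the claimed expression.

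The routine calculations are essentially just substitution and monotonicity; the only slightly non-trivial point I would verify carefully is the monotonicity in $|\mathcal{C}|$ for the upper bound, since unlike the lower-bound case (where both the $(1-\Delta)$ factor and the transformed probability $(p_{\bm{X},y}-\Delta)/(1-\Delta)$ conspire in the same direction) the interplay between the shrinking prefactor $(1-\Delta)$, the growing additive term $\Delta$, and the growing argument $\Phi^{-1}(p_{\bm{X},y}/(1-\Delta))$ deserves a short explicit check; differentiating with respect to $\Delta$ or arguing via a worst-case classifier that always returns $y$ on $\mathcal{R}_3$ should settle it cleanly.
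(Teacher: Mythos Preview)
Your proposal is correct and follows essentially the same approach as the paper: invoke \autoref{nplemma:upper}, plug in the Cohen et al.\ Gaussian upper bound $\Phi(\Phi^{-1}(p_{\bm{x},y})+\|\bm{\delta}\|_2/\sigma)$, and maximize over the threat model by monotonicity. The paper's own proof is a two-line sketch that defers the monotonicity arguments to ``analogously to \autoref{prop:gaussian}''; your explicit flag that the monotonicity in $|\mathcal{C}|$ for the \emph{upper} bound deserves a short check (and your suggestion to argue via the worst-case classifier on $\mathcal{R}_3$) is more careful than the paper but not a different route.
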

\begin{proof}
    \cite{cohen2019certified} show that the optimal value of \autoref{eq:maximize} under Gaussian smoothing is \(\Phi\left(\Phi^{-1}(p_{\bm{x},y})+\frac{\epsilon}{\sigma}\right)\), where \(\Phi\) denotes the CDF of the normal distribution. The statement follows analogously to \autoref{prop:gaussian} by directly plugging this optimal value into \autoref{nplemma:upper}.
\end{proof}

\begin{corollary}[Multi-class certificates]
    Given continuous \(\mathcal{X}\triangleq\mathbb{R}\), consider the threat model \(\mathcal{B}_{2,\epsilon}^r(\bm{X})\) for fixed \(\bm{X}\in\mathcal{X}^{N\times D}\).  Initialize the hierarchical smoothing distribution \(\Psi\) using the isotropic Gaussian distribution \(\mu_{\bm{X}}(\bm{W}) \triangleq \prod_{i=1}^{N}\mathcal{N}(\bm{w}_i | \bm{x}_i, \sigma^2\bm{I}) \) that applies noise independently on each matrix element. Given majority class \(y^*\in\mathcal{Y}\) and the probability \(p_{\bm{X},y^*}\) to classify \(\bm{X}\) as \(y^*\) under \(\Psi\). Then the smoothed classifier \(g\) is certifiably robust \(g(\bm{X}) = g(\tilde{\bm{X}})\) for any \(\tilde{\bm{X}}\in\mathcal{B}_{2,\epsilon}^r(\bm{X})\) if
\[ 
   \Phi\left(\Phi^{-1}\left(\frac{p_{\bm{X},y^*}-\Delta}{1-\Delta}\right) - \frac{\epsilon}{\sigma}\right)
   >
   \max_{y\neq y^*}
    \Phi\left(\Phi^{-1}\left(\frac{p_{\bm{X},y}}{1-\Delta}\right) + \frac{\epsilon}{\sigma}\right) + \frac{\Delta}{1-\Delta}
\]
    where \(\Phi^{-1}\) denotes the inverse CDF of the normal distribution and \(\Delta \triangleq 1-p^r\).
\end{corollary}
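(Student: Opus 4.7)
The plan is to combine the minimum-over-threat-model form of the lower bound $\underline{p_{\tilde{\bm{X}}, y^*}}$ established in Corollary~1 with the analogous maximum-over-threat-model form of the upper bound $\overline{p_{\tilde{\bm{X}}, y}}$ established in the preceding Corollary (Proposition on $\max \overline{p_{\tilde{\bm{X}}, y}}$ under Gaussian smoothing), and then to perform a simple algebraic rearrangement. The guiding observation is that robustness of the smoothed multi-class classifier is equivalent to requiring $\underline{p_{\tilde{\bm{X}},y^*}} > \overline{p_{\tilde{\bm{X}},y}}$ for every runner-up class $y \neq y^*$ and every $\tilde{\bm{X}}$ in the threat model; a sufficient (and in the standard framework also practical) condition is therefore
\[
    \min_{\tilde{\bm{X}} \in \mathcal{B}_{2,\epsilon}^r(\bm{X})} \underline{p_{\tilde{\bm{X}},y^*}} \;>\; \max_{y \neq y^*} \max_{\tilde{\bm{X}} \in \mathcal{B}_{2,\epsilon}^r(\bm{X})} \overline{p_{\tilde{\bm{X}},y}}.
\]

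\textbf{Step 1.} First I would invoke Corollary~1 to substitute the closed form
\(
\min_{\tilde{\bm{X}}} \underline{p_{\tilde{\bm{X}},y^*}}
= \Phi\bigl(\Phi^{-1}(\tfrac{p_{\bm{X},y^*}-\Delta}{1-\Delta}) - \tfrac{\epsilon}{\sigma}\bigr)(1-\Delta)
\)
on the left-hand side. Next, I would apply the upper-bound counterpart to substitute
\(
\max_{\tilde{\bm{X}}} \overline{p_{\tilde{\bm{X}},y}}
= \Phi\bigl(\Phi^{-1}(\tfrac{p_{\bm{X},y}}{1-\Delta}) + \tfrac{\epsilon}{\sigma}\bigr)(1-\Delta) + \Delta
\)
for each non-majority class $y \neq y^*$. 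Both substitutions are direct applications of theorems already proven in the appendix and require no further derivation.

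\textbf{Step 2.} Having made the substitutions, the sufficient condition reads
\[
\Phi\!\left(\Phi^{-1}\!\left(\tfrac{p_{\bm{X},y^*}-\Delta}{1-\Delta}\right) - \tfrac{\epsilon}{\sigma}\right)(1-\Delta)
\;>\;
\Phi\!\left(\Phi^{-1}\!\left(\tfrac{p_{\bm{X},y}}{1-\Delta}\right) + \tfrac{\epsilon}{\sigma}\right)(1-\Delta) + \Delta
\]
for all $y \neq y^*$. I would then divide both sides by the strictly positive factor $(1-\Delta)$ (noting that the case $\Delta = 1$ is degenerate and yields no certificate, cf.\ the ``Special cases'' discussion after Theorem~1), yielding exactly the stated condition after pulling the maximum over $y \neq y^*$ to the right-hand side. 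The division is valid because $1-\Delta = p^r > 0$ whenever $p > 0$, which is the only regime in which any nontrivial certificate is obtainable.

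\textbf{Main obstacle.} There is essentially no technical obstacle once the two bounds from the appendix are in hand; the argument is a single algebraic rearrangement together with the standard reduction from the pairwise multi-class robustness condition to its worst-case form over the threat model. The only subtle point to state carefully is the validity of the division by $(1-\Delta)$ and the observation that the minimum over $\tilde{\bm{X}}$ on the left and the maximum over $\tilde{\bm{X}}$ on the right can be taken independently, which is justified because the derived closed forms are already the pointwise tight bounds from Theorem~1 and Theorem~\ref{nplemma:upper}, so combining them at their respective worst cases yields a valid (though possibly conservative) sufficient condition for certified robustness.
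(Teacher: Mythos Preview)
Your proposal is correct and follows essentially the same route as the paper: the paper's proof simply states that the condition $\underline{p_{\tilde{\bm{X}},y^*}} > \max_{y\neq y^*}\overline{p_{\tilde{\bm{X}},y}}$ together with the closed-form minimum lower bound (Corollary~\ref{prop:gaussian}) and maximum upper bound (Corollary~\ref{prop:gaussian2}) yields the result, and your Step~2 spells out the division by $(1-\Delta)$ that the paper leaves implicit. The only minor point is that the formula you attribute to ``Corollary~1'' is actually stated as Corollary~\ref{prop:gaussian} in the appendix, but the content you invoke is correct.
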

\begin{proof}
For the tighter multi-class certificate we have to guarantee \(\underline{p_{\tilde{\bm{X}},y^*}} > \max_{y\neq y^*}\overline{p_{\tilde{\bm{X}},y}}\). The statement follows directly from \autoref{prop:gaussian} and \autoref{prop:gaussian2}.
\end{proof}

\newpage
\section{Hierarchical smoothing for discrete lower-level smoothing distributions}\label{app:discrete}
\textbf{Threat model details for discrete hierarchical smoothing experiments.}
In our discrete experiments in \autoref{sec:exp} we model adversaries that perturb multiple nodes in the graph by deleting \(r_d\) ones (flip \(1\rightarrow 0\)) and adding \(r_a\) ones (flip \(0\rightarrow 1\)) in the feature matrix \(\bm{X}\), that is we consider the ball \(\mathcal{B}_{r_a, r_d}^{r}(\bm{X})\). Here we define this threat model more formally: 
\begin{equation}
    \begin{aligned}
    \mathcal{B}_{r_a, r_d}^{r}(\bm{X}) \triangleq 
    \{
        \tilde{\mathbf{X}}\in\mathcal{X}^{N\times D}
        \mid & \sum_{i=1}^N \mathds{1}[{\mathbf{x}_i\neq \tilde{\mathbf{x}}_i}] \leq r \\
        & \sum_i\sum_j \mathds{1}(\tilde{\bm{X}}_{ij} = \bm{X}_{ij}+1) \leq r_a \\
        & \sum_i\sum_j \mathds{1}(\tilde{\bm{X}}_{ij} = \bm{X}_{ij}-1) \leq r_d \}
    \end{aligned}
\end{equation}
where \(\mathds{1}\) is an indicator function, \(r\) the number of controlled nodes, \(r_a\) the number of adversarial insertions (\(0\rightarrow 1\)), and \(r_d\) the number of adversarial deletions (\(1\rightarrow 0\)).

We expand the result of \autoref{thm1} to certificates for discrete lower-level smoothing distributions~\(\mu\):
\discreteNpt{3}{thm2}{3}
\begin{proof}
This follows from applying the Neyman-Pearson-Lemma for discrete random variables twice (see \autoref{lemma:discreteNP}). While \(\mathcal{R}_1,\mathcal{R}_2, \mathcal{R}_3\) represent ``super regions'' for the upper-level smoothing distribution \(\phi\), \(\mathcal{H}_1, \ldots, \mathcal{H}_I\) further subdivide \(\mathcal{R}_2\) into smaller regions of constant likelihood ratio, which is required for the lower-level discrete smoothing distribution \(\mu\). 
\end{proof}

Notably, \autoref{thm2} implies that we can compute the LP of the original discrete certificate for the adjusted probability of \((p_{\bm{X},y}-\Delta)/(1-\Delta)\), and then multiply the resulting optimal value by the constant \(1-\Delta\). In particular, the feasibility of the Linear Program and of computing \(\bm{v},\tilde{\bm{v}}\) can be delegated to the certificate for the underlying~\(\mu\).

Analogously for the upper bound for discrete lower-level \(\mu\) we need:
\begin{lemma}[Discrete Neyman-Pearson upper bound]\label{lemma:discreteNPUpper}
    Partition the input space \(\mathcal{X}^D\) into disjoint~regions \(\mathcal{R}_1, \ldots, \mathcal{R}_I\) of constant likelihood ratio \(\mu_{\bm{x}}(\bm{w})/\mu_{\tilde{\bm{x}}}(\bm{w}) = c_i \!\in\! \mathbb{R}\cup\{\infty\}\) for all \(\bm{w}\in\mathcal{R}_i\). Define \(\tilde{\bm{r}}_i \triangleq \Pr_{\bm{w}\sim\mu_{\tilde{\bm{x}}}}(\bm{w}\in \mathcal{R}_i)\) and \(\bm{r}_i\triangleq \Pr_{\bm{w}\sim\mu_{\bm{x}}}(\bm{w}\in \mathcal{R}_i)\).  
    Then \autoref{eq:maximize} is equivalent to the linear program \(\overline{p_{\tilde{\bm{x}},y}}=\max_{\bm{h}\in[0,1]^{I}} \bm{h}^T\tilde{\bm{r}} \text{ s.t. } \bm{h}^T\bm{r}=p_{\bm{x}, y}\), where \(\bm{h}\) represents the worst-case~classifier.
\end{lemma}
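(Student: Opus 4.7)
My plan is to mirror the proof of Lemma 2 (discrete Neyman-Pearson lower bound) by replacing the minimization with a maximization, and in fact to reduce the upper-bound LP to the lower-bound LP by complementation. The starting point is that the optimization over $\mathbb{H}$ can be reduced to an LP over $[0,1]^I$ by exploiting the constant-likelihood-ratio partition, and the reduced LP is dual to the one in Lemma 2 in a direct algebraic sense.

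For the reduction to $[0,1]^I$, I would proceed as follows. For any classifier $h\in\mathbb{H}$, set $A \triangleq \{\bm{w} : h(\bm{w})=y\}$ and $A_i \triangleq A \cap \mathcal{R}_i$. Because $\mu_{\bm{x}}(\bm{w}) = c_i\,\mu_{\tilde{\bm{x}}}(\bm{w})$ throughout $\mathcal{R}_i$, summing over $A_i$ gives $\mu_{\bm{x}}(A_i) = c_i\,\mu_{\tilde{\bm{x}}}(A_i)$, and in particular $\bm{r}_i = c_i \tilde{\bm{r}}_i$. Defining $\bm{h}_i \triangleq \mu_{\bm{x}}(A_i)/\bm{r}_i \in [0,1]$ whenever $\bm{r}_i > 0$ then yields $\mu_{\tilde{\bm{x}}}(A_i) = \bm{h}_i \tilde{\bm{r}}_i$, so by the law of total probability the objective becomes $\sum_i \bm{h}_i \tilde{\bm{r}}_i$ and the constraint becomes $\sum_i \bm{h}_i \bm{r}_i = p_{\bm{x},y}$. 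Conversely, for any feasible $\bm{h}\in[0,1]^I$, a Tocher-style randomized classifier within each region $\mathcal{R}_i$ realizes the required assignment proportions, so the LP value equals the original supremum.

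For the final step, I would note that the maximization LP transforms into the Lemma 2 minimization LP via the substitution $\bm{h}'_i \triangleq 1 - \bm{h}_i$: under this change of variables, maximizing $\sum_i \bm{h}_i \tilde{\bm{r}}_i$ subject to $\sum_i \bm{h}_i \bm{r}_i = p_{\bm{x},y}$ becomes minimizing $\sum_i \bm{h}'_i \tilde{\bm{r}}_i$ (up to the additive constant $\sum_i \tilde{\bm{r}}_i = 1$) subject to $\sum_i \bm{h}'_i \bm{r}_i = 1 - p_{\bm{x},y}$, with $\bm{h}'\in[0,1]^I$. This is exactly an instance of Lemma 2 applied to the complementary event, so existence and attainment of the LP optimum follow immediately, and the equivalence $\overline{p_{\tilde{\bm{x}},y}} = 1 - \underline{p_{\tilde{\bm{x}}, \bar y}}$ is recovered as a sanity check.

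The main obstacle will be the bookkeeping around degenerate regions where $c_i\in\{0,\infty\}$, i.e.\ where $\bm{r}_i = 0$ or $\tilde{\bm{r}}_i = 0$. In such regions the likelihood ratio is only formally defined; the clean resolution is that if $\bm{r}_i = 0$ the constraint is insensitive to $\bm{h}_i$ and one always sets $\bm{h}_i=1$ to maximize the objective, whereas if $\tilde{\bm{r}}_i = 0$ the region contributes nothing to the objective and $\bm{h}_i$ can be chosen freely to satisfy the constraint. Once these corner cases are dispatched, the remainder of the argument is essentially symmetric to Lemma 2 and can be attributed to \citep{tocher1950extension,neyman1933ix}.
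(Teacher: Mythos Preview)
Your proposal is correct. The paper does not actually supply a proof for this lemma: it merely states Lemma~4 as the upper-bound analogue of Lemma~2, and Lemma~2 itself is attributed without proof to \cite{tocher1950extension}. So there is no ``paper's own proof'' to compare against beyond an implicit ``same as Lemma~2, mutatis mutandis.''

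Your write-up is more explicit than anything in the paper. Two brief remarks. First, your reduction step silently assumes randomized classifiers (the Tocher extension), whereas the paper's definition of \(\mathbb{H}\) is the set of deterministic maps \(h:\mathcal{X}^{N\times D}\to\mathcal{Y}\); this is a standard sloppiness in the smoothing literature and your explicit invocation of Tocher-style randomization is the right fix. Second, the complementation trick \(\bm{h}'_i = 1-\bm{h}_i\) reducing the max-LP to the min-LP of Lemma~2 is a clean way to avoid redoing the argument, and is not spelled out in the paper; it is a genuine (if minor) simplification over re-running the Neyman--Pearson argument from scratch for the upper bound.
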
 

Then we can show:
\begin{theorem}[Hierarchical upper bound for discrete lower-level smoothing distributions]\label{thm:discreteHS2}
    Given discrete \(\mathcal{X}\) and a smoothing distribution \(\mu_{\bm{x}}\) for discrete data, assume that the corresponding discrete certificate partitions the input space \(\mathcal{X}^{rD}\) into disjoint regions \(\mathcal{H}_1, \ldots, \mathcal{H}_I\) of constant likelihood~ratio: \(\mu_{\bm{x}}(\bm{w})/\mu_{\tilde{\bm{x}}}(\bm{w}) = c_i \) for all \(\bm{w} \in \mathcal{H}_i\). Then we can compute an upper bound \(\overline{p_{\bm{\tilde{X}},y}}\geq p_{\tilde{\bm{X}},y} \) by solving the following Linear~Program (LP):
    \[
        \overline{p_{\bm{\tilde{X}},y}} = \max_{\bm{h}}\quad \bm{h}^T\tilde{\bm{\nu}} \cdot (1-\Delta) + \Delta \quad\text{s.t.}\quad \bm{h}^T\bm{\nu} = \frac{p_{\bm{X},y}}{1-\Delta},\quad 0\leq \bm{h}_i \leq 1
    \]
    where \(\tilde{\bm{\nu}}_q \triangleq  \Pr_{\bm{W}\sim\mu_{\tilde{\bm{X}}_{\mathcal{C}}}}[\text{vec}(\bm{W})\in \mathcal{H}_q]\)
    and
    \( \bm{\nu}_q\triangleq \Pr_{\bm{W}\sim \mu_{\bm{X}_{\mathcal{C}}}}[\text{vec}(\bm{W})\in \mathcal{H}_q]\)
    for all \(q\in\{1, \ldots, I\}\).
\end{theorem}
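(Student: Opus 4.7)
The plan is to mirror the structure of the proof of \autoref{thm2} (the discrete lower bound) but combine the hierarchical upper bound of \autoref{nplemma:upper} with the discrete Neyman--Pearson upper bound of \autoref{lemma:discreteNPUpper}. In other words, I would first handle the upper-level partitioning into $\mathcal{R}_0,\mathcal{R}_1,\mathcal{R}_2,\mathcal{R}_3$ to reduce the worst case on $\Psi$ to a problem on $\mu$ restricted to the perturbed rows $\mathcal{C}$, and then discretize that reduced problem via the constant-likelihood-ratio partition $\mathcal{H}_1,\dots,\mathcal{H}_I$.

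Concretely, I would first invoke \autoref{nplemma:upper} to obtain the closed-form hierarchical upper bound
\[
\overline{p_{\tilde{\bm{X}},y}} = \Pr_{\bm{W}\sim \mu_{\tilde{\bm{X}}_{\mathcal{C}}}}[\bm{W}\in \hat{S}_\kappa] \cdot (1-\Delta) + \Delta
\]
with $\kappa$ chosen so that $\Pr_{\bm{W}\sim \mu_{\bm{X}_{\mathcal{C}}}}[\bm{W}\in \hat{S}_\kappa] = p_{\bm{X},y}/(1-\Delta)$. This step is essentially free: the derivation of \autoref{nplemma:upper} already pushes the $\Delta$ mass optimally into $\mathcal{R}_3$ (where $\Psi_{\bm{X}}$ has no support) and reduces the remaining problem to the lower-level likelihood-ratio test on $\mu_{\bm{X}_{\mathcal{C}}}$ vs.\ $\mu_{\tilde{\bm{X}}_{\mathcal{C}}}$ inside $\mathcal{R}_2$.

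Next, I would apply \autoref{lemma:discreteNPUpper} to this reduced problem over $\mathcal{X}^{|\mathcal{C}|\times D}$. Because the assumed partition $\mathcal{H}_1,\dots,\mathcal{H}_I$ has constant likelihood ratio on each region, \autoref{lemma:discreteNPUpper} states that the optimal level-set $\hat{S}_\kappa$ is recovered exactly by the LP
\[
\max_{\bm{h}\in[0,1]^I}\; \bm{h}^T\tilde{\bm{\nu}} \quad\text{s.t.}\quad \bm{h}^T\bm{\nu} = \frac{p_{\bm{X},y}}{1-\Delta},
\]
with $\bm{\nu}_q,\tilde{\bm{\nu}}_q$ defined exactly as in the statement. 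The entries $\bm{h}_q$ simply encode the conditional probability that the worst-case classifier outputs $y$ on region $\mathcal{H}_q$. Plugging the LP optimum back into the hierarchical formula yields the displayed expression $\bm{h}^T\tilde{\bm{\nu}}\cdot(1-\Delta) + \Delta$, proving the claim.

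The only non-routine point is verifying LP feasibility, i.e.\ that the constraint value $p_{\bm{X},y}/(1-\Delta)$ lies in $[0,1]$ so that an admissible $\bm{h}$ exists. For nontrivial certificates one always has $p_{\bm{X},y} \leq 1-\Delta$ (otherwise \autoref{nplemma:upper} already returns the trivial bound $1$ and there is nothing to optimize), and the edge cases $\Delta=0$ and $\Delta=1$ can be handled exactly as in the discussion of special cases after \autoref{thm:mainthm}: when $\Delta=0$ the LP degenerates to the original discrete certificate of \citet{lee2019tight}/\citet{bojchevski2020efficient}, and when $\Delta=1$ the upper bound is trivially $1$. Everything else is a direct chaining of two Neyman--Pearson results already stated in the paper, so no new combinatorial analysis is required.
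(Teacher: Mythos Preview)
Your proposal is correct and essentially matches the paper's own proof, which simply says the result follows analogously to \autoref{thm2} by applying the discrete Neyman--Pearson upper bound (\autoref{lemma:discreteNPUpper}) twice---once on the super regions $\mathcal{R}_0,\dots,\mathcal{R}_3$ and once on the lower-level partition $\mathcal{H}_1,\dots,\mathcal{H}_I$. You route the first application through \autoref{nplemma:upper} rather than redoing the super-region step from scratch, but that theorem already encapsulates exactly that computation, so the two arguments are the same in substance; your added remarks on LP feasibility and the edge cases $\Delta\in\{0,1\}$ are extra detail the paper omits.
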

\begin{proof}
    This follows analogously to \autoref{thm2},
    specifically by applying the Neyman-Pearson-Lemma upper bound for discrete random variables (\autoref{lemma:discreteNPUpper}) twice (first using the ``super regions'' and then the ``lower-level'' regions).
\end{proof}

\newpage
\section{Hierarchical randomized smoothing using ablation smoothing}\label{app:ablation}

There are different types randomized ablation certificates:
In image classification, \citet{levine2020robustness} randomly ablate pixels by masking them. In point cloud classification, \citet{liu2021pointguard} ablate points by deleting them to certify robustness against adversarial point insertion, deletion and modification. In sequence classification, \citet{huang2023rsdel} randomly delete sequence elements to certify robustness against edit distance threat models. In node/graph classification, \citet{bojchevski2020efficient} experiment with ablation smoothing for individual attributes but show that their additive method is superior. \citet{scholten2022randomized} directly mask out entire nodes, yielding strong certificates against adversaries that arbitrarily perturb node attributes.

\textbf{Ablation smoothing differences.} The ablation baseline \citep{levine2020robustness} draws \(k\) pixels to ablate from a uniform distribution \(\mathcal{U}(k,N)\) over all possible subsets with \(k\) out of \(N\) pixels. \citet{levine2020robustness} derive \(\Delta^L = 1-\frac{\binom{N-r}{k}}{\binom{N}{k}}\). This is in contrast to our approach, which selects pixels to add noise to with a probability of \(p\), for which we have \(\Delta^S=1-p^r\). 
In other words, we choose the pixel selection probability of \(p = 1-\frac{k}{N}\) (i.e.\ we select \(k\) pixels in expectation).
This way, increasing \(k\) corresponds to decreasing \(p\) and vice versa. 
Interestingly, this type of ablation smoothing is tighter for small \(N\) since \( \Delta^S < \Delta^L\) for \(k>0\) (see \citet{scholten2022randomized} (Appendix B, Proposition 5) for a detailed discussion; later also verified by \citet{huang2023rsdel}).

\textbf{Ablation and additive noise hybrid.} Notably, our method generalizes ablation and additive noise smoothing into a common framework: The additional indicator indicates which entity will be ablated, but instead of ablating entities we only add noise to them. If we ablate the selected rows instead of adding noise, we can restore certificates for ablation smoothing as a special case of our framework:

\ablationProp{2}{}{2}
\begin{proof}
    We can first apply \autoref{thm2} and obtain:
    \[
        \underline{p_{\bm{\tilde{X}},y}} = min_{\bm{h}}\quad \bm{h}^T\tilde{\bm{\nu}} \cdot (1-\Delta) \quad\text{s.t.}\quad \bm{h}^T\bm{\nu} = \frac{p_{\bm{X},y}-\Delta}{1-\Delta},\quad 0\leq \bm{h}_i \leq 1
    \]
    for \(\tilde{\bm{v}} = 1\) and \({\bm{v}} = 1\) (since we always ablate selected rows there is only a single lower-level region~\(\mathcal{H}\)).
    Solving this LP yields
    \[
        \underline{p_{\bm{\tilde{X}},y}}  = \frac{p_{\bm{X},y} - \Delta}{1-\Delta} (1-\Delta) = p_{\bm{X},y} - \Delta
    \]
\end{proof}
Note one can analogously compute the upper bound \(\overline{p_{\bm{\tilde{X}},y}}\) by applying the corresponding \autoref{thm:discreteHS2}:
\[
    \overline{p_{\bm{\tilde{X}},y}}  = \frac{p_{\bm{X},y}}{1-\Delta} (1-\Delta) + \Delta = p_{\bm{X},y} + \Delta
\]
Altogether in this special case we obtain for the multi-class certificate \(\underline{p_{\tilde{\bm{X}},y^*}} > \max_{y\neq y^*}\overline{p_{\tilde{\bm{X}},y}}\):
\[
    p_{\bm{X},y^*} - \Delta > p_{\bm{X},\tilde{y}} + \Delta
\]
where \(y^*\) is the most likely class and \(\tilde{y}\) the follow-up class. That is in this special case with ablation smoothing we again obtain the multi-class guarantees derived by \cite{levine2020robustness}.

\textbf{Related work.}
In the context of randomized ablation (a special case of our framework), \cite{jia2021intrinsic} also define three regions and build an ablation certificate using the Neyman-Pearson Lemma. Their regions are, however, different to ours, and their certificates are designed for poisoning threat models (whereas we consider evasion settings). Moreover, \cite{liu2021pointguard} define three regions similar to ours, but their certificate is only for point cloud classifiers and suffers from the same limitations of ablation certificates that we overcome: Our key idea is to further process the region \(\mathcal{R}_2\) and to utilize the budget \(p_{\bm{X},y} - \Delta\) by plugging it into the certificate for the lower-level smoothing distribution, which yields stronger robustness-accuracy~trade-offs under our threat model.

\textbf{Discussion.}
In general, the difference between ablation certificates and additive noise certificates is that ablation smoothing does not evaluate the base classifier on the entire input space. Specifically, the supported sample spaces around \(\bm{X}\) and \(\tilde{\bm{X}}\) are overlapping but not the identical. In this context, additive noise certificates are technically tighter for threat models where adversarial perturbations are bounded.
Since our certificates generalize ablation smoothing and additive noise smoothing we inherit advantages and disadvantages of both methods: While the certificate for the lower-level smoothing distribution may be tight, the upper-level smoothing distribution prevents us from evaluating the base classifier on the entire matrix space. We leave the development of tight and efficient certificates for hierarchical randomized smoothing to future work.

\textbf{Gray-box certificates.}
\citet{scholten2022randomized} derive gray-box certificates by exploiting the underlying message-passing principles of graph neural networks. In detail, they derive a tighter gray-box \(\Delta\) that accounts for the fact that nodes may be disconnected from a target node when randomly deleting edges in a graph (and disconnected nodes do not affect the prediction for the target). Interestingly, we can directly plug their gray-box \(\Delta\) into our framework to obtain gray-box robustness certificates for hierarchical randomized smoothing.

\section{Extended discussion}\label{app:motivation_indicator}

\textbf{Motivation for the indicator variable as additional column.}
We certify robustness of classifiers operating on the higher-dimensional matrix space, which allows our certificates to be efficient and highly flexible. In the following we motivate this by considering the scenario of certifying robustness on the original data space \(\mathcal{X}^{N\times D}\). The main problem is that it is technically challenging to obtain an efficient certificate that depends only on the radius (and not on fixed \(\bm{X}\) and \(\tilde{\bm{X}}\)):

Consider a discrete lower-level smoothing distribution, fixed \(\bm{X}, \tilde{\bm{X}}\), smoothed matrix \(\bm{W}\) and row \(i\in\mathcal{C}\) such that \(\bm{x}_i \neq \tilde{\bm{x}}_i\) but \(\bm{x}_i=\bm{w}_i\). 
To certify robustness on the original space  we have to consider the likelihood ratio for the marginal distribution \(\Psi_{\bm{X}}(\bm{W}) \triangleq \sum_{\boldsymbol{\tau}} \Psi_{\bm{X}}(\bm{W},\boldsymbol{\tau})\):
\[
    \frac{\Psi_{\bm{X}}(\bm{W})}{\Psi_{\tilde{\bm{X}}}(\bm{W})}
    =
    \frac{
        \sum_{\boldsymbol{\tau}} \Psi_{\bm{X}}(\bm{W},\boldsymbol{\tau})
    }{
        \sum_{\boldsymbol{\tau}} \Psi_{\tilde{\bm{X}}}(\bm{W},\boldsymbol{\tau})
    }.
\]
Let \(\mathcal{C}\) describe all perturbed and \(\tilde{\mathcal{C}}\) all clean rows. Similar due to independence we have:
\[
        \frac{\Psi_{\bm{X}}(\bm{W})}{\Psi_{\tilde{\bm{X}}}(\bm{W})}
        \overset{(1)}{=}
        \frac{
            \prod_{i=1}^n \Psi_{\mathbf{x}_i}(\mathbf{w}_i)
        }{
            \prod_{i=1}^n \Psi_{\tilde{\mathbf{x}}_i}(\mathbf{w}_i)
        }
        =
        \frac{
            \prod_{i\in \mathcal{C}} \Psi_{\mathbf{x}_i}(\mathbf{w}_i)
            \prod_{i\in \tilde{\mathcal{C}}} \Psi_{\mathbf{x}_i}(\mathbf{w}_i)
        }{
            \prod_{i\in \mathcal{C}} \Psi_{\tilde{\mathbf{x}}_i}(\mathbf{w}_i)
            \prod_{i\in \tilde{\mathcal{C}}} \Psi_{\tilde{\mathbf{x}}_i}(\mathbf{w}_i)
        }
        \overset{(2)}{=}
        \frac{
            \prod_{i\in \mathcal{C}} \Psi_{\mathbf{x}_i}(\mathbf{w}_i)
        }{
            \prod_{i\in \mathcal{C}} \Psi_{\tilde{\mathbf{x}}_i}(\mathbf{w}_i)
        }
    \]
    where \((1)\) is due independence and \((2)\) due to \(\bm{x}_i = \tilde{\bm{x}}_i\) for clean rows \(i\in \tilde{\mathcal{C}}\).
    Now we can consider the perturbed row \(i\in \mathcal{C}\) that we chose above and consider the likelihood ratio:
    \[
        \frac{
            \Psi_{\mathbf{x}_i}(\mathbf{w}_i)
        }{
            \Psi_{\tilde{\mathbf{x}}_i}(\mathbf{w}_i)
        }
        =
        \frac{
            \sum_{\boldsymbol{\tau}_i} \mu_{\mathbf{x}_i}(\mathbf{w}_i|\boldsymbol{\tau}_i)\phi(\boldsymbol{\tau}_i)
        }{
            \sum_{\boldsymbol{\tau}_i} \mu_{\tilde{\mathbf{x}}_i}(\mathbf{w}_i|\boldsymbol{\tau}_i)\phi(\boldsymbol{\tau}_i)
        }
        \overset{(3)}{=}
        \frac{
            \mu_{\mathbf{x}_i}(\mathbf{w}_i)p + (1-p)
        }{
            p\mu_{\tilde{\mathbf{x}}_i}(\mathbf{w}_i)
        }
    \]
    where (3) is due to \(\bm{w}_i = \bm{x}_i\) but \(\bm{w}_i \neq \tilde{\bm{x}}_i\).

Due to the mixture, there are two ways sampling \(\bm{w}_i\) from \(\Psi_{\bm{x}_i}\): First, we may not ``select'' row \(i\) (and then we do not smooth the row). Second, we may ``select'' row \(i\) but not add any noise on the selected row. Due to the mixture, we cannot simply cancel out dimensions in which the rows \(\bm{x}_i\) and \(\tilde{\bm{x}}_i\) agree as in \citep{bojchevski2020efficient}, leading to the problem that the likelihood ratio and subsequently the point-wise certificate depends on \(\tilde{\bm{X}}\). Consequently, certifying robustness against an entire ball of perturbations is technically challenging since the likelihood ratio will change for every \(\tilde{\bm{X}}\).

    We can resolve this problem by realizing that for reasonable smoothing parameters,
    the probability \(\mu_{\tilde{\bm{x}}_i}(\bm{w}_i)=\mu_{\tilde{\bm{x}}_i}(\bm{x}_i)\) is close to zero 
    (probability for flipping exactly \textit{and only} the dimensions where \(\tilde{\bm{x}}_i\) disagrees with \(\bm{x}_i\) is close to zero for small radii and high dimensional data).
    Specifically, the likelihood ratio for such elements is approaching~\(\infty\) (e.g.\ for sparse smoothing with fixed radius \(r_a, r_d\) we have \(\mu_{\tilde{\mathbf{x}}_i}(\mathbf{w}_i)\rightarrow 0\) for \(D\rightarrow\infty\)).
    Note that if we choose a \(\bm{W}\) and row \(i\) such that \(\bm{x}_i \neq \tilde{\bm{x}}_i\) but \(\tilde{\bm{x}}_i=\bm{w}_i\), we can derive the same but the other way around and find the likelihood ratio approaching~\(0\).
    In other words, we cannot avoid regions of \(0\), and \(\infty\) likelihood ratio for such smoothing distributions where we can flip ``entire rows'' from \(\bm{x}_i\) to \(\tilde{\bm{x}}_i\). Certifying robustness on the higher-dimensional matrix space instead is efficient, highly flexible and tight for higher dimensional rows as for example in graphs where rows represent node features.

\textbf{Efficiency of hierarchical randomized smoothing.}
We discuss (1) the efficiency of the smoothing process, and (2) the efficiency of the certification.  
First, regarding the efficiency of the smoothing process, note that
hierarchical smoothing only adds noise on a subset of all entities. Technically this can be more efficient, although drawing from a bernoulli before adding noise may require more time in practice than directly drawing (e.g.\ gaussian) noise for all matrix elements simultaneously.
Second, regarding certification efficiency, our method only requires the additional computation of the term \(\Delta\) which takes constant time. Yet, closed-form solutions for the largest certifiable radius may not be available, depending on the lower-level smoothing distribution. For example we may not be able to derive a closed-form radius for the multi-class certificate under hierarchical Gaussian smoothing (see \autoref{app:gaussian}). 
Still, the certification process can be implemented efficiently even without closed-form solutions, e.g.\ via bisection or just by incrementally increasing \(\epsilon\) to find the largest certifiable radius.

\textbf{Non-uniform entity-selection probability \(p\).}
In general we can also use a non-uniform \(p\): Let \(p_i\) denote a node-specific (non-uniform) selection probability for node \(i\) that the adversary does not control. To get robustness certificates we have to make the worst-case assumption, i.e.\ nodes with lowest selection probability will be attacked first by the adversary. For the certificate this means we have to find the largest possible \(\Delta\) (see also Theorem 1). Specifically, we have to choose \(\Delta\triangleq 1-\prod_{i\in \mathcal{I}} p_i\) where \(\mathcal{I}\) contains the indices of the \(r\) smallest probabilities \(p_i\). Then we can proceed as in our paper to compute certificates for the smoothing distribution with non-uniform \(p\).
If we additionally know that certain nodes will be attacked before other nodes, we do not have to consider the first
nodes with smallest \(p_i\) but rather the first nodes with smallest \(p_i\) under an attack order. This can be useful for example if one cluster of nodes is attacked before another, then we can increase the selection probability for nodes in the first cluster and obtain stronger certificates.

\textbf{Node-specific smoothing distributions.}
If some nodes are exposed to different levels of attacks than others, we can obtain stronger certificates by making the lower-level smoothing distribution node-specific. This would allow us to add less noise for nodes that are less exposed to attacks, potentially leading to even better robustness-accuracy trade-offs. Note, however, that this increases threat model complexity since we will have to treat all nodes separately.

\textbf{Input-dependent randomized smoothing.}
We derive certificates for a smoothing distribution that adds random noise independently on the matrix elements. This is the most studied class of smoothing distributions and allows us to integrate a whole suite of existing distributions. Although there are first approaches adding non-independent noise, the corresponding certificates are still in their infancy \citep{sukenik2022intriguing}. Data-dependent smoothing is a research direction orthogonal to ours, and we consider the integration of such smoothing distributions as future work.

\textbf{Finding optimal smoothing parameters.} The randomized smoothing framework introduces parameters of the smoothing distribution as hyperparameters (see e.g.\ \citep{cohen2019certified}). In general, increasing the probabilities to add random noise increases the robustness guarantees but also decreases the accuracy. We introduce an additional parameter allowing to better control the robustness-accuracy trade-off. Specifically, larger \(p\) adds more noise and increases the robustness but also decreases accuracy. In our exhaustive experiments we randomly explore the entire space of possible parameter combinations. The reason for this is to demonstrate Pareto-optimality, i.e.\ to find all ranges for which we can offer both better robustness and accuracy. In practice one would have to conduct significantly less experiments to find suitable parameters. Note that the task of efficiently finding the optimal parameters is a research direction orthogonal to deriving robustness certificates.

\textbf{Limitation of fixed threat models.} Recent adversarial attacks (see e.g.\ \citep{kollovieh2023assessing}) go beyond the notion of \(\ell_p\)-norm balls and assess adversarial robustness using generative methods. Our approach is limited as we can only certify robustness against bounded adversaries that perturb at most \(r\) objects with a perturbation strength that is bounded by \(\epsilon\) under some \(\ell_p\)-norm. Since current robustness certificates can only provide provable guarantees under fixed threat models, certifying robustness against adversaries that are not bounded by \(\ell_p\)-norms is out of the scope of this paper.

\textbf{Graph-structure perturbations.}
While we implement certificates against node-attribute perturbations,
our framework can in principle integrate certificates against graph-structure perturbations (for example by bounding the number of nodes whose outgoing edges can be perturbed). This can be implemented by first selecting rows of the adjacency matrix \(\bm{A}\) and then applying sparse smoothing \citep{bojchevski2020efficient} to the selected rows only. Note that the works of \citet{gao2020certified} and \citet{wang2021certified} also study randomized smoothing against perturbations of the graph structure, but they consider a special case of sparse smoothing \citet{bojchevski2020efficient} with equal flipping probabilities. %

\end{document}